\algnewcommand\algorithmicinput{\textbf{Input:}}
\algnewcommand\INPUT{\item[\algorithmicinput]}
\algnewcommand\algorithmicoutput{\textbf{Output:}}
\algnewcommand\OUTPUT{\item[\algorithmicoutput]}
\def\mindex#1{\index{#1}}
\def\sq{\hbox{\rlap{$\sqcap$}$\sqcup$}}
\def\qed{\ifmmode\sq\else{\unskip\nobreak\hfil
\penalty50\hskip1em\null\nobreak\hfil\sq
\parfillskip=0pt\finalhyphendemerits=0\endgraf}\fi\medskip}
\long\def\defbox#1{\framebox[.9\hsize][c]{\parbox{.85\hsize}{%
\parindent=0pt
\baselineskip=12pt plus .1pt      
\parskip=6pt plus 1.5pt minus 1pt 
 #1}}}
\long\def\beginbox#1\endbox{\subsection*{}%
\hbox{\hspace{.05\hsize}\defbox{\medskip#1\bigskip}}%
\subsection*{}}
\def\endbox{}
\def\transpose{{\intercal}}
\def\darrow{\buildrel{\rm d}\over\longrightarrow}
\newsavebox{\junk}
\savebox{\junk}[1.6mm]{\hbox{$|\!|\!|$}}
\def\limsup{\mathop{\rm lim\ sup}}
\def\argmin{\mathop{\rm arg\, min}}
\def\U{{\sf U}}
\def\state{{\sf X}}
\newcommand{\field}[1]{\mathbb{#1}}
\def\Re{\field{R}}
\def\ind{\field{I}}
\def\Co{\field{C}}
\def\bfmath#1{{\mathchoice{\mbox{\boldmath$#1$}}%
{\mbox{\boldmath$#1$}}%
{\mbox{\boldmath$\scriptstyle#1$}}%
{\mbox{\boldmath$\scriptscriptstyle#1$}}}}
\def\bfmG{\bfmath{G}}     
\def\bfmI{\bfmath{I}}
\def\bfmU{\bfmath{U}}
\def\bfmX{\bfmath{X}}
\def\bfmY{\bfmath{Y}}
\def\bfmhhaY{\bfmath{\hhaY}} 
\def\bfmhhaY{\hbox to 0pt{$\widehat{\bfmY}$\hss}\widehat{\phantom{\raise 1.25pt\hbox{$\bfmY$}}}}
\def\til={{\widetilde =}}
\def\tilQ{{\widetilde Q}}
\def\tilT{\widetilde T}
\def\tilH{{\widetilde H}}
\def\tiltheta{\widetilde \theta}
\def\tiltheta{{\tilde \theta}}
\def\clA{{\cal A}}
\def\clB{{\cal B}}
\def\clE{{\cal E}}
\def\clF{{\cal F}}
\def\clN{{\cal N}}
 \def\FRAC#1#2#3{\genfrac{}{}{}{#1}{#2}{#3}}
\def\ddt{{\mathchoice{\FRAC{1}{d}{dt}}%
{\FRAC{1}{d}{dt}}%
{\FRAC{3}{d}{dt}}%
{\FRAC{3}{d}{dt}}}}
\def\ddtp{{\mathchoice{\FRAC{1}{d^{\hbox to 2pt{\rm\tiny +\hss}}}{dt}}%
{\FRAC{1}{d^{\hbox to 2pt{\rm\tiny +\hss}}}{dt}}%
{\FRAC{3}{d^{\hbox to 2pt{\rm\tiny +\hss}}}{dt}}%
{\FRAC{3}{d^{\hbox to 2pt{\rm\tiny +\hss}}}{dt}}}}
\def\half{{\mathchoice{\FRAC{1}{1}{2}}%
{\FRAC{1}{1}{2}}%
{\FRAC{3}{1}{2}}%
{\FRAC{3}{1}{2}}}}
\def\eqdef{\mathbin{:=}}
\def\Prob{{\sf P}}
\def\Expect{{\sf E}}
\def\average#1,#2,{{1\over #2} \sum_{#1}^{#2}}
\def\eye(#1){{\bf(#1)}\quad}
\def\epsy{\varepsilon}
\def\varble{\,\cdot\,}
\newtheorem{theorem}{Theorem}[section]
\newtheorem{proposition}[theorem]{Proposition}
\newtheorem{lemma}[theorem]{Lemma}
\def\Lemma#1{Lemma~\ref{#1}}
\def\Proposition#1{Prop.~\ref{#1}}
\def\Theorem#1{Theorem~\ref{#1}}
\def\Section#1{Section~\ref{#1}}
\def\Figure#1{Figure~\ref{#1}}
\def\Appendix#1{Appendix~\ref{#1}}
\def\eq#1/{(\ref{e:#1})}
\newcommand{\beqn}[1]{\notes{#1}%
\begin{eqnarray} \elabel{#1}}
\newcommand{\eeqn}{\end{eqnarray} }
\newcommand{\beq}[1]{\notes{#1}%
\begin{equation}\elabel{#1}}
\newcommand{\eeq}{\end{equation}}
\def\bdes{\begin{description}}
\def\edes{\end{description}}
\def\barf{{\overline {f}}}
\def\barn{{\overline {n}}}
\def\barF{\overline{\clF}}
\def\barsigma{\overline{\sigma}}
\newcounter{rmnum}
\newenvironment{romannum}{\begin{list}{{\upshape (\roman{rmnum})}}{\usecounter{rmnum}
			\setlength{\leftmargin}{12pt}
			\setlength{\rightmargin}{10pt}
			\setlength{\itemindent}{-1pt}
	}}{\end{list}}
\newcounter{anum}
\newenvironment{alphanum}{\begin{list}{{\upshape (\alph{anum})}}{\usecounter{anum}
\setlength{\leftmargin}{14pt}
\setlength{\rightmargin}{8pt}
\setlength{\itemsep}{2pt}
\setlength{\itemindent}{-1pt}
}}{\end{list}}
\def\ass(#1:#2){(#1\ref{#1:#2})}
\def\ritem#1{
\item[{\sf \ass(\current_model:#1)}]
}
\newenvironment{recall-ass}[1]{%
\begin{description}
\def\current_model{#1}}{
\end{description}
}
\newcommand{\bd}{\begin{description}}
\newcommand{\ed}{\end{description}}
\newcommand{\bt}{\begin{theorem}}
\newcommand{\et}{\end{theorem}}
\newcommand{\ba}{\begin{array}{rcl}}
\newcommand{\ea}{\end{array}}
\def\one{\mathbbm{1}}
\newcommand{\defn}[1]{{\protect\textit{#1}}\index{#1}}
\def\RateFn{\bar{I}}
\def\barb{\bar b}
\def\tilA{\tilde{A}}
\def\nd{\ell}  
\def\ninp{\ell_u}  
\def\nphi{\ell_\phi}  
\def\upmf{\mu}
\def\uq{\underline{q}}
\def\uQ{\underline{Q}}
\def\elig{\zeta}
\def\trace{\hbox{\rm trace\,}}  
\def\eqdef{\mathbin{:=}}
\def\tilh{\tilde h}
\def\Alg#1{Alg.~\ref{a:Zapalg}}
\def\assume#1{\smallbreak\noindent\textbf{#1}}
\def\nd{\ell}  
\def\ninp{\ell_u}  
\def\nphi{\ell_\phi}
\def\tilBE{{\widetilde \clB}}
\def\Real{\text{\rm Re}}
\def\upmf{\mu}
\def\bfmG{\bfmath{G}}
\def\bftheta{\bfmath{\theta}}
\def\state{{\sf X}}
\def\eqdef{\mathbin{:=}}
\def\elig{\zeta}
\def\bfelig{\bfmath{\zeta}}
\def\uq{\underline{q}}
\def\uQ{\underline{Q}}
\def\uh{\underline{h}}
\def\uH{\underline{H}}
\def\trace{\hbox{\rm trace\,}}
\def\Lemma#1{Lemma~\ref{#1}}
\def\Proposition#1{Prop.~\ref{#1}}
\def\Prop#1{Prop.~\ref{#1}}
\def\Theorem#1{Thm.~\ref{#1}}   
\def\RateFn{\bar{I}}
\def\barb{\bar b}
\def\tilA{\tilde{A}}
\def\clA{\mathcal{A}}
\def\haPi{{\widehat \Pi}}
\def\Fig#1{Fig.~\ref{#1}}
\def\ind{\field{I}}
\def\Re{\field{R}}
\def\diagpie{\Pi}
\def\diagpie{\Pi}
\def\pie{\varpi}
\def\barF{\bar{F}}
\def\notes#1{}
\begin{document}
%
\title{Q-learning with Uniformly Bounded Variance}
\date{}
\maketitle
%
%
%

\vspace{-0.7in}

\author{
\begin{center}
Adithya~M.~Devraj\footnote{Department of EE, Stanford University, Stanford, CA-94305. Email: {\tt adevraj@stanford.edu}},~and~Sean~P.~Meyn\footnote{Department of ECE, University of Florida, Gainesville, FL-32611.
Email:~{\tt meyn@ece.ufl.edu}}%
\end{center} 
}

\markboth{Journal~XX ~Vol.~XX, No.~XX}%
{Devraj \& Meyn}
%




\begin{abstract}
	

Sample complexity bounds are a common performance metric in the  Reinforcement Learning literature. In the discounted cost, infinite horizon setting, all of the known bounds have a factor that is a polynomial in $1/(1-\gamma)$, where $\gamma < 1$ is the discount factor. For a large discount factor, these bounds seem to imply that a very large number of samples is required to achieve an $\epsy$-optimal policy. The objective of the present work is to introduce a new class of   algorithms that have   sample complexity \emph{uniformly bounded for all $\gamma < 1$}. One may argue that this is impossible, due to a recent min-max lower bound.  The explanation is that this previous lower bound is for a specific problem, which we modify, without compromising the ultimate objective of obtaining an $\epsy$-optimal policy. Specifically, we show that the asymptotic covariance of the Q-learning algorithm with an optimized step-size sequence is a quadratic function of $1/(1-\gamma)$;
an expected, and essentially known result. The new \emph{relative Q-learning} algorithm proposed here is shown to have asymptotic covariance that is a quadratic in $1/(1- \rho^* \gamma)$,   where  $1 - \rho^* > 0$ is an \emph{upper bound} on the spectral gap of an optimal transition matrix.

\end{abstract}

\textbf{Keywords:}
Reinforcement learning,  
Q-learning, 
stochastic optimal control.

{
\textbf{Acknowledgements:}
Financial support from ARO award W911NF1810334
and National Science Foundation award  EPCN 1935389
is gratefully acknowledged.
}



%
%
%
%

\section{Introduction}
\label{sec:intro}

Many Reinforcement Learning (RL) algorithms can be cast as parameter estimation techniques, where the goal is to recursively estimate the parameter vector $\theta^* \in \Re^d$ that directly, or indirectly yields an optimal decision making rule within a parameterized family. In these algorithms, the update equation for the $d$-dimensional parameter estimates  $\{\theta_n: n \geq 0\}$ can be expressed in the general form
\begin{equation}
\theta_{n+1} = \theta_n +\alpha_{n+1} [ \barf(\theta_n) + \Delta_{n+1} ]\,,\quad n\ge 0
\label{e:SAintro}
\end{equation}
in which $\theta_0 \in \Re^d$ is given, $\{  \alpha_n \}$ is a positive scalar \defn{gain sequence} (also known as \defn{learning rate}), $\barf: \Re^d \to \Re^d$ is a deterministic function, and $\{  \Delta_n \}$ is a ``noise'' sequence.   

The recursion \eqref{e:SAintro} is an example of stochastic approximation (SA), for which there is a vast research literature.
Under standard assumptions, it can be shown that $
\lim_{n \to \infty} \theta_n = \theta^*
$, where the limit satisfies $\barf(\theta^*) = 0$.
Moreover, it can be shown that the  best algorithms  achieve the optimal mean-square error (MSE) convergence rate:  
\begin{equation}
\Expect \big[ \| \theta_n - \theta^* \|^2 \big] =  O(1/n)  
\label{e:mse_roc}
\end{equation}

It is known that TD- and Q-learning can be written in the form \eqref{e:SAintro} \cite{tsiroy97a,bormey00a,devmey17b}. In these algorithms, $\{\theta_n\}$ represents the sequence of parameter estimates that are used to approximate a \defn{value function} or \defn{Q-function}.


It was first established in our work \cite{devmey17a,devmey17b} that the convergence rate of the MSE of Watkins' Q-learning (i.e., Q-learning with a tabular basis) can be as slow as $O(1/n^{2(1-\gamma)})$, if the discount factor $\gamma\in (0, 1)$ satisfies $\gamma > \half$, and if the step-size $\alpha_n$ is either of two standard forms (see discussion in \Section{sec:q:watkins}). It was also shown that the optimal convergence rate \eqref{e:mse_roc} is obtained by using a step-size of the form $\alpha_n = g/n$, where $g$ is a scalar proportional to $1/(1-\gamma)$; this is consistent with conclusions in more recent research \cite{wai19a,quwie20}.
In the earlier work \cite{sze97}, a sample path \textit{upper bound} was obtained on the rate of convergence that is roughly consistent with the mean-square rate established for $\gamma >\half$ in \cite{devmey17a,devmey17b}.

 
Since the publication of  \cite{sze97},  many papers have appeared with proposed improvements to the algorithm;  often including (non-asymptotic) finite-$n$ bounds   on the MSE \eqref{e:mse_roc}. Ignoring higher order terms, 
these bounds can be expressed in the following general form \cite{azamunghakap11,wai19a,wai19v,quwie20,strehl2006pac}:
\begin{equation}
\Expect \big[ \| \theta_n - \theta^* \|^2 \big] \leq \frac{1}{(1-\gamma)^p} \cdot \frac{B}{n}
\label{e:poly_discount}
\end{equation}
where $p \geq 2$ is a scalar.  The constant
$B$ is a function of the total number of state-action pairs, the discount factor $\gamma$, and 
the maximum per-time-step cost. Much of the literature has worked towards minimizing $p$ through a combination of hard analysis and algorithm design.  



It is widely observed that Q-leanring algorithms can be very slow to converge, especially when the discount factor is close to $1$; 
The bound in \eqref{e:poly_discount} offers an explanation for this phenomenon. 
Quoting \cite{azamunghakap11},  a primary reason for slow convergence is ``\textit{the fact that the Bellman operator propagates information throughout the whole space}'',   especially when the discount factor is close to $1$.     We do not dispute these explanations, but in this paper we show that the challenge presented by large discounting is relatively minor.  In order to make this point clear we must  take a step back and rethink fundamentals: 
\begin{center}
\emph{Why do we need to estimate the Q-function?}
\end{center}

Letting $Q^*(x,u)$ denote the optimal Q-function evaluated at the state-action pair $(x,u)$, the main reason for estimating the Q-function is to obtain from it the corresponding optimal policy:
\begin{equation*}
\phi^*(x) \eqdef \argmin_u Q^*(x,u)
\label{e:phi_star}
\end{equation*}
It is clear from the above definition that adding a constant to $Q^*$ \emph{will not} alter $\phi^*$.
This is a fortunate fact: 
it is well-known that $Q^*$ can be decomposed as (see for example \cite{put14,ber95,ber12a}:
\begin{equation}
Q^*(x,u) = \tilQ^*(x,u) + \frac{\eta^*}{1-\gamma}  
\label{e:tilQ}
\end{equation}
where the scalar $\eta^*$ denotes the optimal average cost (independent of $(x,u)$ under the assumptions imposed here), 
and $\tilQ^*(x,u)$ is uniformly bounded in $\gamma$, $x$, and $u$.



The reason for slow performance of Q-learning when $\gamma\approx 1$ is because of the high variance in the indirect estimate of the large constant $\eta^* / (1-\gamma)$.  
It is argued in \Section{sec:RelQ} that if the error in the constants is ignored, a far tamer bound is obtained:
\begin{equation}
\Expect \big[ \| \theta_n - \theta^* \|^2 \big] \leq \frac{1}{(1- \rho^* \gamma)^p} \cdot \frac{B}{n}
\label{e:poly_gamma_discount}
\end{equation}
where $\rho^* <1$, and $1-\rho^*$ is an \emph{upper bound} on the spectral gap of the transition matrix for  the pair process $(\bfmX,\bfmU)$ under the optimal policy (details are in \Section{sec:ConvRateRelQ}).


\vspace{-0.03in}

\begin{figure}[htbp]
	\centering
	\includegraphics[width=\hsize]{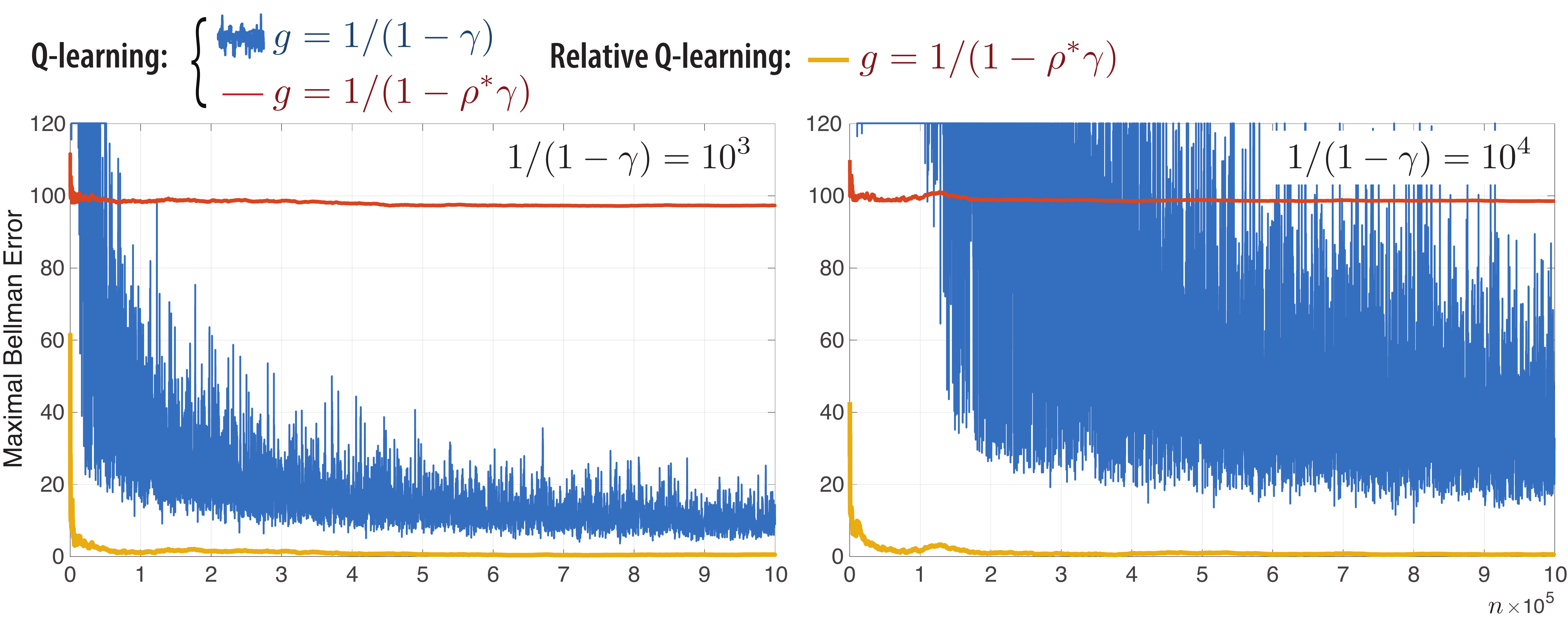}
	
	\caption{Comparison of Q-learning and Relative Q-learning algorithms for the stochastic shortest path problem of \cite{devmey17a}. The relative Q-learning algorithm is unaffected by large discounting.}
	\label{f:Hplot}
\end{figure}

The new \emph{relative Q-learning} algorithm proposed here is designed to achieve the upper bound   \eqref{e:poly_gamma_discount}.   Unfortunately, we have not yet obtained this explicit finite-$n$ bound.   We have instead obtained formulae for the \emph{asymptotic covariance} that corresponds to each of the algorithms considered in this paper (see \eqref{e:SAsigma}).

The close relationship between  the asymptotic covariance and sample complexity bounds is discussed in \Section{sec:sample_complexity}, based on the theoretical background in \Section{sec:sa_rl}.


\vspace{-0.03in}

\subsection{Stochastic Approximation \& Reinforcement Learning}
\label{sec:sa_rl}

Consider a parameterized family of  $\Re^d$-valued functions $\{\barf(\theta): \theta \in \Re^d \}$ that can be expressed as an expectation,
\begin{equation}
\barf(\theta) \eqdef \Expect \big[ f(\theta, \Phi) \big]  \,,\qquad \theta \in \Re^d\,,
\label{e:barf_theta}
\end{equation}
with $\Phi \in \Re^m$ a random vector, $f: \Re^d \times \Re^m \to \Re^d$, and the expectation is with respect to the distribution of the random vector $\Phi$.
It is assumed throughout that there exists a unique vector $\theta^*\in \Re^d$ satisfying $\barf(\theta^*)=0$. Under this assumption, the goal of SA is to estimate $\theta^*$.

The sequence of estimates obtained from the SA algorithm are defined as follows:
\begin{equation} 
\theta_{n+1}= \theta_n + \alpha_{n + 1} f(\theta_n \,, \Phi_{n+1} )
\label{e:SAa}
\end{equation}
where $\theta_0\,\in\Re^d $ is given, $\Phi_n$ has the same distribution as $\Phi$ for each $n \geq 0$  (or its distribution converges to that of $\Phi$ as $n\to\infty$), and $\{\alpha_n\}$ is a non-negative scalar step-size sequence. We assume $\alpha_n = g / n$ for some scalar $g > 0$, and special cases in applications to Q-learning are discussed separately in \Section{sec:q}.

Asymptotic statistical theory for SA  is extremely rich.  Large Deviations or Central Limit Theorem (CLT) limits hold under very general assumptions for both SA and related Monte-Carlo techniques \cite{benmetpri12,kusyin97,kon02,bor20a,MT}.



The CLT will guide  design and analysis of algorithms in this paper.  For a typical SA algorithm,  this takes the following form:  Denote the \defn{error sequence}  by
\begin{equation}
\tiltheta_n \eqdef \theta_n-\theta^*
\end{equation}
Under general conditions, the CLT states that the scaled sequence $\{\sqrt{n} \tiltheta_n : n\ge 0\}$	converges in distribution to a multivariate Gaussian $\clN (0,\Sigma_\theta)$.   Typically,  the covariance  matrix of this scaled sequence is also convergent:
\begin{equation}
\Sigma_\theta=\lim_{n\to\infty} n \Expect[\tiltheta_n\tiltheta_n^\transpose]
\label{e:SAsigma}
\end{equation}  
The limit $\Sigma_\theta$ is known as the \textit{asymptotic covariance}.  
Provided it is finite, \eqref{e:SAsigma} implies \eqref{e:mse_roc}, which is the fastest possible rate \cite{benmetpri12,kusyin97,bor20a,robmon51a,rup85}.  
For Q-learning, this also implies a bound of the form \eqref{e:poly_discount}, but for $n$ \emph{``large enough''}.

An asymptotic bound such as \eqref{e:SAsigma} may not be satisfying for RL practitioners, given the success of finite-time performance bounds in prior research.    There are however good reasons to apply this asymptotic theory in algorithm design:
\begin{romannum}
\item 
The asymptotic covariance $\Sigma_\theta$ has a simple representation as the solution to a Lyapunov equation.

\item 
The MSE convergence is refined in  \cite{chedevbusmey20} for \emph{linear} SA algorithms (see \Section{sec:mse_lin_SA}):  For some $\delta > 0$,
\begin{equation}
\Sigma_n = n^{-1} \Sigma_\theta + O(n^{-1- \delta})\,, \quad \text{where, }\,\Sigma_n \eqdef \Expect[\tiltheta_n \tiltheta_n^\transpose ]
\label{e:SigmaN}
\end{equation} 
{It is expected that these bounds can be extended to many nonlinear algorithms found in RL.}

\item  The asymptotic covariance lies beneath the surface of the theory of finite-time error bounds.  
Here is what can be expected from the theory of large deviations    \cite{demzei98a,konmey03a}, for which the \defn{rate function} is denoted
\begin{equation}
I_i(\epsy)\eqdef 
-
\lim_{n\to\infty} \frac{1}{n} \log \Prob\{  | \theta_n(i) - \theta^*(i) | > \epsy \} 
\label{e:PACasy}
\end{equation}
The second order Taylor series approximation  holds under general conditions:
\begin{equation}
I_i(\epsy) = \frac{1}{ 2 \sigma^2_\theta(i)  }   \epsy^2+O(\epsy^3)
\label{e:PACasyQuad}
\end{equation}    
where $\sigma^2_\theta(i) =  \Sigma_\theta(i,i) $, 
from which we obtain
\begin{equation}
\begin{aligned}
\Prob \{  | \theta_n(i) &  - \theta^*(i) |  > \epsy \}  
\\
& = \exp\Bigl\{  - \frac{\epsy^2 n}{ 2 \sigma^2_\theta(i) }    +O(n\epsy^3)  + o(n) \Bigr\}
\label{e:PACasyQuadCor}
\end{aligned}
\end{equation}
where $o(n)/n\to 0$ as $n\to\infty$,   and $ O(n\epsy^3) / n$  is bounded in $n\ge 1$,   and absolutely bounded by a constant times $\epsy^3$ for small $\epsy>0$.

\item 
The Central Limit Theorem (CLT) holds under general assumptions:
\begin{equation}
\sqrt{n} \tiltheta_n   \darrow  W
\label{e:SACLT}
\end{equation}
where the convergence is in distribution, and 
where $W$ is Gaussian $\clN(0,\Sigma_\theta)$ \cite{kusyin97,benmetpri12};    a version of the Law of the Iterated Logarithm (LIL) also holds \cite{mokpel05}:
\begin{equation*}
\lim\limits_{n \to \infty}\sqrt{\frac{n}{2 \log \log n}} \tiltheta_n  \in C
\end{equation*}
where $C = \{v \in \Re^d : v^\transpose \Sigma_{\theta}^{-1} v \leq 1 \}$.
An immediate corollary is \cite{kovsch03}:
\begin{equation}
\limsup\limits_{n \to \infty}\sqrt{\frac{n}{2 \log \log n}} \| \tiltheta_n \|  = \sqrt{\lambda_{\max} \big(\Sigma_\theta\big)}
\label{e:LIL-2}
\end{equation}
\end{romannum}
The asymptotic theory provides insight into the slow convergence of Watkins' Q-learning algorithm, and motivates better algorithms such as Zap~Q-learning \cite{devmey17b}, and the relative Q-learning algorithm introduced in \Section{sec:RelQ}.



\begin{figure*}
\centering
\includegraphics[width=1\textwidth]{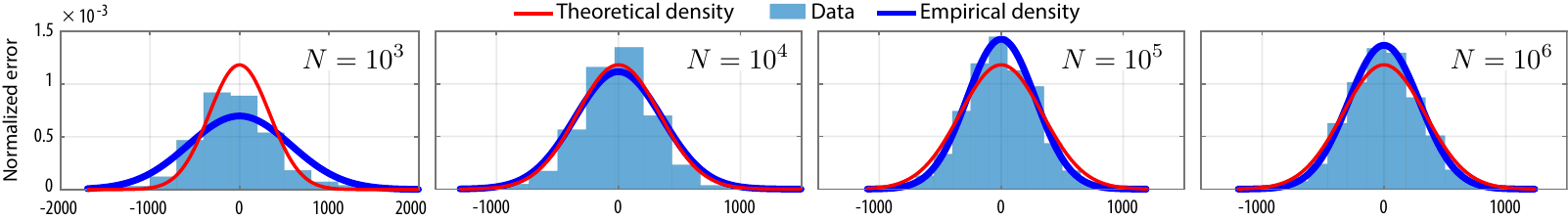}

\caption{Histogram of $\{ \sqrt{N} \tiltheta_N(i) \}$ for $10^3$ independent runs.   The CLT approximation is good even for the shortest run, and nearly perfect for $N\ge 10^4$.
}
\label{f:relW10CLT}
\vspace{-0.3cm}

\end{figure*}

\subsection{Sample complexity bounds}    
\label{sec:sample_complexity}

A sample complexity bound for an algorithm is defined based on the number of iterations required to obtain a desired probability of error.    Consider for concreteness a single entry $i$ of a parameter estimate in SA:  for given $\delta,\epsy >0$,   
we seek an integer $\barn_i(\epsy,\delta)$ such that
\begin{equation}
\Prob\{  | \theta_n(i) - \theta^* (i) | > \epsy \}   \le \delta\,, \quad \text{
for all $n\ge \barn_i(\epsy,\delta)$.}
\label{e:sc}
\end{equation}
Such bounds are a foundation of   statistical learning theory  \cite{devgyolaslug13}.
Below are three techniques to construct $\barn$, beginning with the most common approach:

\textbf{1. LDP theory}
The inequalities of Hoeffding and Bennett are finite-$n$ variants of \eqref{e:PACasy}:
\begin{equation}
\Prob\{  | \theta_n(i) - \theta^* (i) | > \epsy \}  \le \barb \exp(- n \RateFn_i(\epsy))  \,,  \ \ n\ge 1 
\label{e:PAC}
\end{equation}
where $\barb$ is a constant and $\RateFn_i (\epsy) >0$  for $\epsy>0$.   
A sample complexity bound then follows easily, with 
\begin{equation} 
\barn_i(\epsy,\delta) = \frac{1}{  \RateFn_i(\epsy) } \bigl[ \log(\barb) + \log(\delta^{-1}) ]
\label{e:PAC2}
\end{equation} 
See for example \cite{kak03,lathutsun13,azamunkap12,eveman03}, and \cite{glyorm02,konlasmey05a} for general theory in a Markov setting.

\textbf{2. MSE}    Given a true finite-$n$ version of \eqref{e:SigmaN}: 
\begin{equation}
\Expect[  ( \theta_n(i) - \theta^* (i) )^2 ]  \le  \barsigma^2(i)  n^{-1}
\label{e:finite-n-MSE}
\end{equation}
A sample complexity bound follows from
Chebyshev's inequality, using
\begin{equation} 
\barn_i(\epsy,\delta) = \frac{\barsigma^2(i)}{  \epsy^2}  \delta^{-1} 
\label{e:PAC2MSE}
\end{equation} 
Finite-$n$ bounds  on mean-square error are contained  in \cite{quwie20,sriyin19,chedevbusmey20,chemagshasha20},
and the mean $\ell_\infty $ bound in  \cite{wai19a} implies a similar sample complexity bound.



\textbf{3. CLT}   A finite-$n$ version of the CLT is the Berry-Esseen bound:  for all $z>0$,
\begin{equation}
\Big |  \varrho_i(z, n)
-   2 \bar{F}(z)  \Bigr| 
\le   \frac{K_i}{ \sqrt{n} }
\label{e:B-E}
\end{equation}
where
$ \varrho_i(z, n)$ is the error probability with CLT scaling:
\[
\varrho_i(z, n) =
\Prob\bigl\{ \sqrt{n}  | \theta_n(i) - \theta^* (i) | > z  \sigma_\theta(i)    \bigr \}  
\]
and $\bar{F}$ is the complementary CDF for a standard Normal r.v..  For i.i.d.\ sequences, a simple expression for   $K_i$ is available;   bounds for Markov sequences is less complete \cite{bol80,klo19}.





For any $z>0$ and $\delta >  2\bar F(z)$, denote
\begin{equation}
\barn_i(\epsy,\delta,z)    =   \max\Bigl\{   \frac{z^2}{\epsy^2} \sigma^2_\theta(i)  \, ,   \frac{1}{4}  \frac{ K_i^2}{ [\delta- 2\bar F(z) ]^2}  \Bigr\}
\label{e:sc_B-Ez}
\end{equation}
The bound \eqref{e:B-E} implies a family of sample complexity bounds that can be optimized over $z$: for $n\ge \barn_i(\epsy,\delta,z) $,
\begin{equation}
\begin{aligned}
\Prob\{  | \theta_n(i) - \theta^* (i) | > \epsy \} 
&  \le    2 \bar{F}(z) + 2  \frac{K_i}{ \sqrt{n} }   \le \delta
\end{aligned} 
\label{e:sc_B-E}
\end{equation}


The asymptotic covariance is central to each approach: 

\noindent\textbf{1.}  If the the limit \eqref{e:PACasy} and the bound  \eqref{e:PAC} each hold, then the rate function must dominate:  $ \RateFn_i (\epsy) \leq I_i(\epsy)$.   To maximize this upper bound we must  minimize $\sigma^2_\theta(i)$
(recall \eqref{e:PACasyQuad}, and remember we are typically interested in small $\epsy>0$).  

\noindent\textbf{2.}   Similarly, the mean-square error bound \eqref{e:finite-n-MSE} combined with the approximation \eqref{e:SigmaN} implies $\barsigma^2(i) \ge  \sigma^2_\theta(i)$.

\noindent\textbf{3.} The bound \eqref{e:sc_B-E} requires  $\sigma^2_\theta(i)$ through the definition \eqref{e:sc_B-Ez}.

This theory provides strong motivation for considering the asymptotic covariance $\Sigma_\theta$ in algorithm design.



 
Based on the above discussion, we conjecture that 
\begin{romannum}
\item The sample-path complexity bound \eqref{e:PAC2} with $\RateFn$ quadratic is possible for Watkins' algorithm, provided  we use $\alpha_n = [{1+(1-\gamma)n}]^{-1}$ in the right hand side of the update equation \eqref{e:SAintro}. This step-size was proposed independently in \cite{wai19a,devbusmey19}.
\item With relative Q-learning, we can obtain similar sample complexity result with $\alpha_n = [{1+(1-\rho^*)n}]^{-1}$, which is \emph{independent of $\gamma$}. 
\end{romannum} 
However, for more complex algorithms we do not expect to obtain tight bounds, with  $ \RateFn_i (\epsy) \approx I_i(\epsy)$.  For this reason we advocate the CLT for algorithm design and evaluation, even without a sharp Berry-Esseen bound.  We frequently find that the CLT is highly predictive of parameter error,  where the   covariance $\sigma^2_\theta(i)$  is estimated via independent runs.  	\Fig{f:relW10CLT}  shows results from one experiment using the relative Q-learning algorithm:  the histograms were obtained based on $10^3$ independent runs,  with time horizons ranging from $N=10^3$ to $10^6$.   The CLT approximation is good even for the shortest run, and nearly perfect for $N\ge 10^4$. 



\subsection{Explicit Mean Square Error bounds for SA}
\label{sec:mse_lin_SA}

We first present a special case of the main result of \cite{chedevbusmey20} for linear SA algorithms, and then an extension to nonlinear SA. These results are later recalled in applications to Q-learning. 

The analysis of the SA recursion \eqref{e:SAa} begins with the transformation to \eqref{e:SAintro},
with $ \Delta_{n+1} =  f(\theta_n \,, \Phi_{n+1} ) - \barf(\theta_n) $.
The difference $f(\theta \,, \Phi_{n+1} ) - \barf(\theta) $ has zero mean for any (deterministic) $\theta\in\Re^d$ when 
$\Phi_{n+1}$ has the same distribution as $\Phi$   (recall \eqref{e:barf_theta}). Though the results of \cite{chedevbusmey20} extend to Markovian noise, for the purposes of this paper, we assume that $\{\Delta_n\}$ is a martingale difference sequence:
\begin{romannum}
\item[\textbf{(A1)}]
The sequence $\{\Delta_n: n\ge 1\}$ is a martingale difference sequence. 
Moreover, for some $\bar\sigma^2_\Delta< \infty $ and any initial condition $\theta_0 \in \Re ^d$, 
\[
\Expect [\|\Delta_{n+1}\|^2\mid \Delta_1,\dots,\Delta_n] \leq \bar\sigma^2_\Delta (1 + \| \theta_n \|^2), 
\qquad n \geq 0
\]
\end{romannum}
We also assume a scalar, diminishing step-size sequence:
\begin{romannum}
\item[\textbf{(A2)}] $ \alpha_n  = g/n$, for some scalar $g >0$, and all $n \geq 1$
\end{romannum}

With $\Sigma_n$ defined in \eqref{e:SigmaN}, denote 
\[
\sigma_n^2 =\trace(\Sigma_n )= \Expect[ \| \tiltheta_n \|^2]
\]      
We say $\sigma_n^2 \to 0$ at rate $1/n^\mu$  (with $\mu>0$), if for each $\epsy>0$,
\begin{equation}
\lim_{n\to\infty}  n^{\mu-\epsy} \sigma_n^2 = 0  \qquad  \textit{and} 
\qquad
\lim_{n\to\infty}  n^{\mu+\epsy} \sigma_n^2 = \infty
\label{e:conv_rate}
\end{equation}
It is known that the maximal value is $\mu =1$.


The analysis in \cite{chedevbusmey20} is based on a ``linearized'' approximation of the SA recursion \eqref{e:SAa}:
\begin{equation}
\theta_{n+1} = \theta_n +  \alpha_{n+1} \bigl[A_{n+1}\theta_n - b_{n+1} \bigr]
\label{e:linearSA}
\end{equation}
where, $A_{n+1} = \clA(\Phi_{n+1})$ is a $d \times d$ matrix, and $b_{n+1} = b(\Phi_{n+1})$ is   $d \times 1$. 
Let $A$ and $b$ denote the respective means:  
\begin{equation}
A = \Expect[\clA(\Phi)]\,,\qquad 
b = \Expect[b(\Phi)]
\label{e:Ab}
\end{equation} 
where the expectations are in steady state.
We assume that the $d\times d$ matrix $A$ is Hurwitz,  a necessary condition for convergence of \eqref{e:linearSA}:
\begin{romannum}
\item[{\textbf{(A3)}}] The $d\times d$ matrix $A$ is Hurwitz. 
\qed
\end{romannum} 
(A3) implies that $A$ is invertible, and   $\theta^* = A^{-1}b$.

The recursion \eqref{e:linearSA} can be rewritten in the form \eqref{e:SAintro}:
\begin{equation}
\theta_{n+1} = \theta_n +  \alpha_{n+1} \bigl[A \theta_n - b + \Delta_{n+1} \bigr]
\label{e:linearSA_linearized}
\end{equation}
in which $\{\Delta_n\}$ is the noise sequence: 
\begin{equation}
\begin{aligned} 
\Delta_{n+1} = A_{n+1} \theta^* - b_{n+1} +  \tilA_{n+1} \tiltheta_n
\end{aligned} 
\label{e:Dn_one}
\end{equation}
with $\tilA_{n+1}=A_{n+1} - A$.
The parameter error sequence also evolves as a simple linear recursion:
\begin{equation}
\tiltheta_{n+1} = \tiltheta_n +  \alpha_{n+1} [A \tiltheta_n +  \Delta_{n+1}   ]
\label{e:linSA_MC}
\end{equation}

The asymptotic covariance \eqref{e:SAsigma} exists under special conditions (see \Theorem{t:SAlinearized}), and under these conditions it satisfies the Lyapunov equation
\begin{equation}
(g A+\half I)  \Sigma_\theta + \Sigma_\theta (g A+\half I) ^\transpose  + g^2\Sigma_\Delta  =0
\label{e:Lyapg}
\end{equation}
where the ``noise covariance matrix'' $\Sigma_\Delta$  is defined to be
\begin{equation}
\Sigma_\Delta 
= \Expect\big [ \big(A_{n+1} \theta^* - b_{n+1} \big) \big(A_{n+1} \theta^* - b_{n+1} \big)^\transpose ]
\label{e:SigmaDelta}
\end{equation}

\Theorem{t:SAlinearized} is a special case of the main result of  \cite{chedevbusmey20} (which does not impose the martingale  
assumption (A1)). 
\begin{theorem}
\label{t:SAlinearized}
Suppose {(A1)} -- {(A3)} hold. Then the following hold for the linear recursion \eqref{e:linSA_MC}, for each initial   $(\Phi_0, \tiltheta_{0})$:
\begin{romannum}
\item  If $\text{Real}(\lambda)<-\half$ for every eigenvalue $\lambda$ of $g A$, then
\[
\Sigma_n = n^{-1} \Sigma_{\theta}  +  O( n^{-1-\delta})
\]
where $\delta=\delta(A,\Sigma_\Delta)>0$, and $\Sigma_{\theta} \geq 0$ is the solution to the Lyapunov equation \eqref{e:Lyapg}. Consequently,   $\Expect[\|\tiltheta_n\|^2]$  converges to zero at rate
$1/n$.

\item  Suppose there is an eigenvalue $\lambda$ of $g A$ that satisfies 
\[
-\varrho_0 = \text{Real}(\lambda)> -\half
\]
Let $\nu \neq0$ denote a corresponding left eigenvector, 
and suppose that $\Sigma_\Delta \nu \neq 0$.   Then, $\Expect[ |\nu^\transpose  \tiltheta_n|^2]  $ converges to $0$ at a rate $1/n^{2\varrho_0}$.
Consequently,   $\Expect[\|\tiltheta_n\|^2]$ converges to zero at   rate no faster than $1 / n^{2\varrho_0}$.
\qed 
\end{romannum}
\end{theorem}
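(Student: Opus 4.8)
Since the statement concerns only the matrices $\Sigma_n=\Expect[\tiltheta_n\tiltheta_n^\transpose]$, I would avoid almost-sure analysis entirely and instead derive and study an exact recursion for $\Sigma_n$, viewing it as a perturbed linear iteration. Starting from \eqref{e:linSA_MC}, write $\tiltheta_{n+1}=(I+\alpha_{n+1}A)\tiltheta_n+\alpha_{n+1}\Delta_{n+1}$, form the outer product, take expectations, and use that $\{\Delta_n\}$ is a martingale difference sequence so the cross terms $\Expect[(I+\alpha_{n+1}A)\tiltheta_n\Delta_{n+1}^\transpose]$ vanish; this gives
\[
\Sigma_{n+1}=(I+\alpha_{n+1}A)\Sigma_n(I+\alpha_{n+1}A)^\transpose+\alpha_{n+1}^2\,\Expect[\Delta_{n+1}\Delta_{n+1}^\transpose].
\]
With $\alpha_n=g/n$ and $M_n\eqdef n\Sigma_n$, multiplying through by $n+1$ and collecting terms rearranges this into
\[
M_{n+1}=M_n+\tfrac1n\bigl[\clL(M_n)+g^2\Sigma_\Delta+\clE_n\bigr],\qquad \clL(M)\eqdef(gA+\half I)M+M(gA+\half I)^\transpose,
\]
where the remainder $\clE_n$ gathers lower-order deterministic factors of size $O(M_n/n)+O(1/n)$ together with the discrepancy $\Expect[\Delta_{n+1}\Delta_{n+1}^\transpose]-\Sigma_\Delta$, which by the decomposition \eqref{e:Dn_one} of $\Delta_{n+1}$ and the conditional mean-zero property of $A_{n+1}-A$ is $O(\Expect\|\tiltheta_n\|)+O(\trace\Sigma_n)$. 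The key algebraic observation I would record here is that $\clL$ is the Lyapunov operator of $B\eqdef gA+\half I$, so its spectrum is $\{g(\lambda_i+\lambda_j)+1\}$ over eigenvalues $\lambda_i$ of $A$; hence $\clL$ is Hurwitz \emph{precisely} under the hypothesis of part~(i), and the $\Sigma_\theta$ of \eqref{e:Lyapg} is its unique symmetric equilibrium, $\clL(\Sigma_\theta)+g^2\Sigma_\Delta=0$.

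For part~(i) I would first establish $L^2$ stability: using (A3), fix $P\succ0$ with $A^\transpose P+PA\preceq-c_0I$ and derive from the recursion the scalar bound $\Expect[\tiltheta_{n+1}^\transpose P\tiltheta_{n+1}]\le(1-c_1\alpha_{n+1}+C\alpha_{n+1}^2)\Expect[\tiltheta_n^\transpose P\tiltheta_n]+C\alpha_{n+1}^2$, where (A1) together with $\|\theta_n\|^2\le 2\|\tiltheta_n\|^2+2\|\theta^*\|^2$ bounds $\Expect[\|\Delta_{n+1}\|^2\mid\clF_n]\le C(1+\|\tiltheta_n\|^2)$; a standard Chung-type argument then gives $\Expect\|\tiltheta_n\|^2\to0$, hence $\Expect\|\tiltheta_n\|\to0$, hence $\clE_n\to0$. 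The $M_n$-recursion is then a linear iteration $M_{n+1}=M_n+\tfrac1n[\clL(M_n-\Sigma_\theta)+\clE_n]$ with Hurwitz drift and vanishing forcing, so $M_n\to\Sigma_\theta$, i.e. $n\Sigma_n\to\Sigma_\theta$; in particular $\Expect\|\tiltheta_n\|^2=\trace\Sigma_n=O(1/n)$. Feeding $\Expect\|\tiltheta_n\|=O(n^{-1/2})$ back into $\clE_n$ shows $\clE_n=O(n^{-1/2})$, and the standard estimate for $1/n$-step recursions with Hurwitz drift then yields $M_n-\Sigma_\theta=O(n^{-\delta})$ with $\delta=\min\{\half,\,-\max_{i,j}\Real(g(\lambda_i+\lambda_j)+1)\}$ (shrinking $\delta$ slightly to absorb a logarithm in the boundary case). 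Rewriting, $\Sigma_n=n^{-1}\Sigma_\theta+O(n^{-1-\delta})$, which is (i).

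For part~(ii), let $\lambda$ be the given eigenvalue of $gA$ with $\Real(\lambda)=-\varrho_0\in(-\half,0)$ and left eigenvector $\nu\neq0$, so $\nu^\transpose A=g^{-1}\lambda\,\nu^\transpose$. Projecting \eqref{e:linSA_MC}, the (possibly complex) scalar $y_n\eqdef\nu^\transpose\tiltheta_n$ satisfies $y_{n+1}=(1+\tfrac{\lambda}{n+1})y_n+\alpha_{n+1}\nu^\transpose\Delta_{n+1}$, and the martingale difference property gives, with $v_n\eqdef\Expect|y_n|^2$,
\[
v_{n+1}=\Bigl|1+\tfrac{\lambda}{n+1}\Bigr|^2 v_n+\alpha_{n+1}^2\,\Expect|\nu^\transpose\Delta_{n+1}|^2,
\qquad\Bigl|1+\tfrac{\lambda}{n+1}\Bigr|^2=1-\tfrac{2\varrho_0}{n+1}+O(n^{-2}),
\]
where $\Expect|\nu^\transpose\Delta_{n+1}|^2=|\Sigma_\Delta^{1/2}\nu|^2+O(\Expect\|\tiltheta_n\|)+O(\trace\Sigma_n)\to|\Sigma_\Delta^{1/2}\nu|^2$, the limit being \emph{strictly positive} exactly because $\Sigma_\Delta\succeq0$ and $\Sigma_\Delta\nu\neq0$. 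I would then compare $v_n$ with the homogeneous product $\prod_{j=k+1}^n(1-\tfrac{2\varrho_0}{j})=\Theta((k/n)^{2\varrho_0})$; the resulting tail sums have the form $\sum_k k^{-2}(k/n)^{2\varrho_0}=\Theta(n^{-2\varrho_0})\sum_k k^{2\varrho_0-2}$ with $\sum_k k^{2\varrho_0-2}<\infty$ since $\varrho_0<\half$, yielding the two-sided bound $v_n=\Theta(n^{-2\varrho_0})$, i.e. $\Expect|\nu^\transpose\tiltheta_n|^2$ converges to $0$ at rate $1/n^{2\varrho_0}$ in the sense of \eqref{e:conv_rate}. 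Finally $|\nu^\transpose\tiltheta_n|^2\le\|\nu\|^2\|\tiltheta_n\|^2$ gives $\Expect\|\tiltheta_n\|^2\ge c\,n^{-2\varrho_0}$, so $\Expect\|\tiltheta_n\|^2$ converges no faster than $1/n^{2\varrho_0}$, which is (ii).

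\textbf{Main obstacle.} The genuinely delicate point is the bootstrap controlling $\clE_n$: because (A1) permits state-dependent noise and $\Delta_{n+1}$ contains the term $\tilA_{n+1}\tiltheta_n$ from \eqref{e:Dn_one}, the matrix $\Expect[\Delta_{n+1}\Delta_{n+1}^\transpose]$ is not the constant $\Sigma_\Delta$ but differs from it by terms in $\tiltheta_n$, and one must show these vanish fast enough (at rate $n^{-1/2}$, after a first pass establishing that $M_n$ is bounded) both to preserve the Hurwitz-drift convergence of $M_n$ and to upgrade the remainder from $o(n^{-1})$ to an honest $O(n^{-1-\delta})$. Everything else should reduce to the spectral identification of $\clL$ and to the standard lemma on recursions of the type $x_{n+1}=(1-a/n)x_n+O(n^{-1-b})$.
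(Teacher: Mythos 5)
Your proposal is sound, and it is worth noting that the paper itself does not prove Theorem~\ref{t:SAlinearized} at all: it is invoked as a special case of the main result of the cited reference \cite{chedevbusmey20} (which, moreover, dispenses with the martingale assumption (A1) and allows Markovian noise). Your blind argument is therefore a self-contained reconstruction, and it follows the same basic philosophy as that reference: pass to the exact recursion for $\Sigma_n$, renormalize to $M_n = n\Sigma_n$, identify the drift as the Lyapunov operator $\clL(M)=(gA+\half I)M+M(gA+\half I)^\transpose$ whose spectrum $\{g(\lambda_i+\lambda_j)+1\}$ is stable exactly under the hypothesis of part~(i) and whose equilibrium is the solution of \eqref{e:Lyapg}, and then run a two-pass bootstrap to turn $\clE_n\to 0$ into $\clE_n=O(n^{-1/2})$ and hence $\Sigma_n=n^{-1}\Sigma_\theta+O(n^{-1-\delta})$. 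Your part~(ii) is actually more direct than what the paper does for the analogous nonlinear statement (Prop.~\ref{t:SAnonlinear}(ii) in the appendix): there the authors argue by contradiction with the rescaling $W_n=n^{\varrho}\theta_n$, whereas your exact scalar recursion for $v_n=\Expect|\nu^\transpose\tiltheta_n|^2$, with the cross term killed by the martingale property and the forcing bounded below via $\nu^\dagger\Sigma_\Delta\nu>0\iff\Sigma_\Delta\nu\neq0$, yields the two-sided bound $v_n=\Theta(n^{-2\varrho_0})$ in the sense of \eqref{e:conv_rate}, which is cleaner in the linear case.

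Two points deserve explicit care if you write this out. First, as you yourself flag, the estimate $\Expect[\Delta_{n+1}\Delta_{n+1}^\transpose]-\Sigma_\Delta=O(\Expect\|\tiltheta_n\|)+O(\trace\Sigma_n)$ does not follow from (A1) alone: decomposing $\Delta_{n+1}=(A_{n+1}\theta^*-b_{n+1})+\tilA_{n+1}\tiltheta_n$ as in \eqref{e:Dn_one}, you need (a) $\Expect[(A_{n+1}\theta^*-b_{n+1})(A_{n+1}\theta^*-b_{n+1})^\transpose]=\Sigma_\Delta$ for each $n$, i.e.\ stationarity of $\Phi_n$ rather than mere convergence in distribution (otherwise an extra, separately decaying bias term enters $\clE_n$), and (b) bounded conditional second moments of $\clA(\Phi_{n+1})$ and $b(\Phi_{n+1})$ so that the cross terms are controlled by $\Expect\|\tiltheta_n\|$; both are harmless in the paper's tabular Q-learning applications but should be stated as hypotheses or derived from the setting. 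Second, in the first pass of part~(i) your remainder contains $O(\sqrt{\trace M_n/n})$ as well as $O(M_n/n)$, so the preliminary Lyapunov argument under (A3) must be quoted precisely to guarantee $\trace\Sigma_n\to0$ before you may treat the $M_n$-iteration as a Hurwitz linear recursion with vanishing forcing; with that ordering made explicit, the bootstrap and the final $O(n^{-1-\delta})$ bound (with $\delta$ shrunk at the spectral boundary to absorb logarithms) go through as you describe.
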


\Proposition{t:SAnonlinear} extends the conclusions of \Theorem{t:SAlinearized} to nonlinear SA~\eqref{e:SAintro}. The proof is contained in \Appendix{s:SAnonlinearProof}.


\begin{proposition}
\label{t:SAnonlinear}
Consider the general SA algorithm~\eqref{e:SAintro}.   Suppose {(A1)} -- {(A3)} hold with $A \eqdef \partial_\theta \barf(\theta) |_{\theta = \theta^*}$,  
and that   $\barf$ has the form  
\[
\barf(\theta) = - \theta  +  \barF(\theta) \,,\qquad \theta \in\Re^d
\]
with $\barF$   Lipschitz continuous,  a  strict contraction,  and $C^1$ in a neighborhood of the origin.    
Then, 
\begin{romannum}
\item[(i)]  If $\text{Real}(\lambda)<-\half$ for every eigenvalue $\lambda$ of $ g A$, then
 
\begin{romannum}
\item[(a)]   The  CLT holds for $ \{W_n = \sqrt{n} \tiltheta_n\}$, with   asymptotic covariance   $\Sigma_{\theta} \geq 0$   the solution to the Lyapunov equation \eqref{e:Lyapg}.

\item[(b)] Weak convergence goes beyond bounded and continuous functions:   for any measurable function  $g\colon\Re^d\to\Re$ with at most quadratic growth we have 
\[
\lim_{n\to\infty}  \Expect[ g( W_n) ]   =   \Expect[ g( W_\infty ) ]   \,,\qquad W_\infty\sim N(0,\Sigma_\theta)
\]
In particular,   $\Expect[\|\tiltheta_n\|^2]$  converges to zero at rate
$1/n$, and
\[
\lim_{n\to\infty} n \Sigma_n =   \Sigma_{\theta}   
\]

\end{romannum} 

\item[(ii)]   Suppose there is an eigenvalue $\lambda$ of $g A$ that satisfies 
\[
-\varrho_0 = \text{Real}(\lambda)> -\half
\] 
Let $\nu \neq0$ denote a corresponding left eigenvector, 
and suppose that $\Sigma_\Delta \nu \neq 0$.
 Then, $\Expect[ |\nu^\transpose  \tiltheta_n|^2]  $ converges to $0$ at a rate $1/n^{2\varrho_0}$.
Consequently,   $\Expect[\|\tiltheta_n\|^2]$ converges to zero at   rate no faster than $1 / n^{2\varrho_0}$.
\end{romannum} 
\end{proposition}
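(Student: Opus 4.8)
The plan is to reduce the nonlinear recursion to the linear result \Theorem{t:SAlinearized} by linearizing $\barf$ about $\theta^*$ and coupling the error sequence to the linear iterate driven by the same noise. First, since $\barF$ is a strict contraction the mean-flow ODE $\dot\vartheta=\barf(\vartheta)=-\vartheta+\barF(\vartheta)$ has $\theta^*$ as its unique globally asymptotically stable equilibrium (Lyapunov function $\|\vartheta-\theta^*\|^2$); combined with (A1)--(A2) and the standard stability-plus-convergence theory for stochastic approximation this gives $\sup_n\|\theta_n\|<\infty$ and $\theta_n\to\theta^*$ a.s., together with a preliminary polynomial moment rate $\Expect[\|\tiltheta_n\|^2]=O(n^{-\mu_0})$, $\mu_0>0$, and the higher-moment bounds used below. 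Next, with $A=\partial_\theta\barf(\theta^*)$ write $\barf(\theta_n)=A\tiltheta_n+\varepsilon_{n+1}$ where $\varepsilon_{n+1}\eqdef\barf(\theta_n)-A\tiltheta_n$ is globally Lipschitz and, since $\barf$ is $C^1$ at $\theta^*$, satisfies $\|\varepsilon_{n+1}\|=o(\|\tiltheta_n\|)$ as $\theta_n\to\theta^*$. The error recursion becomes $\tiltheta_{n+1}=\tiltheta_n+\alpha_{n+1}[A\tiltheta_n+\Delta_{n+1}+\varepsilon_{n+1}]$; let $\{\tiltheta_n^{\mathrm L}\}$ solve the linear recursion \eqref{e:linSA_MC} with the same $A$, the same noise, and $\tiltheta_0^{\mathrm L}=\tiltheta_0$, and set $e_n\eqdef\tiltheta_n-\tiltheta_n^{\mathrm L}$, so that $e_{n+1}=(I+\alpha_{n+1}A)e_n+\alpha_{n+1}\varepsilon_{n+1}$ with $e_0=0$. (If the conditional covariance of $\Delta_{n+1}$ depends on $\theta_n$ one checks, using $\theta_n\to\theta^*$, that it converges to $\Sigma_\Delta$, contributing only lower-order terms.)

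For part (i), under $\Real(\lambda)<-\tfrac12$ for all $\lambda\in\mathrm{spec}(gA)$, \Theorem{t:SAlinearized}(i) gives $\Expect[\|\tiltheta_n^{\mathrm L}\|^2]=n^{-1}\trace\Sigma_\theta+O(n^{-1-\delta})$ and $\sqrt n\,\tiltheta_n^{\mathrm L}\Rightarrow N(0,\Sigma_\theta)$. Because $A$ is Hurwitz and the eigenvalue condition forces the homogeneous product $\prod(I+\alpha_kA)$ to contract in the ratio $m/n$ with an exponent exceeding $\tfrac12$, a discrete variation-of-constants bound on $e_n$ together with a bootstrap --- feed the crude rate of the first paragraph into $\|\varepsilon_{m+1}\|$, deduce $\Expect[\|\tiltheta_m\|^2]=O(m^{-1})$ by absorbing a small constant, then re-feed to get $\Expect[\|\varepsilon_{m+1}\|^2]=o(m^{-1})$ --- yields $\Expect[\|e_n\|^2]=o(n^{-1})$. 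Slutsky's theorem then gives the CLT for $\sqrt n\,\tiltheta_n$; the matching second-moment bound, upgraded to uniform integrability of $\{n\|\tiltheta_n\|^2\}$, promotes this to weak convergence against all measurable functions of at most quadratic growth, in particular $n\Sigma_n\to\Sigma_\theta$.

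For part (ii), fix $\lambda\in\mathrm{spec}(gA)$ with $-\varrho_0=\Real(\lambda)\in(-\tfrac12,0)$, a left eigenvector $\nu$, and $\Sigma_\Delta\nu\neq0$. Projecting, $y_n\eqdef\nu^\transpose\tiltheta_n$ obeys the scalar (possibly complex) recursion $y_{n+1}=(1+\tfrac{\lambda}{n+1})y_n+\alpha_{n+1}(\nu^\transpose\Delta_{n+1}+\nu^\transpose\varepsilon_{n+1})$, and $y_n^{\mathrm L}=\nu^\transpose\tiltheta_n^{\mathrm L}$ drops the $\varepsilon$ term. Since $|1+\tfrac{\lambda}{n+1}|^2=1-\tfrac{2\varrho_0}{n+1}+O(n^{-2})$ with $2\varrho_0<1$ and $\nu^\transpose\Sigma_\Delta\bar\nu>0$, \Theorem{t:SAlinearized}(ii) (or direct summation of the variance recursion) gives $\Expect[|y_n^{\mathrm L}|^2]$ of exact order $n^{-2\varrho_0}$. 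The correction $\nu^\transpose e_n$ is controlled as in part (i) --- its source term is $o(\alpha_{m+1}\|\tiltheta_m\|)$, hence $\Expect[|\nu^\transpose e_n|^2]=o(n^{-2\varrho_0})$ --- so the lower bound transfers to $\Expect[|\nu^\transpose\tiltheta_n|^2]$, and since $\|\tiltheta_n\|\ge|\nu^\transpose\tiltheta_n|/\|\nu\|$, $\Expect[\|\tiltheta_n\|^2]$ can decay no faster than $n^{-2\varrho_0}$.

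The main obstacle is the bootstrap taming $\varepsilon_{n+1}$: because $\barf$ is only $C^1$ at $\theta^*$, the remainder is merely $o(\|\tiltheta_n\|)$ with no quadratic bound, so it cannot be treated as a fixed perturbation; one must iterate ``crude rate $\Rightarrow$ small remainder $\Rightarrow$ sharp rate'' using a truncation of $\varepsilon$ on the event $\{\|\tiltheta_n\|\le r_\eta\}$ plus a rapidly vanishing tail, and must do so carefully enough to also extract the uniform integrability needed in (i)(b). The strict-contraction hypothesis on $\barF$ is exactly what keeps the homogeneous dynamics uniformly contracting so that this self-improving argument closes; everything else is bookkeeping around \Theorem{t:SAlinearized}.
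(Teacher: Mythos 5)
Your route for part (i) (couple the nonlinear iterate to the linear recursion \eqref{e:linSA_MC} driven by the same noise, bootstrap a moment rate, then invoke \Theorem{t:SAlinearized}) differs from the paper, which simply cites the SA CLT of \cite[Ch.~7]{bor20a}; that difference is fine in principle, although the preliminary polynomial $L^2$ rate, the higher moments needed for your truncation of $\varepsilon_{n+1}$, and the uniform integrability of $\{n\|\tiltheta_n\|^2\}$ are asserted rather than proved, and weak convergence plus uniform integrability only gives $\Expect[g(W_n)]\to\Expect[g(W_\infty)]$ for $g$ continuous (or a.e.\ continuous for the Gaussian limit) of quadratic growth, not for arbitrary measurable $g$ as claimed in (i)(b) --- that stronger conclusion is precisely what the citation is doing work for. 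The genuine gap is in part (ii). Your transfer of the lower bound hinges on $\Expect[|\nu^\transpose e_n|^2]=o(n^{-2\varrho_0})$, but the only control you have on the source term is $\|\varepsilon_{m+1}\|=o(\|\tiltheta_m\|)$ (no quadratic bound, since $\barf$ is only $C^1$ near $\theta^*$), so this estimate requires an a priori \emph{upper} bound $\Expect[\|\tiltheta_m\|^2]=O(m^{-2\varrho_0})$. In case (ii) the bootstrap of part (i) is unavailable (it uses $\Real(\lambda)<-\half$ for \emph{all} eigenvalues of $gA$), and the crude rate coming from the contraction can be strictly slower than $n^{-2\varrho_0}$; feeding such a rate into the variation-of-constants sum $\nu^\transpose e_n=\sum_{m<n}\alpha_{m+1}\prod_{k>m+1}(1+\alpha_k\lambda)\,\nu^\transpose\varepsilon_{m+1}$, whose homogeneous factor is of size $(m/n)^{\varrho_0}$, produces a coupling error that dominates $n^{-\varrho_0}$. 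Even under the best conceivable bound $\Expect[\|\tiltheta_m\|^2]=O(m^{-2\varrho_0})$, the sum only gives $\|\nu^\transpose e_n\|_{L^2}=o(n^{-\varrho_0}\log n)$, which is not $o(n^{-\varrho_0})$, so you cannot conclude that $y_n^{\mathrm L}$ dominates and the lower bound does not transfer. The argument is circular: the sharp decay of $\tiltheta_n$ you need to tame the remainder is essentially the quantity whose decay rate is in dispute.

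The paper resolves exactly this difficulty by arguing by contradiction rather than by coupling: it \emph{assumes} $n^{2\varrho_1}\Expect[\|\tiltheta_n\|^2]$ is bounded for some $\varrho_1>\varrho_0$, so that for $\varrho\in(\varrho_0,\varrho_1)$ the rescaled process $W_n=n^{\varrho}\theta_n$ has vanishing second moment; under this hypothesis the nonlinearity is automatically negligible ($\Expect[\|\barf_n(W_n)-\barf_0(W_n)\|^2]\to 0$), and the recursion for $\sigma_n^2=\nu^\transpose\Expect[W_nW_n^\transpose]\nu$ becomes an approximately linear, anti-stable recursion (drift coefficient $2(\varrho-\varrho_0)>0$) driven by strictly positive noise since $\Sigma_\Delta\nu\neq 0$, forcing $\sigma_n^2\to\infty$ and contradicting the assumed decay. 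In other words, the contradiction hypothesis itself supplies the a priori bound that your direct coupling argument lacks; without restructuring part (ii) along these lines (or otherwise establishing a matching upper bound on $\Expect[\|\tiltheta_n\|^2]$ first), your proof of (ii) does not close.
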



It seems likely that the finite-$n$ bound in \Theorem{t:SAlinearized} also holds for the nonlinear SA algorithm.   We believe that coupling techniques of \cite{sze97} is one way to establish such results for Q-learning.   More importantly, even though the finite-$n$ result remains a conjecture, we have already highlighted how the CLT is often  predictive of finite-$n$ performance.

%



\smallskip

\noindent
\textbf{Organization:}   \Section{sec:MDP} sets notation and provides background on MDPs,  and  \Section{sec:q} contains background on Q-learning (along with new interpretations on the convergence rate of these algorithms).   \Section{sec:RelQ} is devoted to the new relative Q-learning algorithm,  and 
\Section{s:disc} contains directions for future research and conclusions.



\section{Markov Decision Processes Formulation}
\label{sec:MDP}

Consider a Markov Decision Processes (MDP) model with state space $\state$, action space   $\U$,   cost function $c\colon\state\times\U\to\Re$, and discount factor $\gamma\in(0,1)$.  It is assumed throughout that the state and action spaces are finite: denote $\nd = |\state|$ and  $\ninp = |\U|$.  
In the following, the terms `action', `control', and `input' are used interchangeably.

Along with the state-action process $(\bfmX,\bfmU)$ is an i.i.d.\ sequence $\bfmI=\{I_1,I_2,\dots\}$ used to model a randomized policy.   
It is assumed without loss of generality that each $I_n$ takes values in a finite set. 
An input sequence $\bfmU$ is called
\defn{non-anticipative}  if 
\[
U_n = z_n(X_0, U_0,I_1\dots,U_{n-1}, X_n,I_n) \,, \qquad n \geq 0
\]
where $\{z_n\}$ is a sequence of   functions. 



Under the assumption that the state and action spaces are finite, it follows that there are a finite number of deterministic stationary policies $\{ \phi^{(i)} : 1\le i\le \nphi\}$, where each $\phi^{(i)}: \state \to \U$, and $\nphi \le (\ninp)^\nd$. 
A randomized stationary policy is defined by a probability mass function (pmf) $\upmf$ on the $\{ 1,2,\ldots, \nphi\} \times \state$, such that  
\begin{equation}
\begin{aligned}
U_n & =  \sum_{k=1}^{\nphi}  \iota_n(k)  \phi^{(k)}(X_n) 
\end{aligned} 
\label{e:RandStatPolicy}
\end{equation}
with $    \upmf(k,x)  =
\Prob\{\iota_n(k) = 1 \mid X_0\,, \ldots \,, X_{n-1}, X_n = x\} $   for each $n \geq 0$, $ 1\le k \le \nphi$, and $x\in \state$.
It is assumed that  $\iota_n$ is a fixed function of $(I_n,  X_n)$ for each $n$.  



For each $u \in \U$, the controlled transition matrix $P_u$ acts on functions $V\colon\state\to\Re$ via
\begin{equation*}
\begin{aligned}
P_u V \, (x)  &\eqdef \sum_{x'\in\state} P_u(x,x') V (x')
\\
&
= \Expect[V(X_{n+1}) \! \mid \! X_n \!=\! x \,, U_n \!=\! u \,;   X_k, I_k, U_k \! : \! k <  n ]
\end{aligned} 
\end{equation*}
where the second equality holds for any non-anticipative input sequence $\bfmU$.
For any deterministic stationary policy $\phi$,  let $S_\phi$ denote the substitution operator, defined for any function $q\colon \state\times\U\to\Re$ by 
\[
S_\phi q\, (x) \eqdef q(x,\phi(x))
\]
If the policy $\phi$ is randomized, of the form \eqref{e:RandStatPolicy},  we then  define
\[
S_\phi q\, (x) =  \sum_k \upmf(k) q(x,\phi^{(k)} (x)) 
\]  
With  $P$ viewed as  a single matrix with $\nd\cdot\ninp$ rows and $\nd$ columns, and $S_\phi$ viewed as a matrix with $\nd$ rows and $\nd\cdot\ninp$ columns,  the following interpretations hold:
\begin{lemma}
\label{t:PSphi}
Suppose that $\bfmU$ is defined using a   stationary policy $\phi$ (possibly randomized).  Then, both $\bfmX$ and the pair process $(\bfmX,\bfmU)$ are Markovian, and
\begin{romannum}
\item
$P_\phi \eqdef S_\phi P$ is the transition matrix for   $\bfmX$.

\item  $P S_\phi$ is the transition matrix for   $(\bfmX,\bfmU)$.
\qed
\end{romannum}
\end{lemma}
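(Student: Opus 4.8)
The plan is to prove both assertions by unwinding the matrix conventions fixed just above the lemma and computing one‑step conditional laws, reading off the Markov property as a byproduct. Write $\clF_{n-1}$ for the $\sigma$‑field generated by $\{X_k, I_k, U_k : k < n\}$, and recall that a (possibly randomized) stationary policy of the form \eqref{e:RandStatPolicy} makes $U_n$ a fixed function of $(X_n, I_n)$; since $\bfmI$ is i.i.d., $I_n$ is independent of $\clF_{n-1}$, so that
\[
\Prob\{U_n = u \mid X_n = x,\, \clF_{n-1}\} = \sum_{k:\,\phi^{(k)}(x) = u} \upmf(k,x) = S_\phi\big(x,(x,u)\big),
\]
the last equality being exactly the definition of $S_\phi$ as a matrix with $\nd$ rows and $\nd\cdot\ninp$ columns; note also that $S_\phi$ is row‑stochastic, i.e. $\sum_{u'} S_\phi(x,(x,u')) = 1$.

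First I would treat the pair process, which is assertion (ii). Fix $n$, condition on $(X_n,U_n) = (x,u)$ and on $\clF_{n-1}$, and use the probabilistic characterization of $P_u$ recorded just above the lemma to get $\Prob\{X_{n+1} = x' \mid (X_n,U_n) = (x,u),\,\clF_{n-1}\} = P_u(x,x')$, with no residual dependence on $\clF_{n-1}$. Given $X_{n+1} = x'$, the input $U_{n+1}$ is a function of $(X_{n+1},I_{n+1})$ and $I_{n+1}$ is independent of the $\sigma$‑field generated by everything through time $n$ together with $X_{n+1}$, so $\Prob\{U_{n+1} = u' \mid X_{n+1} = x',\,(X_n,U_n),\,\clF_{n-1}\} = S_\phi(x',(x',u'))$. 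Multiplying, and using the index collapse $\sum_{x''} P_u(x,x'')S_\phi(x'',(x',u')) = P_u(x,x')S_\phi(x',(x',u'))$ (entries of $S_\phi$ in row $x''$ vanish unless the first coordinate of the column index equals $x''$), yields
\[
\Prob\{(X_{n+1},U_{n+1}) = (x',u') \mid (X_n,U_n) = (x,u),\,\clF_{n-1}\} = (P S_\phi)\big((x,u),(x',u')\big).
\]
The right side depends on the past only through $(x,u)$, so $(\bfmX,\bfmU)$ is a time‑homogeneous Markov chain with transition matrix $P S_\phi$.

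For assertion (i) I would sum the identity just obtained over $u'$ and use row‑stochasticity of $S_\phi$ to obtain $\Prob\{X_{n+1} = x' \mid (X_n,U_n) = (x,u),\,\clF_{n-1}\} = P_u(x,x')$, then condition only on $X_n = x$ and decompose over the value of $U_n$. Since neither $S_\phi(x,(x,u))$ nor $P_u(x,x')$ depends on $\clF_{n-1}$,
\[
\Prob\{X_{n+1} = x' \mid X_n = x,\,\clF_{n-1}\} = \sum_u S_\phi\big(x,(x,u)\big)\,P_u(x,x') = (S_\phi P)(x,x') = P_\phi(x,x'),
\]
which is a function of $x$ alone and does not involve $I_0,\dots,I_{n-1}$ or $U_0,\dots,U_{n-1}$; hence $\bfmX$ is a time‑homogeneous Markov chain with respect to its own filtration, with transition matrix $P_\phi = S_\phi P$. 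This proves (i). The one point I would be careful to state is the independence bookkeeping for the exogenous sequence $\bfmI$: one needs $I_n$ independent of $\clF_{n-1}$ (so conditioning on $X_n$ alone determines the conditional law of $U_n$) and $I_{n+1}$ independent of the $\sigma$‑field carrying everything through time $n$ together with $X_{n+1}$ (so the randomization producing $U_{n+1}$ is ``fresh'' given the new state). Both are consequences of the standing assumption that $\bfmI$ is i.i.d.\ and is the only randomness entering the policy beyond the controlled transitions; with that in hand the argument is just a translation of the row/column matrix conventions adopted immediately before the statement, so I do not anticipate any further obstacle.
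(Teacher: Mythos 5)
Your proof is correct. The paper states \Lemma{t:PSphi} without proof (the \qed\ is embedded in the statement, treating the claims as immediate from the matrix conventions for $P$ and $S_\phi$), and your argument — computing the one-step conditional laws of $(X_{n+1},U_{n+1})$ and $X_{n+1}$ given the full history, using the freshness of $\bfmI$ and the row/column structure of $S_\phi$ — is exactly the routine verification the paper implicitly relies on, carried out carefully and with the right independence bookkeeping made explicit.
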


\subsection{Q-function and the Bellman Equation}

For any (possibly randomized) stationary policy $\phi$, we consider two value functions
\begin{subequations}
\begin{align}
\begin{split}
V_\phi(x) 
& \eqdef \sum_{n = 0}^{\infty} (\gamma P_\phi)^n S_\phi  c \,  (x) 
\label{e:hphi}
\end{split}
\\
\begin{split}
Q_\phi(x,u) 
& \eqdef \sum_{n = 0}^{\infty} (\gamma  P S_\phi)^n c \, (x, u) 
\label{e:qphi}
\end{split}
\end{align}
\end{subequations}
which are related via
\begin{equation}
Q_\phi(x,u)  = c(x,u) + \gamma P_u V_\phi \, (x) 
\label{e:VQ}
\end{equation}

The function $V_\phi: \state \to \Re$ in \eqref{e:hphi} is the value function that corresponds to the policy $\phi$ (with the corresponding transition probability matrix $P_\phi$), and cost function $S_\phi c$, that appears in TD-learning algorithms \cite{tsiroy97a,sutbar18}. The function $Q_{\phi}: \state \times \U \to \Re$  is the fixed-policy Q-function considered in the SARSA algorithm \cite{meysur11,rumnir94,sze10}.

The minimal (optimal) value function is denoted
\[
V^*(x) \eqdef \min_{\phi} \, V_\phi(x)  \,, \quad x \in \state
\]
It is known that this is the unique solution to the following Bellman equation:
\begin{equation}
V^*(x) = \min_u \bigg\{ c(x,u) +  \gamma \sum_{x'\in\state}  P_u(x,x')  V^*(x')     \bigg\}     
\label{e:DCOE}
\end{equation}
Any minimizer defines a deterministic stationary policy $\phi^*\colon\state\to\U$ that is optimal over all input sequences \cite{ber12a}:
\begin{equation}
\phi^*(x) \in \argmin_u \bigg\{ c(x,u) +  \gamma \sum_{x'\in\state}  P_u(x,x')  V^*(x')     \bigg\}      
\label{e:opt_policy_h}
\end{equation}

The Q-function associated with $V^*$ is given by \eqref{e:VQ} with $\phi\!=\!\phi^*$,  which is precisely the term within the brackets in \eqref{e:DCOE}:
\[
Q^*(x,u) \eqdef  c(x,u) + \gamma P_u V^*\, (x)    
\]
The Bellman equation \eqref{e:DCOE} implies a similar fixed point equation for the Q-function:
\begin{equation}
Q^*(x,u) =  c(x,u) + \gamma P_u \uQ^*(x) 
\label{e:DCOE-Q}
\end{equation}
in which $\uQ(x)\eqdef\min_u Q(x,u)$ for any   $Q\colon\state\times\U\to\Re$.  

For any function $q\colon\state\times\U\to\Re$,   let    $\phi^q\colon\state\to\U$  denote  an associated policy that satisfies
\begin{equation}
\phi^q(x) \in  \argmin_uq(x,u) 
\label{e:phi_q}
\end{equation}
It is assumed to be specified \emph{uniquely} as follows: 
\begin{equation}
\begin{aligned}
\phi^q & \eqdef \phi^{(\kappa)}\,\,\text{such that}\,\, 
\\
\kappa & = \min \{ i : \phi^{(i)} (x) \! \in \! \argmin_uq(x,u), \, \text{for all }x \in \state  \}
\end{aligned}
\label{e:phi_q_def}
\end{equation}
Using the above notations, and the definitions in \Lemma{t:PSphi}, the fixed point equation~\eqref{e:DCOE-Q} can be rewritten as
\begin{equation}
Q^* (x,u) \!=\!  c + \gamma  P S_{\phi^*} Q^*(x,u), \,\, \text{where $\phi^* \!=\!  \phi^q $,   $q\!=\!{Q^*}$}
\label{e:DCOE-Qb}
\end{equation}

\section{Q-learning}
\label{sec:q}


The goal in Q-learning is to approximately solve the fixed point equation \eqref{e:DCOE-Q}, \emph{without} assuming knowledge of the controlled transition matrix. We restrict the discussion here to the case of linear parameterization for the Q-function: $Q^\theta(x,u)=   \theta^\transpose  \psi (x,u) $, where $\theta\in\Re^d$ denotes the parameter vector, and $\psi\colon\state\times\U\to\Re^d$ denotes the vector of basis functions. 


For a given parameter vector $\theta \in \Re^d$, let $\clB^\theta: \state \times \U\to \Re$ denote the corresponding Bellman error:
\begin{equation}
\clB^\theta (x,u) \eqdef c(x , u)  + \gamma P_{u} \uQ^{\theta}(x) - Q^{\theta} (x, u)
\label{e:BE_theta}
\end{equation}
A \defn{Galerkin approach} to approximating the optimal Q-function $Q^*$ is  formulated as follows: Obtain a non-anticipative input sequence $\bfmU$ (using a randomized stationary policy $\phi$), and a 
$d$-dimensional stationary stochastic process $\bfelig$ that is adapted to $(\bfmX,\bfmU)$.
The \emph{Galerkin relaxation} of the fixed point equation \eqref{e:DCOE-Q} is the root finding problem:  Find $\theta^*$ such that,
\begin{equation}
\begin{aligned}
\barf_i(\theta^*) & \eqdef \Expect\Big[ \tilBE^{\theta^*}_{n+1}  \elig_n(i)\Big] =0 \,, \qquad 1 \leq i \leq d
\end{aligned}
\label{e:eligQalg}
\end{equation}
where, for each $\theta \in \Re^d$, $\tilBE^\theta_{n+1} $ is the ``temporal difference''
\begin{equation}
\tilBE^\theta_{n+1} \eqdef  c(X_n,U_n)   + \gamma     \uQ^{\theta} (X_{n+1})  - Q^{\theta}(X_n,U_n) \,,
\label{e:BE_theta_n}
\end{equation}
and the expectation in \eqref{e:eligQalg} is with respect to the steady state distribution of  $(\bfmX,\bfmU,\bfelig)$. Equation \eqref{e:eligQalg} is often called the \emph{projected Bellman equation.} It is   a special case of the general root-finding problem that is the focus of SA algorithms.

The following Q($0$)~algorithm is the SA algorithm \eqref{e:SAa}, applied to estimate $\theta^*$ that solves \eqref{e:eligQalg}:
For initialization $\theta_0\in\Re^d  $, define the sequence of estimates  recursively:  
\begin{equation}
\theta_{n+1} = \theta_n + \alpha_{n+1} \elig_n  \tilBE^{\theta_n}_{n+1}  \,, \qquad   \elig_n  = \psi(X_{n}, U_{n}) 
\label{e:Qlambda}
\end{equation}
The   choice  for the sequence of eligibility vectors $\{\elig_n  \}$   in \eqref{e:Qlambda} is inspired by the TD($0$) algorithm \cite{sut88,tsiroy97a}.


For a  sequence of $d \times d$ matrices $\bfmG=\{G_n\}$, the \defn{matrix-gain {\rm Q(0)} algorithm} is described as follows:
For initialization $\theta_0 \in\Re^d  $, the sequence of estimates are defined recursively:  
\begin{align}
\theta_{n+1} = \theta_n + \alpha_{n+1} G_{n+1} \psi(X_{n}, U_{n})   \tilBE^{\theta_n}_{n+1} 
\label{e:HQlambda}
\end{align}
A common choice is
\begin{equation}
G_n = \left(  \frac{1}{n} \sum_{k=1}^n \psi ( X_k , U_k ) \psi^\transpose ( X_k , U_k  )  \right)^{-1}
\label{e:WatGainGen}
\end{equation}

The success of these algorithms has been demonstrated in a few restricted settings, such as optimal stopping   \cite{tsiroy99,choroy06,yuber13},   deterministic optimal control \cite{mehmey09a},   and the tabular setting discussed next.  

\subsection{Tabular Q-learning}
\label{sec:q:watkins}

The basic Q-learning algorithm of Watkins \cite{watday92a,wat89} (also known as \defn{``tabular'' Q-learning}) is a particular instance of the Galerkin approach \eqref{e:Qlambda}.  The basis functions are taken to be indicator functions:  
\begin{equation}
\psi_i(x,u) = \ind\{(x,u)=(x^{i}, u^{i}) \}\,,\quad 1\le i\le d
\label{e:WatkinsBasis}
\end{equation}
where $\{(x^{k},u^{k}) : 1\le k\le d\}$ is an enumeration of all state-input pairs, with $d=\nd \cdot \ninp$.  The goal of this approach is to \emph{exactly} compute the function $Q^*$. Substituting $\elig_n \equiv \psi(X_n , U_n)$ with $\psi$ defined in \eqref{e:WatkinsBasis}, the objective \eqref{e:eligQalg} can be rewritten as follows: Find $\theta^* \in \Re^d$ such that, for each $1\le i\le d$,
\begin{align}
0 & = \Expect\bigl[ \tilBE^{\theta^*}_{n+1}   \psi_i(X_n, U_n) \bigr] 
\label{e:eligQalg_Wat_a}
\\
& = \Big[c(x^i , u^i)   + \gamma \Expect\bigl[ \uQ^{\theta^*} (X_{n+1}) | X_n = x^i \,, U_n = u^i \bigr] 
\label{e:eligQalg_Wat_b}
\\
& \hspace{1.4in}- Q^{\theta^*}(x^i, u^i)  \Big ]   \pie(x^i, u^i)
\nonumber
\end{align}
where the expectation in \eqref{e:eligQalg_Wat_a} is in steady state, and $\pie$ in \eqref{e:eligQalg_Wat_b} denotes the invariant pmf of the Markov chain $(\bfmX,\bfmU)$. 
The conditional expectation in \eqref{e:eligQalg_Wat_b} is
\[
\Expect\bigl[ \uQ^{\theta^*} (X_{n+1}) | X_n = x^i \,, U_n = u^i \bigr] =  P_{u^i} \uQ^{\theta^*}(x^i)
\]
Consequently, \eqref{e:eligQalg_Wat_b} can be rewritten as
\begin{equation}
\begin{aligned}
0 & = \clB^{\theta^*}\! (x^i , u^i) \pie(x^i, u^i)
\end{aligned}
\label{e:eligQalg_Wat_sim}
\end{equation}
If $\pie(x^i, u^i) > 0$ for each $1 \leq i \leq d$, then the function $Q^{\theta^*}$ that solves \eqref{e:eligQalg_Wat_sim} is identical to the optimal Q-function in \eqref{e:DCOE-Q}.



There are three  flavors of Watkins' Q-learning that are common in the literature. We discuss each of them below.

\textbf{Asynchronous Q-learning:}
The SA algorithm applied to solve \eqref{e:eligQalg_Wat_a} coincides with the most basic version of Watkins' Q-learning algorithm:
For initialization $\theta_0 \in\Re^d$, define the sequence of estimates $\{\theta_n: n \geq 0 \}$ recursively:
\begin{equation}
\theta_{n+1} = \theta_n + \alpha_{n+1} \tilBE^{\theta_n}_{n+1}   \psi(X_n,U_n)
\label{e:Watkin}
\end{equation}

Algorithm \eqref{e:Watkin} coincides with the Q($0$) algorithm \eqref{e:Qlambda}, with $\psi$ defined in \eqref{e:WatkinsBasis}.
Based on this choice of basis functions, a single entry of $\theta$ is updated at each iteration, corresponding to the state-input pair $(X_n,U_n)$ observed  (hence the term ``asynchronous'').    The parameter   $\theta$ can be identified with the function $Q^\theta$ in this tabular setting.  This equivalence justifies a  slight abuse of notation:  replace $Q^\theta$   by $Q$ and set
  $\tilBE^{Q}_{n+1} \!=\! \tilBE^\theta_{n+1}$ (defined in \eqref{e:BE_theta_n}), resulting in a more  familiar form of \eqref{e:Watkin}:
\begin{equation}
\begin{aligned}
Q^{{n+1}}(X_n, U_n) = Q^{{n}} & (X_n, U_n) + \alpha_{n+1} \tilBE^{Q^n}_{n+1}
\end{aligned}
\label{e:Watkin_fam}
\end{equation}
and $Q^{{n+1}}(x,u) = Q^{{n}} (x,u) $ if $(x,u)\neq (X_n, U_n) $.

With $\alpha_n=1/n$, the \emph{ODE approximation} of \eqref{e:Watkin} takes the form (see \cite{bormey00a} for details):
\begin{equation}
\ddt q_t(x,u) = \pie(x,u)  \Big[  c(x,u)   + \gamma   P_u\uq_t\, (x)   - q_t(x,u)  \Big]   
\label{e:QODEW}
\end{equation}
in which $\uq_t(x)=\min_u q_t(x,u)$ as defined below \eqref{e:DCOE-Q}. 
We recall in \Section{s:wat_conv_ana} conditions under which this ODE is stable, and explain why we cannot expect a finite asymptotic covariance in typical settings.   
\smallbreak

A second and perhaps more popular ``Q-learning flavor'' is defined using a particular   ``state-action dependent'' step-size  \cite{sze97,eveman03,devbusmey19}.   For each  $(x,u)$,  denote $\alpha_n(x,u) =0$ if the pair $(x,u)$ has not been visited up until time $n-1$. Otherwise,    
\begin{equation}
\!\!
\alpha_n (x, u ) \! = \frac{1}{n(x, u)}\,, \,\,\, n(x, u) = \! \sum_{j = 0}^{n-1} \ind\{X_j =x, U_j =u\}
\label{e:wat_2_ss}
\end{equation}
The ODE approximation of \eqref{e:Watkin_fam}   simplifies with \eqref{e:wat_2_ss}:
\begin{equation}
\ddt q_t(x,u) =  c(x,u)   + \gamma   P_u\uq_t\, (x)   - q_t(x,u)
\label{e:QODEW_Gain}
\end{equation}

The asynchronous variant of Watkins' Q-learning algorithm \eqref{e:Watkin} with step-size \eqref{e:wat_2_ss} can be viewed as an instance of $G$-Q($0$) algorithm defined in \eqref{e:HQlambda}, with the matrix gain sequence \eqref{e:WatGainGen}, and step-size $\alpha_n = 1/n$. On substituting the Watkins' basis defined in \eqref{e:WatkinsBasis}, we find that  this matrix is diagonal: $G_n = \haPi_n^{-1}$, where
\begin{equation*}
\begin{aligned}
\haPi_n  (i, \, i) = \frac{1}{n} \sum_{k=1}^n \ind\{X_k = x^{i}, U_k =u^{i}\} \,, \qquad 1\le i\le d
\end{aligned}
\label{e:WatGainWat}
\end{equation*}
By the Law of Large Numbers, we have 
\begin{equation}
\lim_{n \to \infty} G_n = \lim_{n \to \infty} \haPi_n^{-1} = \diagpie^{-1}
\label{e:WatGainWatLim}
\end{equation}
where $\diagpie$ is a diagonal matrix with entries $\diagpie(i, i) = \pie(x^i,\, u^i)$. It is easy to see why the ODE approximation \eqref{e:QODEW} simplifies to \eqref{e:QODEW_Gain} with this matrix gain.



\smallbreak

\textbf{Synchronous Q-learning:} 
In this final flavor, each entry of   the Q-function approximation is updated in each iteration.
It is popular in the literature because the analysis is greatly simplified.

The algorithm requires a model that provides the next state of the Markov chain, conditioned on any given current state-action pair:  let $\{X_n^i : n\ge 1, \,  1\le i\le d\}$ denote a collection of mutually independent random variables taking values in $\state$.  Assume moreover that for each $i$,  the sequence $ \{X_n^i : n\ge 1\}$  is i.i.d.\ with common distribution $P_{u^i}(x^i,  \varble)$.
The   \defn{synchronous Q-learning} algorithm is then obtained as follows: 
For initialization $\theta_0 \in\Re^d  $, define the sequence of estimates $\{\theta_n: n \geq 0\}$ recursively:
\begin{equation}
\begin{aligned}
\theta_{n+1} & = \theta_{n}  + \alpha_{n+1}  \sum_{i=1}^{d} \bigl [  c(x^i, u^i)   + \gamma  \uQ^{\theta_{n}} (X_{n+1}^i)  
\\
& \hspace{1.4in} - Q^{\theta_n}(x^i, u^i)  \bigr ] \psi(x^i \,, u^i) 
\label{e:synWat}
\end{aligned}
\end{equation}
Once again, based on the choice of basis functions \eqref{e:WatkinsBasis}, and observing that $\theta$ is identified with the estimate $Q^\theta$, an equivalent form of the update rule   \eqref{e:synWat} is
\begin{equation}
\begin{aligned}
Q^{{n + 1}}& (x^i, u^i) = Q^{{n}} (x^i, u^i) + \alpha_{n+1} \bigl [  c(x^i, u^i)   
\\
& + \gamma  \uQ^{{n}} (X_{n+1}^i)  - Q^{n}(x^i, u^i)  \bigr ] ,\ \  1 \leq i \leq d
\label{e:synWatfam}
\end{aligned}
\end{equation}
Using the step-size $\alpha_n=1/n$ we   obtain the simple ODE approximation \eqref{e:QODEW_Gain}.

%


\subsection{Convergence and Rate of Convergence}
\label{s:wat_conv_ana}

Convergence of the tabular Q-learning algorithms can be established under the following assumptions: 
\assume{(Q1)}  The input $\bfmU$ is defined by a randomized stationary policy of the form 
\eqref{e:RandStatPolicy}.    The  joint process $(\bfmX,\bfmU)$ is an irreducible Markov chain. That is, it has a unique invariant pmf $\pie$ satisfying  $\pie(x,u)>0$ for each $x,u$.

\assume{(Q2)}  The optimal policy $\phi^*$ is unique.    \qed


Both ODEs \eqref{e:QODEW} and \eqref{e:QODEW_Gain} are stable under assumption~(Q1)~\cite{borsou97a}, which then (based on the results of \cite{bormey00a}) implies that $\bftheta$ converges to $Q^*$ a.s..    To obtain rates of convergence requires an examination of the linearization of the ODEs at their equilibrium.  

Linearization is justified under 
Assumption~{(Q2)}, which implies the existence of   $\epsy>0$ such that  
\begin{equation}
\phi^*(x) = \argmin_{u\in\U} Q^\theta(x,u) \,, \quad \text{if} \,\,\,\, \|Q^\theta - Q^*\|<\epsy\,
\end{equation}
\begin{lemma}
\label{t:WatkinsLin}
Under Assumptions~(Q1) and (Q2) the following approximations hold 
\begin{romannum}
\item  When $\| q_t - Q^*\| <\epsy$, the ODE \eqref{e:QODEW}  reduces to 
\[
\ddt q_t =   - \diagpie   [I - \gamma    PS_{\phi^*} ]   q_t  -    b 
\]
where $\diagpie$ is defined below \eqref{e:WatGainWatLim},  and  $b(x,u) = -\pie(x,u) c(x,u)$, expressed as a $d\times 1$ column vector.

\item  When $\| q_t - Q^*\| <\epsy$, the ODE \eqref{e:QODEW_Gain}  reduces to 
\[
\ddt q_t =   -  [I - \gamma    PS_{\phi^*} ]   q_t  -    b
\]
where    $b(x,u) = - c(x,u)$.
\null 
\end{romannum}
\end{lemma}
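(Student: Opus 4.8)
The plan is to observe that the right-hand sides of \eqref{e:QODEW} and \eqref{e:QODEW_Gain}, although nonlinear in $q_t$ through the term $\gamma P_u\uq_t(x)=\gamma P_u\big[\min_u q_t(\cdot,u)\big](x)$, become \emph{affine} once we restrict attention to the ball $\{\|q-Q^*\|<\epsy\}$. The only nonlinearity is the pointwise minimum $\uq_t(x)=\min_u q_t(x,u)$, so the first step is to linearize this map. By Assumption~(Q2) and the display immediately preceding the lemma, whenever $\|q_t-Q^*\|<\epsy$ we have $\argmin_u q_t(x,u)=\{\phi^*(x)\}$ for every $x\in\state$, and hence $\uq_t(x)=q_t(x,\phi^*(x))=S_{\phi^*}q_t\,(x)$. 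Thus on this ball the minimum operator coincides with the linear substitution operator $S_{\phi^*}$, and the reduction is purely algebraic from here.

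Next I would translate $\gamma P_u\uq_t(x)$ into matrix form. Writing $\uq_t=S_{\phi^*}q_t$ and using the paper's convention that $P$ is the $\nd\ninp\times\nd$ matrix whose $(x,u)$-row is $P_u(x,\cdot)$, one gets $P_u\uq_t(x)=\big(PS_{\phi^*}q_t\big)(x,u)$, with $PS_{\phi^*}$ the $d\times d$ matrix ($d=\nd\ninp$) that, by \Lemma{t:PSphi}(ii), is the transition matrix of $(\bfmX,\bfmU)$ under $\phi^*$. Substituting this into \eqref{e:QODEW} gives componentwise $\ddt q_t(x,u)=\pie(x,u)\big[c(x,u)+\gamma(PS_{\phi^*}q_t)(x,u)-q_t(x,u)\big]$; collecting the $\pie$ factors into the diagonal matrix $\diagpie$ and viewing $c$ and $q_t$ as $d\times1$ column vectors yields $\ddt q_t=-\diagpie(I-\gamma PS_{\phi^*})q_t+\diagpie c$, which is part~(i) with $b=-\diagpie c$, i.e.\ $b(x,u)=-\pie(x,u)c(x,u)$. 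Performing the identical substitution in \eqref{e:QODEW_Gain} — where the $\pie$ weights are absent — gives $\ddt q_t=-(I-\gamma PS_{\phi^*})q_t+c$, which is part~(ii) with $b(x,u)=-c(x,u)$.

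There is no genuine analytic obstacle here: the statement is an exact identity on the $\epsy$-ball rather than a first-order Taylor approximation, so the argument amounts to (a) the local identification of the minimizing policy with $\phi^*$, supplied by (Q2), and (b) careful bookkeeping of the matrix/vector identifications for $P$, $S_{\phi^*}$, $\diagpie$, and $b$. The one point worth stating cleanly is that a single $\epsy>0$ works uniformly over the whole vector $q_t$; this is automatic because $\state$ and $\U$ are finite and $\phi^*$ is the unique minimizer at every $x$, so one may take $\epsy$ to be the minimum over $x$ of the local separation between $\phi^*(x)$ and the next-best action, and the reduction is then to a bona fide affine ODE on the ball.
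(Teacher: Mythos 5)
Your proposal is correct and follows essentially the same route as the paper's own proof: use (Q2) to replace the minimum $\uq_t$ by the substitution $S_{\phi^*}q_t$ on the $\epsy$-ball, then rewrite the resulting affine system in matrix form (the paper does this starting from the expectation form of $\barf$ with the tabular basis, which is the same computation componentwise). The sign bookkeeping for $b$ in both parts matches the lemma statement, so nothing further is needed.
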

The proof is contained in \Appendix{sec:Q_appendix}.


The definition of the linearization matrix $A$ in \eqref{e:Ab} is extended to non-linear functions as follows
\cite{devmey17a,devbusmey20}:
\[
A = \partial_\theta \barf(\theta) \big|_{\theta = \theta^*}
\]
The crucial take-away from \Lemma{t:WatkinsLin} is the linearization matrix that corresponds to each tabular Q-learning algorithms:   
\begin{subequations}
\begin{align}
\begin{split}
A &=
- \diagpie [I - \gamma P S_{\phi^*} ]   \quad \text{in case (i) of \Lemma{t:WatkinsLin}}
\label{e:A_Q}
\end{split}
\\[.4cm]
\begin{split}
A &=  - [I - \gamma P S_{\phi^*} ]   \quad \,\,\,\, \hspace{0.02in} \text{in case (ii) of \Lemma{t:WatkinsLin}}
\label{e:A_Qa}
\end{split}
\end{align}
\label{e:A_QandQa}
\end{subequations}
Since $\gamma < 1$, and $PS_{\phi^*}$ is a transition matrix of an irreducible Markov chain (see \Lemma{t:PSphi}), it follows that both  matrices are Hurwitz.

We consider next conditions under which the asymptotic covariance for Q-learning is \emph{not} finite.   
The noise covariance matrix $\Sigma_\Delta$ defined in \eqref{e:SigmaDelta}  is diagonal in all three flavors of Q-learning discussed in \Section{sec:q:watkins}.   
For the asynchronous Q-learning algorithm \eqref{e:Watkin_fam} with step-size \eqref{e:wat_2_ss}, 
or the synchronous Q-learning algorithm \eqref{e:synWatfam}, the diagonal elements of $\Sigma_\Delta$  are given by   $ \Sigma^{s(i, i)}_\Delta = $
\begin{align}
 &\! \gamma^2 \Expect \Big [ \! \big( \uQ^{*} \! (X_{n+1}) \! - \! P_{u^i} \uQ^* \! (x_i) \big)^2 \Big | X_n \! = \! x^i , U_n \! = \! u^i  \Big]
\nonumber
\\
\!=\! & \gamma^2  \Expect \Big [ \! \big( V^{*} \! (X_{n+1}) \! - \! P_{u^i} \! V^* \! (x_i) \big)^2 \Big | ( \! X_n = x^i , \! U_n = u^i \!) \! \Big]
\label{e:SigmaDeltaQ}
\end{align}
The noise covariance for asynchronous Q-learning with step-size $\alpha_n \! = \! 1/n$ is $\Sigma^a_\Delta \! = \! \diagpie \Sigma^s_\Delta \diagpie$, with $\diagpie$ defined below \eqref{e:WatGainWatLim}.

\begin{theorem}
\label{t:Qinfinite}
Suppose that assumptions~{{(Q1)}} and~{{(Q2)}} hold, and $\alpha_n \equiv 1/n$. Then, the sequence of parameters $\{ \theta_n \}$ obtained using the asynchronous Q-learning algorithm \eqref{e:Watkin} converges to $Q^*$ a.s..
Suppose moreover that the conditional variance of $V^* (X_n)$ is positive:
\begin{equation}
\sum_{x,x',u} \pie(x,u) P_u(x,x') [ V^*(x') -  P_u V^*\, (x) ]^2 >0
\label{e:hvar}
\end{equation}
\begin{equation}
\hspace{-1.25in}\text{and } \qquad \qquad (1-\gamma) \max_{x,u} \pie(x,u)  < \half
\label{e:disc_cond}
\end{equation}
Then, 
\begin{romannum}
\item The asymptotic covariance of the algorithm is infinite:
\[
\lim_{n\to\infty}  n \Expect[ \| \theta_{n} - \theta^*\|^2 ] = \infty
\]
\item $\Expect[ \| \theta_{n} - \theta^*\|^2 ]$ converges to zero at a rate no faster than $1 / n^{2(1 - \gamma)}$.\qed
\end{romannum}
\end{theorem}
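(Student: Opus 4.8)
Throughout write $g=1$ (the hypothesis is $\alpha_n\equiv 1/n$). The a.s.\ convergence claim is already recorded in the discussion before the theorem: under~(Q1) the ODE~\eqref{e:QODEW} is globally asymptotically stable~\cite{borsou97a}, so the ODE method of~\cite{bormey00a} gives $\theta_n\to Q^*$ a.s. For the two rate statements the plan is to reduce to \Theorem{t:SAlinearized}(ii) (in its Markovian-noise extension, \cite{chedevbusmey20}). By~(Q2) there is $\epsy>0$ such that on $\{\|Q^\theta-Q^*\|<\epsy\}$ the minimiser in~\eqref{e:BE_theta_n} equals $\phi^*$, so there the recursion~\eqref{e:Watkin} coincides with the linear recursion~\eqref{e:linSA_MC} with linearisation matrix $A=-\diagpie[I-\gamma PS_{\phi^*}]$ (\Lemma{t:WatkinsLin}(i), \eqref{e:A_Q}) and with noise covariance the diagonal matrix $\Sigma_\Delta$ whose $i$-th diagonal entry is a positive multiple of $\gamma^2\,\Expect[(V^*(X_{n+1})-P_{u^i}V^*(x^i))^2\mid X_n=x^i,U_n=u^i]$; thus~\eqref{e:hvar} is exactly the statement $\Sigma_\Delta\neq0$. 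Since $\theta_n\to Q^*$ a.s.\ the iterates eventually remain in this neighbourhood, and the finite initial segment affects only constants, not asymptotic rates.

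The core step is to exhibit an eigenvalue $\lambda$ of $A$ with $-\half<\Real(\lambda)<0$ whose left eigenvector $\nu$ satisfies $\Sigma_\Delta\nu\neq0$. For the eigenvalue I would argue that $-A=\diagpie[I-\gamma PS_{\phi^*}]$ is a nonsingular $M$-matrix: its off-diagonal entries are $\le0$ because $\gamma PS_{\phi^*}\ge0$, and it is positive stable because $A$ is Hurwitz (noted after~\eqref{e:A_QandQa}). Writing $-A=sI-B$ with $s=\max_i(-A)_{ii}$ and $B\ge0$, the smallest-real-part eigenvalue of $-A$ is the real number $\varrho_0:=s-\rho(B)>0$, and combining Perron's row-sum bound $\rho(B)\ge\min_i\sum_jB_{ij}$ with the identity $(-A)\mathbbm{1}=(1-\gamma)\diagpie\mathbbm{1}$ gives
\[
\varrho_0\ \le\ (1-\gamma)\max_{x,u}\pie(x,u).
\]
By~\eqref{e:disc_cond} this forces $\varrho_0<\half$, and since $\max_{x,u}\pie(x,u)\le1$ also $\varrho_0\le1-\gamma$; hence $\lambda:=-\varrho_0$ is a real eigenvalue of $gA=A$ in $(-\half,0)$.

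For the eigenvector, note that off the diagonal $A$ is supported on the optimal pairs $\{(x,\phi^*(x))\}$, on which $PS_{\phi^*}$ is the transition matrix $P_{\phi^*}$, and the Perron left eigenvector $\nu$ for $\varrho_0$ may be taken $\ge0$ and strictly positive on a recurrent class $R$ of $P_{\phi^*}$, zero elsewhere. I would then use~\eqref{e:hvar} together with the Bellman equation~\eqref{e:DCOE} for $V^*$ to conclude that $V^*$ cannot be constant on every such $R$, so that $\Sigma_\Delta(i,i)>0$ for some $i$ in the support of $\nu$ and hence $\Sigma_\Delta\nu\neq0$; if instead the one-step conditional variance of $V^*$ is positive only at a non-optimal or transient pair $i^*$ with $\pie(x^{i^*},u^{i^*})<\half$, one uses the eigenvalue $-\pie(x^{i^*},u^{i^*})$ of $A$ instead, whose left eigenvector has a nonzero $i^*$-th coordinate. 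I expect this matching of~\eqref{e:hvar} to the (generally reducible, since $P_{\phi^*}$ need not be irreducible under (Q1)--(Q2)) spectral structure of $A$ to be the main obstacle.

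Granting the previous step, with $\lambda=-\varrho_0\in(-\half,0)$ and $\Sigma_\Delta\nu\neq0$, \Theorem{t:SAlinearized}(ii) gives that $\Expect[|\nu^\transpose\tiltheta_n|^2]\to0$ at rate $1/n^{2\varrho_0}$ in the sense of~\eqref{e:conv_rate}, so $\Expect[\|\tiltheta_n\|^2]\ge\|\nu\|^{-2}\Expect[|\nu^\transpose\tiltheta_n|^2]$ converges no faster than $1/n^{2\varrho_0}$. Since $2\varrho_0<1$, taking $\epsy=1-2\varrho_0$ in~\eqref{e:conv_rate} yields $n\,\Expect[\|\tiltheta_n\|^2]\to\infty$, which is~(i); and since $2\varrho_0\le2(1-\gamma)$ we get $\Expect[\|\tiltheta_n\|^2]\ge c\,n^{-2\varrho_0}\ge c\,n^{-2(1-\gamma)}$ for all large $n$, which is~(ii).
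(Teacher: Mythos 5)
Your overall architecture matches the paper's: almost sure convergence is quoted from the literature, and the slow-rate claims are reduced to the ``unstable eigenvalue'' criterion of \Prop{t:SAnonlinear}(ii) (the paper applies the nonlinear proposition directly rather than arguing that the iterates eventually track the linear recursion, but that is a matter of packaging). Your M-matrix/row-sum argument that $-A=\diagpie[I-\gamma PS_{\phi^*}]$ has a \emph{real} eigenvalue of minimal real part $\varrho_0=s-\rho(B)\le(1-\gamma)\max_{x,u}\pie(x,u)$ is correct and is a nice self-contained substitute for the eigenvalue bound that the paper simply cites from Appendix~A.2 of \cite{devmey17a}.

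The gap is exactly the step you flag as the main obstacle: producing a left eigenvector $\nu$, with eigenvalue obeying the required bounds, for which $\Sigma_\Delta\nu\neq0$. Because every column of $A$ indexed by a non-optimal pair $j$ equals $-\pie(x^j,u^j)e_j$, any left eigenvector whose eigenvalue is not of the form $-\pie(x^j,u^j)$ must vanish on \emph{all} non-optimal pairs; so when $-\varrho_0$ comes from the optimal block, $\nu$ is supported on optimal pairs and $\nu^\dagger\Sigma_\Delta\nu>0$ requires the one-step conditional variance of $V^*$ to be positive at an optimal pair in that support. Condition \eqref{e:hvar} does not give this: under (Q1) it only says the variance is positive at \emph{some} pair, which may be non-optimal or transient under $\phi^*$ (e.g.\ if every recurrent class of $P_{\phi^*}$ is a single absorbing state, $V^*$ is constant there, $\Sigma_\Delta(i,i)=0$ at all recurrent optimal $i$, yet \eqref{e:hvar} holds through randomness elsewhere), so ``\eqref{e:hvar} plus the Bellman equation implies $V^*$ is non-constant on some recurrent class'' is not a valid inference. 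The fallback branch also fails: for a transient \emph{optimal} pair, $-\pie(x^{i^*},u^{i^*})$ need not be an eigenvalue of $A$ at all, and for a non-optimal pair the hypotheses do not give $\pie(x^{i^*},u^{i^*})<\half$ (\eqref{e:disc_cond} bounds $(1-\gamma)\max\pie$, not $\pie(i^*)$) nor $\pie(x^{i^*},u^{i^*})\le1-\gamma$, so neither conclusion (i) nor (ii) follows along that route. The paper closes this hole with a different key input: the construction in \cite{devmey17a} of an eigenvector with \emph{strictly positive entries} whose eigenvalue has real part $\ge-\half$ under \eqref{e:disc_cond}; strict positivity is precisely what makes \eqref{e:hvar} equivalent to positivity of the quadratic form \eqref{e:SigmaDeltaQ_condition}, with no localization of the variance needed. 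Without that construction (or a substitute), the noise non-degeneracy $\Sigma_\Delta\nu\neq0$ — the heart of the theorem — remains unproven in your proposal.
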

The inequality \eqref{e:disc_cond} is satisfied whenever the discount factor satisfies $\gamma\ge \half$. 

\Theorem{t:Qinfinite} explains why the Q-learning algorithm can be terribly slow: If the discount factor is close to $1$, which is typical in many applications, using a step-size of the form $\alpha_n = 1/n$ results in a MSE convergence rate that is \emph{much} slower than the optimal rate $1/n$.


Similar conclusions hold for the other flavors of tabular Q-learning, for which the algorithm admits the ODE approximation  \eqref{e:QODEW_Gain}. Based on \Lemma{t:WatkinsLin}, the linearization matrix for these algorithms is defined in \eqref{e:A_Qa}. 
This poses problems when $\gamma>\half$,   but for these algorithms there is a simple remedy:
\begin{theorem}
\label{t:ADgain}
For asynchronous Q-learning with the step-size rule \eqref{e:wat_2_ss}, or
synchronous Q-learning with step-size $\alpha_n=1/n$,  the matrix   shown in \eqref{e:A_Qa} is equal to the linearization matrix $A = \partial_\theta \barf(\theta) \big|_{\theta = \theta^*} $.  It has one eigenvalue $\lambda_1 =-(1-\gamma)$,   and $\Real(\lambda(A)) < -(1-\gamma)$ for every other eigenvalue.  Consequently,  
\begin{romannum} 
\item  Subject to \eqref{e:hvar}, the asymptotic covariance is not finite whenever $\gamma>\half$.
\item   Suppose that the step-sizes are scaled:   use $ \alpha_n(x, u) =    [ (1-\gamma)  n(x, u)   ]^{-1} $ for asynchronous Q-learning,   or $ \alpha_n =    [ (1-\gamma)  n   ]^{-1} $ for synchronous Q-learning. Then, the eigenvalue test passes: for each eigenvalue $\lambda = \lambda(A)$,
\[
\Real (\lambda  )=    - (1-\gamma)^{-1}  \Real\bigl(\lambda( [I - \gamma PS_{\phi^*}]   )\bigr)\le -1
\]
The resulting asymptotic covariance is obtained as a solution to the Lyapunov equation \eqref{e:Lyapg}, with $g = (1-\gamma)^{-1}$, and $\Sigma_\Delta=\Sigma^s_\Delta$ defined in \eqref{e:SigmaDeltaQ}.
\qed  
\end{romannum}
\end{theorem}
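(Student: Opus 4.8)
The plan is to identify the linearization matrix $A$ from \Lemma{t:WatkinsLin}, locate its spectrum using the Perron--Frobenius structure of $PS_{\phi^*}$, and then feed the eigenvalue data into \Theorem{t:SAlinearized} --- or, since tabular Q-learning is genuinely nonlinear, into its extension \Proposition{t:SAnonlinear} --- with the relevant gain $g$. First, the matrix. For both flavors named in the statement the mean vector field of the ODE approximation is \eqref{e:QODEW_Gain}, i.e.\ $\barf(q)=c+\gamma PS_{\phi^q}q-q$; by (Q2) and the displayed consequence noted just before \Lemma{t:WatkinsLin}, $\phi^q\equiv\phi^*$ on a neighbourhood of $Q^*$, so there $\barf$ is affine and $A=\partial_\theta\barf|_{\theta^*}=-[I-\gamma PS_{\phi^*}]$, which is \eqref{e:A_Qa}. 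For the asynchronous rule \eqref{e:wat_2_ss} this is the effective linearization once the matrix gain $G_n\to\diagpie^{-1}$ of \eqref{e:WatGainWatLim} is absorbed, and the associated noise covariance is then $\Sigma^s_\Delta$, as recorded below \eqref{e:SigmaDeltaQ}.

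Next, the spectrum. By \Lemma{t:PSphi}(ii), $PS_{\phi^*}$ is the transition matrix of the pair process $(\bfmX,\bfmU)$ under $\phi^*$. Ordering the $d=\nd\ninp$ state--action pairs so the optimal pairs $(x,\phi^*(x))$ come first, $PS_{\phi^*}$ is block triangular: no transition enters a non-optimal pair, and the optimal-pair block is a copy of $P_{\phi^*}=S_{\phi^*}P$; hence $\mathrm{spec}(PS_{\phi^*})=\mathrm{spec}(P_{\phi^*})\cup\{0\}$, the $0$ with multiplicity $d-\nd$. Since $P_{\phi^*}$ is an irreducible stochastic matrix (the irreducibility asserted after \eqref{e:A_QandQa}), $1$ is a simple eigenvalue, $\one$ is a corresponding right eigenvector of $PS_{\phi^*}$, and every other eigenvalue $\mu$ obeys $|\mu|\le1$ with $\mu=1$ the only boundary point of real part $1$, hence $\Real(\mu)<1$. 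The eigenvalues of $A$ are therefore $\lambda=-(1-\gamma\mu)$: $\mu=1$ gives the simple eigenvalue $\lambda_1=-(1-\gamma)$, and any other obeys $\Real(\lambda)=-1+\gamma\Real(\mu)<-(1-\gamma)$; in particular $A$ is Hurwitz. A left eigenvector $\nu$ of $A$ for $\lambda_1$ solves $\nu^\transpose PS_{\phi^*}=\nu^\transpose$, i.e.\ $\nu$ is (proportional to) the invariant pmf of $(\bfmX,\bfmU)$ under $\phi^*$, strictly positive on the recurrent optimal pairs.

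Part (i) is then an application of \Proposition{t:SAnonlinear}(ii) with $g=1$: the eigenvalue $\lambda_1=-(1-\gamma)$ of $gA$ has $\Real(\lambda_1)=-(1-\gamma)>-\half$ exactly when $\gamma>\half$; $\Sigma_\Delta$ is diagonal with entries \eqref{e:SigmaDeltaQ}, so $\Sigma_\Delta\nu\neq0$ follows from the nondegeneracy \eqref{e:hvar} once one checks it produces a strictly positive conditional variance at a recurrent optimal pair; and the structural hypotheses of \Proposition{t:SAnonlinear} hold because $\barf=-\theta+\barF$ with $\barF(q)=c+\gamma P\uq$ a $\gamma$-contraction in $\|\cdot\|_\infty$ that is $C^1$ near $Q^*$, while (A1) holds for the martingale increments of \eqref{e:synWatfam}/\eqref{e:Watkin_fam}. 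This gives both conclusions of (i), in parallel with \Theorem{t:Qinfinite}. For part (ii), using $\alpha_n=[(1-\gamma)n]^{-1}$ means $g=(1-\gamma)^{-1}$, so $gA=-(1-\gamma)^{-1}(I-\gamma PS_{\phi^*})$, and $\Real(\lambda(I-\gamma PS_{\phi^*}))=1-\gamma\Real(\mu)\ge1-\gamma$ from the spectrum computation gives, for every eigenvalue,
\[
\Real\bigl(\lambda(gA)\bigr)=-(1-\gamma)^{-1}\Real\bigl(\lambda(I-\gamma PS_{\phi^*})\bigr)\le-1<-\half .
\]
Hence the eigenvalue test of \Theorem{t:SAlinearized}(i)/\Proposition{t:SAnonlinear}(i) passes and $n\Sigma_n\to\Sigma_\theta\ge0$, the unique solution of the Lyapunov equation \eqref{e:Lyapg} with this $g$ and $\Sigma_\Delta=\Sigma^s_\Delta$.

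I expect the main obstacle to be in part (i): checking that the $\|\cdot\|_\infty$-flavoured, behaviour-policy-weighted nondegeneracy \eqref{e:hvar} really delivers $\Sigma_\Delta\nu\neq0$ for the left eigenvector $\nu$ supported on \emph{optimal} pairs, and handling the fact that the sharp finite-$n$ statement is available only through the linear \Theorem{t:SAlinearized}, so that for these nonlinear algorithms one must route the argument through \Proposition{t:SAnonlinear} (whose CLT and quadratic-growth weak convergence still yield $n\Sigma_n\to\Sigma_\theta$ and the matching lower bound). Verifying the matrix-gain reduction for the asynchronous rule \eqref{e:wat_2_ss} --- that it is faithfully described by the template $(g,A,\Sigma_\Delta)=((1-\gamma)^{-1},\,-[I-\gamma PS_{\phi^*}],\,\Sigma^s_\Delta)$ --- is the remaining bookkeeping.
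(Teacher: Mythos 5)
Your proposal follows the paper's own proof in substance: identify $A=-[I-\gamma PS_{\phi^*}]$ from \Lemma{t:WatkinsLin} (absorbing the matrix gain $\haPi_n^{-1}\to\diagpie^{-1}$ for the step-size \eqref{e:wat_2_ss}), locate the spectrum of $PS_{\phi^*}$ to conclude that $A$ has a single eigenvalue at $-(1-\gamma)$ with every other eigenvalue of strictly smaller real part, and then apply \Proposition{t:SAnonlinear} with $g=1$ for part (i) and $g=(1-\gamma)^{-1}$ for part (ii), with $\Sigma_\Delta=\Sigma^s_\Delta$ from \eqref{e:SigmaDeltaQ}. The only real divergence is how the spectral facts are justified: the paper simply asserts that $PS_{\phi^*}$ is the transition matrix of an irreducible chain, so $\lambda=1$ is simple and its left eigenvector (the invariant pmf) is strictly positive, which makes the requirement $\Sigma_\Delta\nu\neq0$ an immediate consequence of \eqref{e:hvar}; you instead obtain the spectrum by block-triangularizing $PS_{\phi^*}$ over optimal versus non-optimal pairs and reducing to $P_{\phi^*}$. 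Your route is arguably the more careful one (under the deterministic policy $\phi^*$ the pair chain is not irreducible on all of $\state\times\U$ once some action is suboptimal), but this very refinement exposes the check you flag at the end: with $\nu$ supported only on the optimal pairs, \eqref{e:hvar} --- positivity of the conditional variance at \emph{some} pair, weighted by the behavior-policy pmf $\pie$ --- does not by itself yield $\Sigma_\Delta\nu\neq0$. The paper's proof does not supply this step either; it is subsumed in its irreducibility assertion (compare the positive-eigenvector construction invoked for \Theorem{t:Qinfinite}). So the obstacle you identify is a genuine subtlety of the published argument rather than a defect peculiar to your approach, and apart from it your proposal is the paper's proof with the hypotheses of \Proposition{t:SAnonlinear} verified in more detail.
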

The step-size rule  $ \alpha_n =    [ (1-\gamma)  n   ]^{-1} $  is equivalent to  $ \alpha_n =  [ 1+(1-\gamma)  n   ]^{-1} $ that appears in \cite{wai19a}, in the sense that each algorithm will share the same asymptotic covariance.

\smallbreak

\textbf{Overview of  proofs:  }  We begin with \Theorem{t:Qinfinite}. 
The proof of convergence can be found in \cite{watday92a,tsi94a,bormey00a}.
The proof of infinite asymptotic covariance is  based on an application of \Prop{t:SAnonlinear}. A brief overview follows.

To establish the slow convergence rate, an eigenvector for $A$ (defined in \eqref{e:A_Q}) can be constructed with strictly positive entries, and with real part of the corresponding eigenvalue satisfying $\text{Re}(\lambda) \ge -1/2$  (see Appendix~A.2 of \cite{devmey17a}). 
Interpreted as a function $v\colon\state\times\U\to\Co$,  this eigenvector satisfies $ v^\dagger  \Sigma_\Delta v =$
\begin{align}
\label{e:SigmaDeltaQ_condition}
 \gamma^2 \! \sum_{x,u,x'} \! \pie(x,u) |v(x,u)|^2 P_u(x,x') [  V^*(x') \! - \! P_u \! V^* (x) ]^2
\end{align}
where $\Sigma_\Delta$ is the noise covariance matrix (recall \eqref{e:SigmaDeltaQ}),
and $v^\dagger$ denotes complex-conjugate transpose.
Assumption \eqref{e:hvar} ensures that the right hand side of \eqref{e:SigmaDeltaQ_condition} is strictly positive, as required in part~(ii) of \Proposition{t:SAnonlinear}.



\smallbreak

\Theorem{t:ADgain} is based on the simple structure of the eigenvalues of the linearization matrix  $A  =  - [I - \gamma P S_{\phi^*} ] $ defined in \eqref{e:A_Qa}.   Because $P S_{\phi^*}  $ is the transition matrix for an irreducible Markov chain, it follows that all of its eigenvalues are in the closed unit disk in the complex plane, with a single eigenvalue at $\lambda=1$.   Consequently,  $A$ has a single eigenvalue at $\lambda=-(1-\gamma)$,
and $\Real(\lambda(A)) <  -(1-\gamma)$ for all other eigenvalues.  An application of \Proposition{t:SAnonlinear} then implies both (i) and (ii) of the theorem.
\qed

Theorems~\ref{t:Qinfinite} and \ref{t:ADgain} motivate the introduction of new algorithms whose performance does not degrade with large $\gamma$.

\section{Relative Q-learning}
\label{sec:RelQ}

%


The following \defn{relative Bellman equation} was inspired by the decomposition \eqref{e:tilQ}:   
\begin{equation}
H^*(x,u) = c(x,u) + \gamma P_u \uH^*(x)   - \delta \langle \mu \,, H^* \rangle
\label{e:DCOE-H}
\end{equation}
where $\delta >0$ is a positive scalar, $\mu: \state \times \U \to [0,1]$ is a pmf (both design choices), and 
\[
\langle \mu \,, H^* \rangle = \sum_{x  \,, u } \mu (x, u) H^*(x, u)
\]
For example, we may choose $\mu(x,u) = \ind\{ x = x^\bullet, u = u^\bullet \}$ for some fixed $(x^\bullet\,, u^\bullet) \in \state \times \U$, so that $\langle \mu , H \rangle = H(x^\bullet , u^\bullet)$ for any $H$. 

With $\gamma =1$, the fixed point equation \eqref{e:DCOE-H} is very similar to the fixed point equation that appears in the average cost Q-learning formulation of \cite{aboberbor01},  though the motivations are different:  the prior work is devoted to Q-learning algorithm for the average cost criterion, while the present paper concerns reliable algorithms in the discounted cost setting.

 

Define  $\tilH^*(x,u) \eqdef Q^*(x,u) -  \langle \mu \,, Q^* \rangle $, which by  \eqref{e:tilQ} can be expressed
\[
\tilH^*(x,u) = \tilQ^*(x,u) -  \langle \mu \,, \tilQ^* \rangle 
\]
It follows that $\tilH^*$ is uniformly bounded in $\gamma$, $x$, and $u$ \cite{put14,ber12a}.
The relationship (i) in \Proposition{t:Hstar} is immediate from the definitions.  
Part (ii) implies that $H^*$ is uniformly bounded over $\gamma\in [0,1)$.
Observe that \eqref{e:HQ} implies that   $Q^*$ can be recovered from $H^*$ and $\mu$. 
 
\begin{proposition}
\label{t:Hstar}
Under (Q1)--(Q2),
the solution $H^*$ to \eqref{e:DCOE-H} is unique, and satisfies:
\begin{romannum}
\item  $\displaystyle H^*(x, u) = Q^*(x,u)  - k$, with
\begin{equation}
k = \frac{\delta}{ 1+\delta -\gamma}  \langle \mu \,, Q^* \rangle = \frac{\delta}{ 1 -\gamma}  \langle \mu \,, H^* \rangle 
\label{e:HQ}
\end{equation}
\smallskip
\item  $\displaystyle H^*(x, u) = \tilH^*(x,u)   + \eta^*/\delta + o(1)$,    where $o(1)\to 0$ as $\gamma\uparrow 1$.
\end{romannum}  
\qed
\end{proposition}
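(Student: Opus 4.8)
The plan is to solve \eqref{e:DCOE-H} explicitly by the ansatz $H = Q^* - k$ with $k$ a scalar, using the known identity \eqref{e:DCOE-Q} for $Q^*$, and then to obtain uniqueness by reducing \eqref{e:DCOE-H} to an ordinary $\gamma$-discounted fixed-point equation once the scalar $\langle\mu,H\rangle$ has been frozen. Part (ii) will then fall out as a one-line limit computation using the decomposition \eqref{e:tilQ}.

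First I would establish (i). Substituting $H = Q^* - k$ into the right-hand side of \eqref{e:DCOE-H}, and using that a constant shift commutes with the pointwise minimum (so $\uH = \uQ^* - k$) and that $\langle\mu,\cdot\rangle$ is linear and returns the constant on a constant function, the right-hand side becomes $c(x,u) + \gamma P_u\uQ^*(x) - \gamma k - \delta\langle\mu,Q^*\rangle + \delta k$, which by \eqref{e:DCOE-Q} equals $Q^*(x,u) - \gamma k - \delta\langle\mu,Q^*\rangle + \delta k$. Matching with $H(x,u) = Q^*(x,u) - k$ forces $(1+\delta-\gamma)k = \delta\langle\mu,Q^*\rangle$, i.e. $k = \tfrac{\delta}{1+\delta-\gamma}\langle\mu,Q^*\rangle$, well defined since $1+\delta-\gamma>0$. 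This exhibits a solution of the asserted form. The second expression for $k$ follows by eliminating $\langle\mu,Q^*\rangle$: taking $\langle\mu,\cdot\rangle$ of $H^* = Q^* - k$ gives $\langle\mu,H^*\rangle = \langle\mu,Q^*\rangle - k$, and inserting this into $(1+\delta-\gamma)k = \delta\langle\mu,Q^*\rangle$ yields $(1-\gamma)k = \delta\langle\mu,H^*\rangle$.

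For uniqueness, let $H$ be any solution of \eqref{e:DCOE-H} and set $c_0 \eqdef \langle\mu,H\rangle$. Then $H$ solves $H(x,u) = \bigl(c(x,u) - \delta c_0\bigr) + \gamma P_u\uH(x)$, the discounted Bellman equation for the modified cost $c - \delta c_0$; since $\gamma<1$ the associated dynamic programming operator is a sup-norm contraction, so this equation has a unique solution, and the constant-shift computation above identifies it as $H = Q^* - \tfrac{\delta c_0}{1-\gamma}$. Imposing self-consistency, $c_0 = \langle\mu,H\rangle = \langle\mu,Q^*\rangle - \tfrac{\delta c_0}{1-\gamma}$ determines $c_0$ uniquely, hence $H$; this recovers $H^*$ together with the two formulae for $k = \delta c_0/(1-\gamma)$. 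Finally (ii): from (i), $H^* - \tilH^* = \langle\mu,Q^*\rangle - k = \tfrac{1-\gamma}{1+\delta-\gamma}\langle\mu,Q^*\rangle$, independent of $(x,u)$, and inserting \eqref{e:tilQ}, $Q^* = \tilQ^* + \tfrac{\eta^*}{1-\gamma}$, gives $H^* - \tilH^* = \tfrac{1-\gamma}{1+\delta-\gamma}\langle\mu,\tilQ^*\rangle + \tfrac{\eta^*}{1+\delta-\gamma}$; as $\gamma\uparrow 1$ the first term tends to $0$ because $\tilQ^*$ (hence $\langle\mu,\tilQ^*\rangle$) is uniformly bounded in $\gamma$ while $\tfrac{1-\gamma}{1+\delta-\gamma}\to 0$, and the second term tends to $\eta^*/\delta$ since $\eta^*$ is independent of $\gamma$, so $H^*(x,u) = \tilH^*(x,u) + \eta^*/\delta + o(1)$ with the $o(1)$ in fact uniform in $(x,u)$.

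The only step that requires genuine care is the uniqueness argument: one must freeze the scalar $\langle\mu,H\rangle$ before invoking the contraction property of the discounted operator, and then close the loop through the self-consistency equation for $c_0$; everything else is bookkeeping with \eqref{e:DCOE-Q} and \eqref{e:tilQ}, and the uniform boundedness of $\tilQ^*$ that is quoted after \eqref{e:tilQ}.
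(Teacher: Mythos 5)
Your proposal is correct and follows essentially the same route as the paper: exhibit the solution $H=Q^*-k$ by substituting into \eqref{e:DCOE-H} and invoking \eqref{e:DCOE-Q}, then derive (ii) from \eqref{e:tilQ} together with $(1-\gamma)\langle \mu, Q^*\rangle \to \eta^*$ as $\gamma\uparrow 1$. The paper's proof is terser and leaves uniqueness implicit; your argument of freezing $c_0=\langle\mu,H\rangle$, using the sup-norm contraction of the ordinary discounted operator, and closing with the self-consistency equation for $c_0$ is a correct and welcome filling-in of that step.
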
  
\begin{proof}
The proof of (i) follows from \eqref{e:DCOE-H} and \eqref{e:DCOE-Q}. 
This further implies
\[
\begin{aligned} 
H^*(x, u)& = Q^*(x,u)  - \Bigl( 1 -  \frac{1 -\gamma}{ 1+\delta -\gamma}  \Bigr)  \langle \mu \,, Q^* \rangle
\\
& = \tilH^*(x,u)  +  \frac{1}{ 1+\delta -\gamma}  ( 1 -\gamma)  \langle \mu \,, Q^* \rangle
\end{aligned} 
\]
This concludes the proof of (ii), since $ ( 1 -\gamma)  \langle \mu \,, Q^* \rangle \to \eta^*$ as $\gamma\uparrow 1$;  
this well known fact follows from 
\eqref{e:tilQ}   (see also \cite{put14}).   
\end{proof} 

The objective in relative Q-learning is to estimate   $H^*$. Since $Q^*$ and $H^*$ differ only   by a constant, the policy $\phi^*$ defined in \eqref{e:DCOE-Qb} satisfies $\phi^* = \phi^q$, with $q = H^*$ (see \eqref{e:phi_q}).
It is therefore irrelevant whether we estimate $Q^*$ or $H^*$, if we are ultimately interested only in the optimal policy.

We conjecture that estimating $H^*$ results in finite-$n$ error bounds of the form \eqref{e:poly_gamma_discount}, which is uniformly bounded for all $\gamma < 1$ (in sharp contrast to   finite-$n$ bounds for estimating $Q^*$---recall \eqref{e:poly_discount}).    
We establish here that  the asymptotic covariance is uniformly bounded in $\gamma$ under the right choices for $\delta$ and the step-size.


\subsection{Relative Q-learning Algorithm}
\label{s:RelQAlgo}

Consider a linear parameterization for the relative Q-function: $H^\theta(x,u)=   \theta^\transpose  \psi (x,u) $, where $\theta\in\Re^d$ denotes the parameter vector, and $\psi\colon\state\times\U\to\Re^d$ denotes the vector of basis functions. We restrict the discussion here to the tabular case, where the basis functions $\{\psi_i: 1 \leq i \leq d\}$ are the indicator functions defined in \eqref{e:WatkinsBasis}.


The goal in \emph{tabular relative Q-learning} is to find $\theta^*$ such that
\begin{equation}
\begin{aligned}
\barf (\theta^*) & \eqdef \Expect\bigl[ \bigl\{  c(X_n,U_n)   +   \uH^{\theta^*} (X_{n+1})  - \delta \langle \mu \,, H^{\theta^*} \rangle 
\\
&\hspace{0.4in} - H^{\theta^*}(X_n,U_n)   \bigr\}   \psi(X_n, U_n)\bigr] =0
\label{e:eligHalg}
\end{aligned}
\end{equation}
where $\bfmU$ is a non-anticipative input sequence (obtained using a randomized stationary policy $\phi$), $\uH^\theta(x) = \min_u H^\theta (x,u)$, and the expectation is with respect to the steady state distribution of the Markov chain $(\bfmX,\bfmU)$. With the basis functions chosen to be indicator functions \eqref{e:WatkinsBasis}, interpretations similar to \eqref{e:eligQalg_Wat_a}--\eqref{e:eligQalg_Wat_sim} hold, and the objective \eqref{e:eligHalg} can be rewritten as: For each $1\le i\le d$,
\begin{equation}
\begin{aligned}
\barf_i (\theta^*) & = \Big[ c(x^i , u^i) + \gamma P_{u^i} \uH^{\theta^*}(x^i)  
\\
& \hspace{0.3in}  - \delta \langle \mu , H^{\theta^*} \rangle  - H^{\theta^*}(x^i, u^i) \Big ]   \pie(x^i, u^i) = 0
\end{aligned}
\label{e:eligRelQalg_Wat_sim}
\end{equation}
where $\pie$ denotes the invariant pmf of $(\bfmX,\bfmU)$.

We once again assume (Q1) and (Q2) of \Section{s:wat_conv_ana} throughout.
Under~(Q1), it is easy to see that $H^{\theta^*}$ that solves \eqref{e:eligRelQalg_Wat_sim} is identical to the optimal relative Q-function in \eqref{e:DCOE-H}. Assumption~(Q2) implies existence of $\epsy>0$ such that
\begin{equation}
\phi^*(x) = \argmin_{u\in\U} H^\theta(x,u) \,, \qquad \|H^\theta - H^*\|<\epsy\,
\label{e:phi_star_H}
\end{equation}


As in \Section{sec:q:watkins}, there are many flavors of relative Q-learning algorithm that are possible. We restrict our discussion here to the \emph{asynchronous relative Q-learning} algorithm, which requires access to a single sample path of the Markov chain $(\bfmX,\bfmU)$. Extension of the results and discussion to other flavors of the algorithm is straightforward.

\textbf{Asynchronous Relative Q-learning:}

The asynchronous algorithm is a direct application of SA to solve \eqref{e:eligHalg}:
For initialization $\theta_0 \in\Re^d$, define the sequence of estimates $\{\theta_n: n \geq 0 \}$ recursively:
\begin{equation}
\begin{aligned}
\theta_{n+1} & = \theta_n + \alpha_{n+1} \Big [  c(X_n,U_n)   + \gamma      \uH^{\theta_n} (X_{n+1})
\\
& \hspace{0.4in}
- \delta \langle \mu \,, H^{\theta_n} \rangle - H^{\theta_n}(X_n,U_n)  \Big ]   \psi(X_n,U_n)
\label{e:DevMey}
\end{aligned}
\end{equation}
Based on the choice of basis functions \eqref{e:WatkinsBasis}, a single entry of $\theta$ is updated at each iteration, corresponding to the state-input pair $(X_n,U_n)$ observed. By identifying $\theta$ with the estimate $H^\theta$, we can rewrite \eqref{e:DevMey} as
\begin{equation}
\begin{aligned}
H^{{n+1}}\! (X_n, U_n) & \! = \! H^{{n}} (X_n, U_n) \! + \! \alpha_{n+1} \bigl [  c(X_n,U_n)  
\\
& \hspace{-0.2in }\! + \! \gamma  \uH^{{n}} (X_{n+1}) \! - \! \delta \langle \mu , H^{n} \rangle \! - \! H^{n}(X_n,U_n)  \bigr ]
\label{e:DevMey_fam}
\end{aligned}
\end{equation}
With $\alpha_n \!=\!1/n$, the \emph{ODE approximation} of \eqref{e:DevMey_fam} takes the form
\begin{equation}
\begin{aligned}
\ddt h_t(x,u) \! = \! \pie(x,u)  \Big[  c(x,u)   & + \gamma   P_u\uh_t\, (x) 
\\
& - \delta \langle \mu \,, h_t \rangle  - h_t(x,u)  \Big]   
\label{e:RelQODE}
\end{aligned}
\end{equation}
in which $\uh_t(x)=\min_u h_t(x,u)$. 
Based on the discussion in \Section{sec:q:watkins}, a ``more efficient'' relative Q-learning flavor is defined using a particular  state-action dependent step-size \eqref{e:wat_2_ss}.
The ODE approximation \eqref{e:RelQODE} simplifies in this case:
\begin{equation}
\ddt h_t(x,u) =  c(x,u)   + \gamma   P_u\uh_t\, (x) - \delta \langle \mu \,, h_t \rangle  - h_t(x,u)
\label{e:RelQODE_Gain}
\end{equation}
Henceforth we restrict  discussion to the relative Q-learning algorithm with a scaling of this specific step-size:   $  \alpha_n(x, u) = g  \cdot \big[n(x, u) \big]^{-1} $ with $g>0$.    We initially assume $g=1$.   

%

\subsection{Stability and Convergence of Relative Q-learning}
\label{sec:conv_rel_q}

Convergence of the algorithm holds under mild conditions:
 
\begin{theorem}[Stability \& Convergence]
\label{e:HbddAndConverges}
Consider the relative Q-learning algorithm \eqref{e:DevMey_fam} with step-size $\alpha_n(x,u)$ satisfying \eqref{e:wat_2_ss}.
 Then,   
$
\lim_{n \to \infty} H^n = H^*$, a.s., for each initial condition.
\qed
\end{theorem}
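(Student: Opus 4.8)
The plan is to run the ODE method for stochastic approximation, mirroring the template used in the excerpt for ordinary tabular Q-learning: establish global asymptotic stability of the governing ODE, then invoke the Borkar--Meyn theorem \cite{bormey00a}. With the state--action dependent step-size \eqref{e:wat_2_ss}, and under (Q1) --- so that each pair $(x,u)$ is visited infinitely often and the empirical frequencies $n(x,u)/n$ converge to $\pie(x,u)>0$ --- the recursion \eqref{e:DevMey_fam} is an asynchronous SA recursion whose limiting ODE is \eqref{e:RelQODE_Gain}. Writing $\mathcal{T}(h)(x,u)\eqdef c(x,u)+\gamma P_u\uh(x)$, that ODE reads $\ddt h_t = \mathcal{T}(h_t)-h_t-\delta\langle\mu,h_t\rangle\one$. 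First I would record the routine ingredients: the right-hand side is globally Lipschitz (the map $h\mapsto\uh$ is nonexpansive in $\|\cdot\|_\infty$ and the remaining terms are affine); the martingale-difference noise $\gamma[\uH^{\theta_n}(X_{n+1})-P_{U_n}\uH^{\theta_n}(X_n)]\psi(X_n,U_n)$ has conditional second moment bounded by a constant times $1+\|\theta_n\|^2$ (indicator basis \eqref{e:WatkinsBasis}); and the step-size \eqref{e:wat_2_ss} satisfies the usual summability and comparable-rates conditions on the probability-one event furnished by (Q1). What remains --- and this is the only genuinely non-routine point --- is global asymptotic stability of \eqref{e:RelQODE_Gain} together with that of its scaled vector field at infinity.

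The key device is a scalar change of variables that cancels the rank-one coupling term $\delta\langle\mu,\cdot\rangle\one$ and reduces the flow to that of ordinary Q-learning. Given a solution $h_t$ of \eqref{e:RelQODE_Gain}, define the scalar $s_t$ by $\dot s_t = -(1-\gamma)s_t+\delta\langle\mu,h_t\rangle$ with $s_0=0$, and set $q_t\eqdef h_t+s_t\one$. Since $\uq_t=\uh_t+s_t$ and $\langle\mu,\one\rangle=1$, a one-line computation gives $\ddt q_t = \mathcal{T}(q_t)-q_t$ --- so $q_t$ is the solution of the ordinary Q-learning ODE \eqref{e:QODEW_Gain} from $q_0=h_0$ --- while the scalar equation becomes $\dot s_t = -(1-\gamma+\delta)s_t+\delta\langle\mu,q_t\rangle$. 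Because $\mathcal{T}$ is a $\gamma$-contraction in $\|\cdot\|_\infty$, \eqref{e:QODEW_Gain} is globally asymptotically stable at $Q^*$ (the fact recalled in \Section{s:wat_conv_ana}, \cite{borsou97a}), hence $\langle\mu,q_t\rangle\to\langle\mu,Q^*\rangle$; the $s_t$ equation is then an exponentially stable linear scalar system driven by a convergent input, so $s_t\to k$ with $k=\tfrac{\delta}{1+\delta-\gamma}\langle\mu,Q^*\rangle$ as in \eqref{e:HQ}, and therefore $h_t=q_t-s_t\one\to Q^*-k\one=H^*$ by \Proposition{t:Hstar}(i). Lyapunov stability of $H^*$ follows from that of $Q^*$ and of the scalar system, so \eqref{e:RelQODE_Gain} is globally asymptotically stable at $H^*$. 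Running the identical argument with the cost set to zero --- i.e.\ for $\barf_\infty(h)=\gamma P_u\uh-h-\delta\langle\mu,h\rangle\one$, whose companion ODE $\dot q=\gamma P_u\uq-q$ is globally asymptotically stable at $0$ --- shows that the scaled field generates an ODE that is globally asymptotically stable at the origin.

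Given these ingredients, the Borkar--Meyn theorem \cite{bormey00a}, in the asynchronous form appropriate to the step-size \eqref{e:wat_2_ss} (cf.\ \cite{aboberbor01,tsi94a}), first yields $\sup_n\|\theta_n\|<\infty$ a.s., and then a.s.\ convergence of $\theta_n$ to the unique globally asymptotically stable equilibrium $H^*$; equivalently $H^n\to H^*$ a.s. I expect the stability step of the second paragraph to be the crux of the argument: the non-smooth $\min$ together with the global term $\langle\mu,\cdot\rangle\one$ makes a direct Lyapunov construction awkward, whereas the change of variables reduces it to a known property of ordinary Q-learning --- and, in contrast to the average-cost formulation of \cite{aboberbor01}, it is precisely the strict discounting $\gamma<1$ that makes the companion operator $\mathcal{T}$ a genuine contraction and keeps the argument short.
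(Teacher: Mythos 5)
Your proposal is correct and follows the paper's high-level template---global asymptotic stability of the ODE \eqref{e:RelQODE_Gain} followed by the Borkar--Meyn machinery of \cite{bormey00a} for a.s.\ boundedness and convergence---but the stability step is carried out by a genuinely different device. The paper works with the $h$-ODE directly: it shows that the operator $\tilT$ in \eqref{e:TilT} (which already contains the $-\delta\langle\mu,\cdot\rangle$ term) is a $\gamma$-contraction in the span semi-norm (\Lemma{t:span-norm-contraction}), runs a variation-of-constants/Gr\"onwall argument to obtain $\|\tilh_t\|_S\le e^{-(1-\gamma)t}\|\tilh_0\|_S$ (\Proposition{t:span-norm-stability}), and then separately shows that the scalar $r_t=\langle\mu,\tilh_t\rangle$ satisfies a one-dimensional linear ODE with rate $-(1-\gamma+\delta)$ driven by an exponentially vanishing forcing term (\Proposition{t:constant-stability}); the two pieces combine to give global exponential stability at $H^*$. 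You instead cancel the rank-one coupling through the dynamic change of variables $q_t=h_t+s_t\one$, $\dot s_t=-(1-\gamma)s_t+\delta\langle\mu,h_t\rangle$, which maps the $h$-flow onto the ordinary Q-learning ODE \eqref{e:QODEW_Gain} plus an exponentially stable scalar equation, and then import the known stability of \eqref{e:QODEW_Gain}; your computation checks out ($\uh_t=\uq_t-s_t$ and $\langle\mu,\one\rangle=1$ make the cancellation exact, and $s_t\to k$ recovers precisely the constant in \eqref{e:HQ}, so $h_t\to H^*$ by \Proposition{t:Hstar}(i)). Your route is shorter because it reuses the ordinary Q-learning result; the paper's route is self-contained and yields the explicit span-seminorm rate $e^{-(1-\gamma)t}$ for the relative ODE itself, together with the policy estimate \eqref{e:tilphit}. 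Both arguments treat the scaled-ODE-at-infinity condition identically (set $c\equiv 0$), which you state explicitly and the paper leaves implicit.

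One point to tighten: the sentence ``Lyapunov stability of $H^*$ follows from that of $Q^*$ and of the scalar system'' is too quick, because the companion trajectory starts at $q_0=h_0$, which for $h_0$ near $H^*=Q^*-k\one$ is at distance roughly $k$ from $Q^*$; stability of the equilibrium $Q^*$ by itself says nothing about such initial conditions. The fix is standard and stays within your framework: for $\dot q=\mathcal{T}(q)-q$ with $\mathcal{T}$ a $\gamma$-contraction in $\|\cdot\|_\infty$, any two solutions satisfy $\|q_t-q_t'\|_\infty\le e^{-(1-\gamma)t}\|q_0-q_0'\|_\infty$ (the same variation-of-constants/Gr\"onwall argument the paper uses in the span semi-norm). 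Applying this to your $q_t$ and to the companion trajectory generated by the constant solution $h_t'\equiv H^*$, and bounding the corresponding scalar difference $|s_t-s_t'|$ by the driven stable scalar ODE, gives $\|h_t-H^*\|_\infty\le 2e^{-(1-\gamma)t}\|h_0-H^*\|_\infty$, i.e.\ global exponential stability, after which the rest of your argument goes through unchanged.
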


The proof of the theorem follows from  \cite[Theorems~2.1 and~2.2]{bormey00a},   which tells us that stability of the ODE \eqref{e:RelQODE_Gain} implies firstly that
\[
\sup_n \sup_{x,u} H^n(x,u)  < \infty \qquad a.s.
\]
and then convergence follows from more well known arguments.   Global asymptotic stability of the ODE is established in   \Proposition{t:ODEisstable}.  
A martingale noise assumption is imposed on the SA recursions considered in \cite{bormey00a,bor20a}  (it is argued that the stability result holds for more general Markovian noise).   This extension is not required to prove \Theorem{e:HbddAndConverges}, as we can cast the relative Q-learning algorithm precisely within the setting of  \cite{bormey00a}.


The algorithm in \eqref{e:DevMey_fam}, with step-size rule \eqref{e:wat_2_ss} can be rewritten as:
\begin{equation}
\begin{aligned}
& H^{{n+1}}(x, u) = H^{{n}} (x, u) 
\\
&\hspace{0.2in} + \alpha_{n+1} (x,u) [ \barf(H^n, X_n, U_n ; x,u) 
+ \Delta_{n+1} (x \,, u)    ] 
\vspace{-0.3in}
\label{e:DevMey_fam_mds}
\end{aligned}
\end{equation}
where 
\begin{equation*}
\barf_{H^n} \! (X_n, U_n ; x,u)  \! = \! \big [  \tilT H^n \! (x , u) \! - \! H^{n} \! (x, u) \big]     \ind\{X_n \! =\!  x, U_n \! = \! u\}
\end{equation*}
and for any $H$,
\begin{equation}
\begin{aligned}
\tilT H (x , u) \eqdef c(x,u) + \gamma P_u \uH (x)  - \delta \langle \mu , H \rangle            
\end{aligned} 
\label{e:TilT}
\end{equation}
and $\{\Delta_n\}$ is the noise sequence: $\Delta_{n+1}  (x,u) = $
\begin{equation}
\begin{aligned}
\gamma  \Big(  \uH^{{n}}   (X_{n + 1})   -  P_u \uH^n   (x) \Big)  \ind\{ X_n  =  x , U_n  =  u  \}
\label{e:DeltaH}
\end{aligned}
\end{equation}
The recursion \eqref{e:DevMey_fam_mds} is stochastic approximation with Markovian noise, as assumed in  \cite{bormey00a}.

For the purpose of analysis, it is best to visualize the algorithm   \eqref{e:DevMey_fam_mds} with step-size rule \eqref{e:wat_2_ss} as ``$d$ parallel stochastic approximation algorithms'', one for each state-action pair $(x,u)$. If a particular $(X_n\,, U_n)$ is observed in the $n^{\text{th}}$ iteration, then the corresponding $H$-value is updated, with the rest of the $H$-values left unchanged.

The martingale difference property is expressed as follows:  for each $(x,u) \in \state \times \U$,
\begin{equation}
\Expect [\Delta_{n+1} (x,u) | \clF_n] = 0
\label{e:DeltaMDS}
\end{equation}
where $\clF_n = \sigma (X_m, U_m :  m \leq n)$.   A second assumption of  \cite{bormey00a} also holds:    for some constant $K>0$,
\begin{equation}
\Expect [ \| \Delta_{n+1} (x,u) \|^2 | \clF_n] \leq K (1 + \| H^n \|^2)
\label{e:DeltaMDSbdd}
\end{equation}



%
%
%

\subsection{Convergence Rate of Relative Q-learning}
\label{sec:ConvRateRelQ}

We now analyze the asymptotic covariance of the relative Q-learning algorithm \eqref{e:DevMey_fam} that approximates the ODE \eqref{e:RelQODE_Gain}. Following along the lines of analysis in \Section{s:wat_conv_ana}, the covariance analysis requires two ingredients:  identification of  the noise covariance  $\Sigma_\Delta$ in  \eqref{e:SigmaDelta}, and examination of  the linearization of the ODE \eqref{e:RelQODE_Gain}.  Recall that a finite asymptotic covariance depends on properties of the eigenvalues of the linearization matrix $
A = \partial_\theta \barf(\theta) \big|_{\theta = \theta^*}$. 

As for the first ingredient, it follows from \eqref{e:DeltaH} that  the noise covariance  is a diagonal matrix, with $\Sigma_\Delta^{(  i,i  )}   \! = \!   $
\begin{equation}
\gamma^2  \Expect \Big [ \! \big(   \uH^{*}   (X_{n + 1})   -    P_{u^i} \uH^*   (x^i)   \big)^2   \mid   (X_n,   U_n)   =   (x^i,   u^i ) \!  \Big]
\label{e:SigmaDeltaH}
\end{equation} 
This is identical to the noise covariance in Watkins' algorithm:
\begin{lemma}
\label{t:SigmaDeltaEq}
The noise covariance matrix $\Sigma_\Delta^q$ for the Q-learning algorithm (defined in \eqref{e:SigmaDeltaQ}), and $\Sigma_\Delta^h$ for the relative Q-learning algorithm (defined in \eqref{e:SigmaDeltaH}) are identical.   
\end{lemma}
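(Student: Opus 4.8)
The plan is to reduce both covariance matrices to a common expression, using the fact --- established in \Proposition{t:Hstar}(i) --- that $H^*$ and $Q^*$ differ only by a \emph{constant}: $H^*(x,u) = Q^*(x,u) - k$, where $k$ does not depend on $(x,u)$. The first step is to note that subtracting a constant commutes with the pointwise minimization over $u$, so that
\[
\uH^*(x) = \min_u H^*(x,u) = \min_u \big( Q^*(x,u) - k \big) = \uQ^*(x) - k = V^*(x) - k ,
\]
where the last equality uses the identity $\uQ^* = V^*$ already invoked between the two lines of \eqref{e:SigmaDeltaQ}.

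The second step is to observe that $P_{u^i}$ is a stochastic matrix, hence fixes constants, so that $P_{u^i}\uH^*(x^i) = P_{u^i}V^*(x^i) - k$. Subtracting the two displays, the constant $k$ cancels:
\[
\uH^*(X_{n+1}) - P_{u^i}\uH^*(x^i) = V^*(X_{n+1}) - P_{u^i}V^*(x^i) .
\]
Squaring and taking the conditional expectation that defines $\Sigma_\Delta^{h,(i,i)}$ in \eqref{e:SigmaDeltaH} therefore produces exactly the expression for $\Sigma_\Delta^{q,(i,i)}$ appearing on the second line of \eqref{e:SigmaDeltaQ}. Since both $\Sigma_\Delta^q$ and $\Sigma_\Delta^h$ are diagonal matrices (as noted just before \eqref{e:SigmaDeltaQ}, and just after \eqref{e:DeltaH}, respectively), equality of all $d$ diagonal entries yields $\Sigma_\Delta^q = \Sigma_\Delta^h$.

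There is essentially no real obstacle here: the whole argument rests on the scalar offset $k$ being annihilated both by $\min_u$ and by the centering built into the martingale difference $\Delta_{n+1}$ of \eqref{e:DeltaH}. The only point requiring a moment's care is to confirm that $k$ in \Proposition{t:Hstar}(i) is genuinely independent of the state-action pair --- which is precisely what makes it invisible to the noise covariance --- and to check that the two flavors of Q-learning and relative Q-learning being compared use the same randomized stationary policy $\phi$ and hence the same steady-state conditional law of $X_{n+1}$ given $(X_n,U_n)=(x^i,u^i)$, namely $P_{u^i}(x^i,\,\cdot\,)$.
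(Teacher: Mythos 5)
Your proposal is correct and follows essentially the same route as the paper: both invoke \Proposition{t:Hstar}(i) to write $H^*$ and $Q^*$ as differing by a constant, observe that this constant cancels in the centered difference $\uH^*(X_{n+1}) - P_{u^i}\uH^*(x^i)$ (since $\min_u$ and the stochastic matrix $P_{u^i}$ both commute with adding constants), and conclude equality of the diagonal entries of the two noise covariance matrices. Your write-up is, if anything, slightly more careful in spelling out why the constant passes through the minimization and the conditional expectation.
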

\begin{proof}
The proof is a direct application of \Prop{t:Hstar}:  with $\kappa_\gamma = \delta  \langle \mu \,, H^* \rangle/( 1 - \gamma )$ we obtain, for each $1 \leq i \leq d$,
\[
\begin{aligned}
\Sigma_\Delta^{q~(i,i)} \! & = \! \gamma^2  \Expect \Big [ \! \big( \uQ^{*} \! (X_{n+1}) \! - \!  P_{u^i} \uQ^* \! (x^i) \big)^2 \! \mid \! X_n \! = \! x^i, U_n \! = \! u^i \! \Big]
\\
& \hspace{-0.4in} = \! \gamma^2  \Expect \Big [ \! \big( \uH^{*} \! (X_{n+1}) \! + \! \kappa_\gamma \! - \!  P_{u^i} \uH^* \! (x^i) \big)^2 \! - \! \kappa_\gamma \! \mid \! X_n \! = \! x^i, U_n \! = \! u^i \! \Big]
\\
& \hspace{-0.4in} = \Sigma_\Delta^{h~(i,i)}
\end{aligned}
\]
\end{proof}
We henceforth denote $\Sigma_\Delta = \Sigma_\Delta^q = \Sigma_\Delta^h$.


We turn next to  the linearization of the ODE \eqref{e:RelQODE_Gain} at its equilibrium:  this is justified under Assumption~{(Q2)}, which implies the existence of $\epsy>0$ such that \eqref{e:phi_star_H} holds.  The following result is a direct analog of \Lemma{t:WatkinsLin} for the relative Q-learning algorithm.
\begin{lemma}
\label{t:DevMeyLin}
Under Assumption~(Q2), when $\| \tilh_t\| <\epsy$, with $\epsy> 0$ used in \eqref{e:phi_star_H}, the ODE \eqref{e:RelQODE_Gain}  simplifies to 
\[
\ddt h_t =   -  [I - \gamma P S_{\phi^*} + \delta \cdot \one \otimes \mu ]   h_t  -    b
\]
where    $b(x,u) = - c(x,u)$.
\qed
\end{lemma}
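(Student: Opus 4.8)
The plan is to mirror the proof of \Lemma{t:WatkinsLin}(ii), using the fact that the only nonlinearity in the ODE \eqref{e:RelQODE_Gain} is the pointwise minimization $\uh_t(x)=\min_u h_t(x,u)$, and that this minimization is ``frozen'' once $h_t$ is sufficiently close to $H^*$. First I would recall that $\tilh_t = h_t - H^*$, so the hypothesis $\|\tilh_t\|<\epsy$ is exactly $\|h_t - H^*\|<\epsy$. By \eqref{e:phi_star_H} — which is a consequence of Assumption~(Q2) — this implies $\phi^*(x)=\argmin_{u\in\U}h_t(x,u)$ for every $x\in\state$, and hence $\uh_t(x)=h_t(x,\phi^*(x))=S_{\phi^*}h_t\,(x)$, with $S_{\phi^*}$ the substitution matrix of \Lemma{t:PSphi}.

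Next I would substitute this identity into \eqref{e:RelQODE_Gain}. The term $\gamma P_u\uh_t(x)$ becomes $\gamma P_u S_{\phi^*}h_t\,(x)$; collecting over all state--action pairs $(x,u)$ and using the matrix interpretations of \Lemma{t:PSphi} (so that $P$ has $\nd\cdot\ninp$ rows and $\nd$ columns and $S_{\phi^*}$ has $\nd$ rows and $\nd\cdot\ninp$ columns, making $PS_{\phi^*}$ a $d\times d$ matrix), this equals the $(x,u)$ component of $\gamma P S_{\phi^*}h_t$. The penalty term $\delta\langle\mu,h_t\rangle = \delta\,\mu^\transpose h_t$ is constant in $(x,u)$, so as a $d$-vector it equals $\delta\,(\one\otimes\mu)\,h_t$. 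Writing $c$ for the column vector $(c(x,u))$, the ODE on the region $\|\tilh_t\|<\epsy$ reads
\[
\ddt h_t = c + \gamma P S_{\phi^*}h_t - \delta(\one\otimes\mu)h_t - h_t = -\bigl[I-\gamma P S_{\phi^*}+\delta\,\one\otimes\mu\bigr]h_t - b,
\]
with $b=-c$, which is exactly the claimed form.

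There is essentially no hard step: the only point requiring care is the justification that the minimizing policy stays equal to $\phi^*$ along the entire segment of the trajectory on which $\|\tilh_t\|<\epsy$, which is precisely what \eqref{e:phi_star_H} supplies, together with the dimensional bookkeeping for $P$, $S_{\phi^*}$, and the rank-one matrix $\one\otimes\mu$. Because $\uh_t$ is an affine (linear-plus-constant) function of $h_t$ on this region, the resulting ODE is already linear, so it coincides with its own linearization about the equilibrium $H^*$ and no Taylor expansion is needed. The full details parallel those given for \Lemma{t:WatkinsLin} in \Appendix{sec:Q_appendix}.
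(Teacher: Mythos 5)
Your proposal is correct and follows essentially the same route as the paper: the paper states \Lemma{t:DevMeyLin} as a direct analog of \Lemma{t:WatkinsLin}, whose appendix proof uses exactly your key step of freezing the minimizer via Assumption~(Q2) (here \eqref{e:phi_star_H}), substituting $\uh_t = S_{\phi^*} h_t$, and rewriting the ODE in matrix form, with the only addition being the rank-one term $\delta\,\one\otimes\mu$ accounting for $\delta\langle\mu,h_t\rangle$, which you handle correctly.
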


In \Lemma{t:DevMeyLin}, $\one \in \Re^d$ is viewed as a column vector with each component $\one_i = 1$, $1 \leq i \leq d$, and $\otimes$ denotes the outer product.
The lemma provides a simple expression for the linearization matrix:
\begin{equation}
A = - [I - \gamma P S_{\phi^*} + \delta \cdot \one \otimes \mu ]
\label{e:Amatrix}
\end{equation}


In addition to (Q1) and (Q2), we   impose the following additional assumption for the convergence rate analysis:
\assume{(Q3)} The Markov chain with transition matrix $PS_{\phi^*}$ is uni-chain: the eigenspace corresponding to the eigenvalue $\lambda_1 = 1$ is one-dimensional.  
\qed

Denote  
\begin{equation}
\rho^* = \max \{ \text{Re}( \lambda_i ) :  i \ge 2   \}    
\label{e:rhostar}
\end{equation}
where the maximum is over all eigenvalues of  $PS_{\phi^*}$ except $\lambda_1=1$.  
Under (Q3) we have  $\rho^*<1$, and in fact  $\rho^*<0$ is possible.
Let $\rho$ denote the magnitude of the second largest eigenvalue of $PS_{\phi^*}$:
\begin{equation}
\rho = \max\{  |\lambda_i| : \lambda_i\neq 1 \}
\label{e:rho}
\end{equation}
The scalar $\rho$ is also known as the \emph{mixing rate} of the Markov chain $(\bfmX\,, \bfmU)$, with the input sequence $\bfmU$ defined by $\phi^*$,  and $1-\rho$ is the \defn{spectral gap} of the corresponding transition matrix.    While $\rho^* < 1$  is always true under (Q3),   this does not exclude the possibility that  $\rho = 1$ (i.e., there is no spectral gap). We have an obvious bound:
\begin{lemma}
The quantities $\rho$ and $\rho^*$ defined in \eqref{e:rhostar} and \eqref{e:rho} satisfy
$
\rho \leq \rho^* 
$.
\end{lemma}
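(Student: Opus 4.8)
My first step is to test the printed inequality $\rho\le\rho^*$ against the definitions themselves, since the two quantities are built from the \emph{same} index set: writing $\{\lambda_i\}$ for the eigenvalues of $PS_{\phi^*}$ with $\lambda_1=1$ excluded, $\rho^*$ in \eqref{e:rhostar} is the maximum of their \emph{real parts} while $\rho$ in \eqref{e:rho} is the maximum of their \emph{moduli}. Because $\text{Re}(z)\le|z|$ for every complex number $z$, applying this to each $\lambda_i$ and then maximizing over $i\ge 2$ gives $\rho^*\le\rho$. Thus the only inequality that is actually available between these two quantities runs \emph{opposite} to the one typeset, and I would read the intended conclusion as $\rho^*\le\rho$, with $\rho$ and $\rho^*$ inadvertently interchanged in the displayed relation.

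I would then confirm this reading against the paragraph immediately preceding the lemma, which is decisive on its own: under (Q3) one always has $\rho^*<1$, whereas $\rho=1$ is explicitly permitted (the ``no spectral gap'' case). In any such instance $\rho=1>\rho^*$, in flat contradiction with $\rho\le\rho^*$. A clean witness is the deterministic three-cycle, whose non-unit eigenvalues are $e^{\pm 2\pi i/3}$: there $\rho=1$ but $\rho^*=-\tfrac12$, so $\rho\le\rho^*$ simply fails. For completeness I would note exactly how much is missing: $\rho\le\rho^*$ combined with the universally valid $\rho^*\le\rho$ would force $\rho=\rho^*$, which requires the modulus-maximizing non-unit eigenvalue to be real and positive—a property not guaranteed under (Q1)--(Q3). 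Hence $\rho\le\rho^*$ as worded is untenable in general, and the genuine content of the lemma is the bound $\rho^*\le\rho$.

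The corrected statement then needs a single line and no hypotheses beyond (Q3): for each eigenvalue $\lambda_i\neq 1$ of $PS_{\phi^*}$ one has $\text{Re}(\lambda_i)\le|\lambda_i|\le\max_j|\lambda_j|=\rho$, and taking the maximum of the left-hand side over $i\ge 2$ yields $\rho^*\le\rho$. There is no analytic obstacle in this argument; the only substantive point—and precisely the reason a proof of $\rho^*\le\rho$ was flagged as the ``reverse'' direction—is that the lemma as printed asserts the opposite of what holds. My proposal is therefore to correct the typo, stating the lemma as $\rho^*\le\rho$, and to supply the one-line proof above, rather than to attempt a proof of $\rho\le\rho^*$, which the surrounding text itself contradicts.
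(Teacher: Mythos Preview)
Your analysis is correct: the printed inequality is a typo, and the intended statement is $\rho^*\le\rho$. The paper offers no proof beyond calling the bound ``obvious,'' and your one-line argument via $\text{Re}(z)\le|z|$ is exactly the intended justification; the sentence immediately following the lemma (``The bound is achieved if there is a real and positive eigenvalue satisfying $\lambda_2=\rho$'') and the introductory remark that $1-\rho^*$ is an \emph{upper bound} on the spectral gap both confirm the direction you give.
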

The bound is achieved if there is a real and positive eigenvalue satisfying  $\lambda_2 = \rho$.   

The following theorem (which is analogous to \Theorem{t:ADgain} for the Q-learning algorithm) is the main result of this subsection. 
\begin{theorem}
\label{t:RelQADgain}
For the asynchronous relative Q-learning algorithm \eqref{e:DevMey_fam} with step-size rule \eqref{e:wat_2_ss}, the matrix $A$ in \eqref{e:Amatrix} is equal to the linearization matrix $A = \partial_\theta \barf(\theta) \big|_{\theta = \theta^*} $.  If we choose $\delta \geq  \gamma (1 -  \rho^*)$, then each eigenvalue of $A$ satisfies $\Real(\lambda(A))\le -(1 - \gamma \rho^*)$.  Consequently, 
\begin{romannum}
\item  The asymptotic covariance   is infinite if $\gamma \rho^* > \half$,  and also $\nu_2^\dagger \Sigma_\Delta \nu_2>0$,  where $\nu_2$ is an eigenvector  of $PS_{\phi^*}$ with eigenvalue satisfying $ \text{Re}( \lambda_2 ) = \rho^*$. 

\item   Suppose that the step-sizes are scaled:  
\begin{equation}
\alpha_n(x, u) =    [ (1 - \gamma \rho^*) \cdot  n(x, u)   ]^{-1} 
\label{e:g_rel_Q}
\end{equation}
Then, the eigenvalue test passes: each eigenvalue $\lambda(A)$ satisfies
\[
\begin{aligned}
\Real (\lambda (A)) 
& \! = \!  - (1-\gamma \rho^* )^{-1}  \Real\big(\lambda \big(  [I \! - \! \gamma PS_{\phi^*} \! - \! \delta \cdot \one \otimes \mu]  \big) \big) 
\\
& \le -1
\end{aligned}
\]
The asymptotic covariance of the resulting algorithm is obtained as a solution to the Lyapunov equation \eqref{e:Lyapg}, with $g = (1-\gamma \rho^*)^{-1}$, and $\Sigma_\Delta$ defined in \eqref{e:SigmaDeltaH}.
\qed  
\end{romannum}
\end{theorem}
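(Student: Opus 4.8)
The plan is to reduce the whole statement to one spectral computation for a rank-one perturbation of $\gamma PS_{\phi^*}$, and then to read off parts (i) and (ii) by substituting the resulting eigenvalue data into \Prop{t:SAnonlinear}. To begin, the identity $A=\partial_\theta\barf(\theta)\big|_{\theta=\theta^*}=-(I-\gamma PS_{\phi^*}+\delta\,\one\otimes\mu)$ is exactly \Lemma{t:DevMeyLin}: in the region $\|\tilh_t\|<\epsy$ of \eqref{e:phi_star_H} the minimization $\uH^\theta(x)=H^\theta(x,\phi^*(x))$ is a fixed linear functional of $\theta$, so the drift of the (de-weighted) ODE \eqref{e:RelQODE_Gain} is the affine map $h\mapsto-(I-\gamma PS_{\phi^*}+\delta\,\one\otimes\mu)h-b$; the state-action-dependent step-size \eqref{e:wat_2_ss} is precisely what suppresses the left factor $\diagpie$ that multiplies the bracket in \eqref{e:RelQODE}, exactly as in \Lemma{t:WatkinsLin}.

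Next I would compute the spectrum of $A=-(I-M)$, where $M:=\gamma PS_{\phi^*}-\delta\,\one\mu^\transpose$ (note $\one\otimes\mu=\one\mu^\transpose$, a rank-one matrix). The two structural facts in play are $PS_{\phi^*}\one=\one$ (stochastic matrix) and $\mu^\transpose\one=1$ (pmf). For $\lambda$ outside $\{\gamma\lambda_i\}$, the matrix determinant lemma gives
\[
\det(\lambda I-M)=\det(\lambda I-\gamma PS_{\phi^*})\bigl(1+\delta\,\mu^\transpose(\lambda I-\gamma PS_{\phi^*})^{-1}\one\bigr),
\]
and since $(\lambda I-\gamma PS_{\phi^*})\one=(\lambda-\gamma)\one$ we have $\mu^\transpose(\lambda I-\gamma PS_{\phi^*})^{-1}\one=(\lambda-\gamma)^{-1}$; combined with $\det(\lambda I-\gamma PS_{\phi^*})=(\lambda-\gamma)\prod_{i\ge2}(\lambda-\gamma\lambda_i)$ (using $\lambda_1=1$) the apparent pole at $\lambda=\gamma$ cancels, leaving the polynomial identity $\det(\lambda I-M)=\bigl(\lambda-(\gamma-\delta)\bigr)\prod_{i\ge2}(\lambda-\gamma\lambda_i)$. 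Hence $\mathrm{spec}(A)=\{-(1-\gamma+\delta)\}\cup\{-(1-\gamma\lambda_i):i\ge2\}$. Under (Q3), $\rho^*<1$, so $1-\gamma\rho^*>0$; the assumption $\delta\ge\gamma(1-\rho^*)$ is exactly $1-\gamma+\delta\ge1-\gamma\rho^*$, while $\Real(\lambda_i)\le\rho^*$ for $i\ge2$ gives $1-\gamma\,\Real(\lambda_i)\ge1-\gamma\rho^*$; together these say $\Real(\lambda(A))\le-(1-\gamma\rho^*)$, with equality attained at $-(1-\gamma\lambda_2)$ where $\Real(\lambda_2)=\rho^*$.

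With this spectrum in hand, part (i) follows by taking $g=1$: if $\gamma\rho^*>\half$ then the eigenvalue $-(1-\gamma\lambda_2)$ has real part $-(1-\gamma\rho^*)>-\half$ and so violates the eigenvalue test of \Prop{t:SAnonlinear}(ii). Its left eigenvector is a left eigenvector $\nu_2$ of $PS_{\phi^*}$ for $\lambda_2$: writing out $\nu^\transpose A=-(1-\gamma\lambda_2)\nu^\transpose$, any $w$ with $w^\transpose PS_{\phi^*}=\lambda w^\transpose$, $\lambda\neq1$, satisfies $w^\transpose\one=0$ (apply both sides to $\one$ and use $PS_{\phi^*}\one=\one$), which annihilates the rank-one term $\delta(\nu^\transpose\one)\mu^\transpose$. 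Since $\Sigma_\Delta$ is diagonal and positive semidefinite, $\nu_2^\dagger\Sigma_\Delta\nu_2>0$ implies $\Sigma_\Delta\nu_2\neq0$, so \Prop{t:SAnonlinear}(ii) applies with $\varrho_0=1-\gamma\rho^*$ and yields that $\Expect[\|\tiltheta_n\|^2]$ decays no faster than $n^{-2(1-\gamma\rho^*)}$, hence $n\Expect[\|\tiltheta_n\|^2]\to\infty$ (as $2(1-\gamma\rho^*)<1$). For part (ii), the step-size \eqref{e:g_rel_Q} is \eqref{e:wat_2_ss} scaled by $g=(1-\gamma\rho^*)^{-1}>0$, so $\mathrm{spec}(gA)=g\cdot\mathrm{spec}(A)$ has every element with real part $\le-g(1-\gamma\rho^*)=-1<-\half$; \Prop{t:SAnonlinear}(i) then gives the CLT with asymptotic covariance the unique solution of the Lyapunov equation \eqref{e:Lyapg} for this $g$ and $\Sigma_\Delta$ from \eqref{e:SigmaDeltaH} (equal to $\Sigma^q_\Delta$ by \Lemma{t:SigmaDeltaEq}).

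The one genuinely non-routine step is the determinant computation in the second paragraph --- recognizing the rank-one structure and arranging the cancellation of the $(\lambda-\gamma)$ factor; everything else is bookkeeping around results already established. One caveat worth flagging: \Prop{t:SAnonlinear} is stated with $\barF$ a strict contraction, which may fail here since $\gamma+\delta$ can exceed $1$; I would handle this exactly as in the proof of \Theorem{t:ADgain}, invoking instead the a.s.\ convergence and global ODE stability furnished by \Theorem{e:HbddAndConverges} (via \Prop{t:ODEisstable}), after which only the local data $(A,\Sigma_\Delta)$ govern the asymptotic covariance and the argument above goes through verbatim.
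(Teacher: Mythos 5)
Your proof is correct, and it reaches the paper's conclusions by a somewhat different route in the one step that matters. The paper's argument (Lemmas~\ref{t:AQ-lemma} and~\ref{t:A-eig-values}) is eigenvector-based: it exhibits $\one$ as a right eigenvector of $A$ with eigenvalue $-(1-\gamma+\delta)$, and shows that left eigenvectors of $A_q$ associated with the remaining eigenvalues are orthogonal to $\one$ (Lemma~\ref{t:orth-lemma}), hence annihilate the rank-one term $\delta\,\one\otimes\mu$ and are shared by $A$; the bound $\Real(\lambda(A))\le-(1-\gamma\rho^*)$ then follows from \Theorem{t:ADgain} together with (Q3) and the choice $\delta\ge\gamma(1-\rho^*)$. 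You instead compute the characteristic polynomial of $A$ outright via the matrix determinant lemma, using $PS_{\phi^*}\one=\one$ and $\mu^\transpose\one=1$ to cancel the $(\lambda-\gamma)$ factor, obtaining $\mathrm{spec}(A)=\{-(1-\gamma+\delta)\}\cup\{-(1-\gamma\lambda_i):i\ge2\}$ exactly, with multiplicities and without any diagonalizability or eigenvector bookkeeping. This is a clean strengthening: it delivers both inclusions between $\mathrm{spec}(A)\setminus\{-(1-\gamma+\delta)\}$ and $\mathrm{spec}(A_q)\setminus\{-(1-\gamma)\}$ in one stroke (the paper's Lemma~\ref{t:AQ-lemma}(ii) states one direction while its proof argues the other), whereas the paper's eigenvector route has the advantage of producing the shared left eigenvectors explicitly, which are reused later (Sections~\ref{sec:BadSigmas} and~\ref{s:diagSigma}). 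For part (i) you recover exactly the needed eigenvector data anyway, by the same orthogonality observation as Lemma~\ref{t:orth-lemma}, and your use of \Prop{t:SAnonlinear} with $\varrho_0=1-\gamma\rho^*$ and the scaling $g=(1-\gamma\rho^*)^{-1}$ in part (ii) matches the paper. Your closing caveat about the strict-contraction hypothesis of \Prop{t:SAnonlinear} is well taken: since $\gamma+\delta$ may exceed $1$, $\barF$ need not be a sup-norm contraction, and patching this through the a.s.\ convergence and global ODE stability of \Theorem{e:HbddAndConverges} and \Prop{t:ODEisstable} is a legitimate (indeed more careful) repair of a point the paper passes over silently.
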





\begin{figure*}[ht]
\includegraphics[width = 1\textwidth]{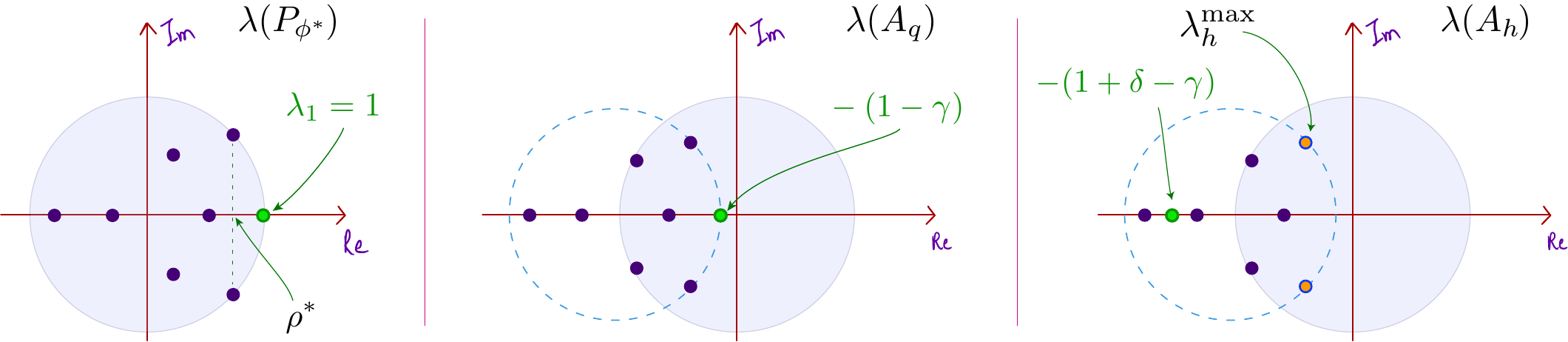}
\caption{Relationship between the eigenvalues of the matrices $PS_{\phi^*}$, $A_q$, and $A$.}
\label{f:Beta}
\end{figure*}


To be clear: \textit{the condition $\rho \! < \! 1$ is not necessary for stability of relative Q-learning,  or uniform boundedness of the asymptotic covariance.}  Consider the example illustrated in  \Fig{f:Beta}. 
The plot of eigenvalues for $PS_{\phi^*}$ shown on the left hand side indicates complex eigenvalues on the unit circle, so that $\rho = 1$. The plots show that $\rho^* < 1$, and therefore, $-(1 - \gamma\rho^*) < - (1 - \gamma)$.
In this case, \Theorem{t:RelQADgain}~(ii) implies that the relative Q-learning algorithm with step-size $\alpha_n = g \cdot [n(x,u)]^{-1}$, $g = - [1 - \gamma \rho^*]^{-1}$ will have finite asymptotic covariance.

We close the section with proof of \Theorem{t:RelQADgain}.

   
{\textbf{Proof of \Theorem{t:RelQADgain}:}} The proof is based on comparing the eigenvalues of the matrix $A$ with the eigenvalues of the linearization matrix that corresponds to the asynchronous Watkins' Q-learning algorithm (recall \Lemma{t:WatkinsLin}~(ii), and Eq.~\eqref{e:A_Qa}):
\begin{equation}
A_q = - [I - \gamma P S_{\phi^*}]
\label{e:AQmatrix}
\end{equation} 

 \begin{lemma}
\label{t:AQ-lemma}
\begin{romannum}
\item 

The matrix $A_q$ is Hurwitz, with all eigenvalues $\lambda$ satisfying $\Real ( \lambda ) \leq -(1-\gamma)$. Furthermore, there exists a single eigenvalue at $\lambda = - (1-\gamma)$, and all other eigenvalues satisfy 
\begin{equation}
\Real ( \lambda (A_q) ) \leq - (1  - \gamma \rho^*)
\label{e:ReLambdaAq}
\end{equation}
where $\rho^* \in [0, 1)$ is defined in \eqref{e:rhostar}.
%
%
%

\item
The vector $\one$ is a right eigenvector of $A$, with eigenvalue
$\lambda_1 = -(1 - \gamma + \delta)$.  
Moreover,  every eigenvalue $\lambda$ of $A$, that is \emph{not} equal to $-(1 - \gamma + \delta)$, is also an eigenvalue of $A_q$, 
with identical left eigenvectors.
\end{romannum}
\end{lemma}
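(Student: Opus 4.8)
Both parts are linear algebra once the spectral structure of the stochastic matrix $PS_{\phi^*}$ is recorded, so I would organize the argument as follows.

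\emph{Part (i).} Since $A_q = -[I - \gamma PS_{\phi^*}]$, its spectrum is the image of $\mathrm{spec}(PS_{\phi^*})$ under the map $\lambda \mapsto -(1-\gamma\lambda)$, with the same generalized eigenspaces. First I would recall the standard facts about $PS_{\phi^*}$: it is a stochastic matrix (by \Lemma{t:PSphi}; concretely $P$ has unit row sums and $S_{\phi^*}$ is a $0/1$ selection matrix with unit row sums, so $PS_{\phi^*}\one = \one$), every eigenvalue lies in the closed unit disc, the eigenvalue $\lambda=1$ is semisimple (no non-trivial Jordan block, because powers of a stochastic matrix are bounded), and under (Q3) its geometric --- hence algebraic --- multiplicity equals one. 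Pushing through $\lambda\mapsto-(1-\gamma\lambda)$: the eigenvalue $1$ yields the single eigenvalue $-(1-\gamma)$ of $A_q$; every other eigenvalue $\lambda_i$ of $PS_{\phi^*}$ has $\Real(\lambda_i)\le\rho^*$ by the definition \eqref{e:rhostar}, so the corresponding eigenvalue of $A_q$ has real part $-(1-\gamma\Real(\lambda_i))\le -(1-\gamma\rho^*)$; and since $\Real(\lambda_i)\le|\lambda_i|\le 1$ with $\gamma<1$, \emph{every} eigenvalue of $A_q$ has real part $\le-(1-\gamma)<0$, so $A_q$ is Hurwitz. Finally $\rho^*\ge0$ because $PS_{\phi^*}$ factors through an $\nd$-dimensional space ($P$ is $\nd\ninp\times\nd$, $S_{\phi^*}$ is $\nd\times\nd\ninp$), hence has rank at most $\nd$, so (when $\ninp\ge2$) it has $0$ as an eigenvalue, forcing $\rho^*\ge0$.

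\emph{Part (ii), the eigenvalue $-(1-\gamma+\delta)$.} This is a one-line computation using $PS_{\phi^*}\one=\one$ and $(\one\otimes\mu)\one=\langle\mu,\one\rangle\,\one=\one$ (as $\mu$ is a pmf): $A\one = -[\,\one-\gamma\one+\delta\one\,] = -(1-\gamma+\delta)\one$, so $\one$ is a right eigenvector with the claimed eigenvalue.

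\emph{Part (ii), the remaining eigenvalues.} Let $\nu\neq0$ be a left eigenvector of $A$ for an eigenvalue $\lambda\ne-(1-\gamma+\delta)$, i.e.\ $\nu^\transpose A=\lambda\nu^\transpose$. Right-multiplying by $\one$ and using $A\one=-(1-\gamma+\delta)\one$ gives $[\lambda+(1-\gamma+\delta)]\,\nu^\transpose\one=0$, hence $\nu^\transpose\one=0$. Substituting this into $\nu^\transpose A = -\nu^\transpose[I-\gamma PS_{\phi^*}]-\delta(\nu^\transpose\one)\mu^\transpose$ annihilates the rank-one term, leaving $\nu^\transpose A_q=\lambda\nu^\transpose$; thus $\lambda\in\mathrm{spec}(A_q)$ with the same left eigenvector $\nu$. \textbf{Main obstacle.} There is no genuine obstacle here --- the content is bookkeeping with the factorization $PS_{\phi^*}$ and the rank-one perturbation $\delta\,\one\otimes\mu$. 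The two places that need mild care are (a) justifying that $\lambda=1$ is an \emph{algebraically} simple eigenvalue of $PS_{\phi^*}$ under (Q3), which uses semisimplicity of peripheral eigenvalues of stochastic matrices (not just the uni-chain property), and (b) the claim $\rho^*\ge0$, which rests on $\mathrm{rank}(PS_{\phi^*})\le\nd$ and is irrelevant in the degenerate single-action case.
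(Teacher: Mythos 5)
Your proof is correct and follows essentially the same route as the paper: part (i) via the spectral map $\lambda\mapsto -(1-\gamma\lambda)$ applied to the stochastic matrix $PS_{\phi^*}$ (which is exactly how the paper's Theorem~\ref{t:ADgain} argument works), and part (ii) via $A\one=-(1-\gamma+\delta)\one$ plus annihilation of the rank-one term $\delta\,\one\otimes\mu$ using orthogonality of the left eigenvector to $\one$. Your direct pairing argument for that orthogonality (right-multiplying $\nu^\transpose A=\lambda\nu^\transpose$ by $\one$) is the same principle the paper delegates to its appendix Lemma~\ref{t:orth-lemma}, applied here more cleanly to $A$ itself, and your added remarks on algebraic simplicity of $\lambda=1$ and on $\rho^*\ge0$ fill in details the paper leaves implicit.
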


\begin{proof}
The proof of (i) follows from the following observations:
\Theorem{t:ADgain}  combined with assumption (Q3) establishes the upper bound
\[
\Real ( \lambda (A_q) ) \leq - (1 - \gamma)
\] 
The column vector $\one$ is an eigenvector, whose eigenvalue coincides with this bound: 
\[
A_q \one = -(1-\gamma) \cdot \one
\]

We now prove (ii).
The first claim follows from these steps:
\[
\begin{aligned}
A \one 
& = - [I - \gamma P S_{\phi^*} + \delta \cdot \one \otimes \mu ] \one
= - (1 - \gamma + \delta) \cdot \one
\end{aligned}
\]
If $\lambda \neq -(1 - \gamma + \delta) $ is an eigenvalue of $A_q$, with corresponding left eigenvector $\nu$, we have:
\[
\begin{aligned}
\lambda 
\nu^\transpose 
=
\nu^\transpose A  & = - \nu^\transpose [I - \gamma P S_{\phi^*} - \delta \cdot \one \otimes \mu ]
\\
&  \overset{(a)}{=} - \nu^\transpose [I - \gamma P S_{\phi^*}]
\\
& = \nu^\transpose  A_q 
\end{aligned}
\]
where $(a)$ follows from the fact that the left eigenvector $\nu$ is orthogonal to $\one$; this result is formalized in \Lemma{t:orth-lemma} of \Appendix{sec:asym_var_appendix}.
\end{proof}

\Lemma{t:A-eig-values} asserts that \eqref{e:ReLambdaAq} holds for \emph{every eigenvalue} in the relative Q-learning algorithm if $\delta$ is greater than or equal to $1-\rho^*$. Note that $\delta = \gamma$ will always satisfy the condition in \Lemma{t:A-eig-values}. 
The proof is immediate from \Lemma{t:AQ-lemma}.


\begin{lemma}
\label{t:A-eig-values}
Suppose we choose $\delta \geq  \gamma (1- \rho^*)$.
Then, each eigenvalue  of the 
linearization matrix $A$  defined in \eqref{e:Amatrix}
satisfies
\begin{equation}
\Real (\lambda(A)) \leq - (1  - \gamma \rho^*)
\label{e:A-eig-values}
\end{equation}
Consequently, the matrix $A$ is Hurwitz, for all $0 < \gamma < 1/\rho^*$.
\qed
\end{lemma}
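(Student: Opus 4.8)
The plan is to obtain the spectrum of $A$ from that of the Watkins linearization matrix $A_q = -[I - \gamma PS_{\phi^*}]$ of \eqref{e:AQmatrix} via \Lemma{t:AQ-lemma}, and then verify the bound \eqref{e:A-eig-values} separately on the one ``extra'' eigenvalue of $A$ and on the eigenvalues that $A$ inherits from $A_q$.

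First I would invoke \Lemma{t:AQ-lemma}(ii): the vector $\one$ is a right eigenvector of $A$ with eigenvalue $\lambda_1 = -(1-\gamma+\delta)$, and every eigenvalue $\lambda$ of $A$ with $\lambda \neq \lambda_1$ is an eigenvalue of $A_q$ whose associated left eigenvector is orthogonal to $\one$ (this orthogonality is precisely what \Lemma{t:orth-lemma} supplies in the proof of that lemma). The key observation is then that such an inherited $\lambda$ cannot equal $-(1-\gamma)$: under (Q3) the transition matrix $PS_{\phi^*}$ is uni-chain, so its Perron eigenvalue $1$ is simple, hence $-(1-\gamma)$ is a simple eigenvalue of $A_q$ whose left eigenvector is proportional to the invariant pmf $\pie$, and $\langle \pie, \one\rangle = 1 \neq 0$ excludes it. By \Lemma{t:AQ-lemma}(i), every remaining eigenvalue of $A_q$ satisfies $\Real(\lambda) \le -(1-\gamma\rho^*)$, so every eigenvalue of $A$ other than $\lambda_1$ does as well.

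It then remains to treat $\lambda_1$ itself: since $\mu$ is a pmf, $A\one = -(1-\gamma+\delta)\one$, and $-(1-\gamma+\delta) \le -(1-\gamma\rho^*)$ is equivalent to $\delta \ge \gamma - \gamma\rho^* = \gamma(1-\rho^*)$, which is exactly the hypothesis. Combining the two cases yields \eqref{e:A-eig-values}; and since $\rho^* < 1$ under (Q3), we have $1-\gamma\rho^* > 0$ for every $\gamma \in (0,1/\rho^*)$ (and for all $\gamma\in(0,1)$ when $\rho^*\le 0$), so $A$ is Hurwitz on that range. The only step that is not a direct substitution into \Lemma{t:AQ-lemma} is ruling out the ``slow'' eigenvalue $-(1-\gamma)$ of $A_q$ from the spectrum of $A$ away from $\lambda_1$; this is where simplicity of the Perron eigenvalue of $PS_{\phi^*}$ and the non-orthogonality of $\pie$ to $\one$ are essential. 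Equivalently, one may note that $A = A_q - \delta\,\one\otimes\mu$ and that $\one\otimes\mu$ has range $\mathrm{span}(\one)$, so $A$ and $A_q$ induce the same linear map on the quotient $\Re^d/\mathrm{span}(\one)$; the eigenvalues of $A$ are then $\lambda_1$ together with those of the quotient map, while the eigenvalues of $A_q$ are $-(1-\gamma)$ together with those of the quotient map, and simplicity of $-(1-\gamma)$ identifies the two lists away from $\lambda_1$.
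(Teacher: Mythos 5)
Your proof is correct and takes the same route as the paper: the paper simply declares the result ``immediate'' from \Lemma{t:AQ-lemma}, and you derive it from that same lemma. The only additional content is that you make explicit the two details the paper leaves implicit --- that the eigenvalues of $A$ inherited from $A_q$ cannot equal the slow eigenvalue $-(1-\gamma)$ (since their common left eigenvectors are orthogonal to $\one$, while under (Q3) the left eigenspace of $P S_{\phi^*}$ at eigenvalue $1$ is spanned by an invariant pmf, which is not orthogonal to $\one$), and that the remaining eigenvalue $-(1-\gamma+\delta)$ obeys $-(1-\gamma+\delta)\le-(1-\gamma\rho^*)$ precisely when $\delta\ge\gamma(1-\rho^*)$.
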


\begin{proof}[Proof of \Theorem{t:RelQADgain}]
\Lemma{t:A-eig-values} proves the first conclusion in \Theorem{t:RelQADgain}: Each eigenvalue of the linearization matrix $A$ of relative Q-learning satisfies $\Real (\lambda(A)) \leq - (1  - \gamma \rho^*)$. The proof of  (i) and~(ii) then follow from \Proposition{t:SAnonlinear}.
\end{proof}

%
%
%

\section{Discussion}
\label{s:disc}



Theorems~\ref{t:ADgain} and~\ref{t:RelQADgain} containx conditions for finite asymptotic covariance of the Q-learning and relative Q-learning algorithms.  Here we provide a more quantitative comparison.   We begin with a coarse comparison, considering the trace of the respective covariance matrices.

\begin{proposition}
\label{t:traceSigmas} 
Denote by $ \Sigma_\theta^q (g)$,   $ \Sigma_\theta^h(g)$,  the asymptotic covariance matrices for Q-learning and relative Q-learning with step-size $\alpha_n = g \cdot [n(x,u)]^{-1}$.    Each is finite for all sufficiently large $g$, and satisfy the following bounds, uniformly in $\gamma$: 
\begin{subequations}
\label{e:traceBdds}
\begin{align}
\begin{split}
\hspace{-0.07in}\min_g  \bigg\{ \! \trace \bigl( \Sigma_\theta^q (g) \bigr)  \! \biggr\} & \! \ge \!  O \bigg( \! \frac{\trace(\Sigma_\Delta^2)}{ (1-\gamma )^2} \! \bigg),\,  
			\ \ \textit{sub.\ to \eqref{e:hvar}}
\label{e:traceBdds-q}
\end{split}
\\
\begin{split}
\hspace{-0.1in}\min_g \bigg\{ \! \trace \bigl( \Sigma_\theta^h (g) \bigr) \! \biggr\} & \! \le \! O \bigg( \! \frac{\trace( \Sigma_\Delta^2)}{(1- \rho^* \gamma )^2} \! \bigg)
\label{e:traceBdds-h}
\end{split}
\end{align}
\end{subequations} 
\end{proposition}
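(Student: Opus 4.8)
The plan is to reduce both estimates to the Lyapunov equation \eqref{e:Lyapg}, which for the two algorithms differs only in the linearization matrix: $A_q=-[I-\gamma PS_{\phi^*}]$ for Q-learning (\Lemma{t:WatkinsLin}(ii)) and $A=-[I-\gamma PS_{\phi^*}+\delta\,\one\otimes\mu]$ for relative Q-learning (\eqref{e:Amatrix}); the noise covariance $\Sigma_\Delta$ is common to both (\Lemma{t:SigmaDeltaEq}) and uniformly bounded over $\gamma\in[0,1)$ (\Proposition{t:Hstar}(ii)). By \Theorem{t:ADgain}, $\Sigma_\theta^q(g)$ is finite whenever $g(1-\gamma)>\tfrac{1}{2}$, and by \Theorem{t:RelQADgain} (with $\delta\ge\gamma(1-\rho^*)$) $\Sigma_\theta^h(g)$ is finite whenever $g(1-\gamma\rho^*)>\tfrac{1}{2}$, which settles the finiteness assertion; on these ranges the covariance solves \eqref{e:Lyapg} with the corresponding $A$.

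For the lower bound \eqref{e:traceBdds-q} I would isolate the slow mode of $A_q$. Let $\nu_1$ be the stationary distribution of $PS_{\phi^*}$, viewed as a column vector, so that $\nu_1^\transpose A_q=-(1-\gamma)\nu_1^\transpose$; note $\nu_1$ is strictly positive and, crucially, independent of $\gamma$. Pre- and post-multiplying \eqref{e:Lyapg} (with $A=A_q$) by $\nu_1^\transpose$ and $\nu_1$ collapses it to the scalar identity $\nu_1^\transpose\Sigma_\theta^q(g)\nu_1=g^2(\nu_1^\transpose\Sigma_\Delta\nu_1)/(2g(1-\gamma)-1)$, valid throughout the stability range $g>1/[2(1-\gamma)]$. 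The map $g\mapsto g^2/(2g(1-\gamma)-1)$ on that interval is minimized at $g=1/(1-\gamma)$ with value $1/(1-\gamma)^2$, so $\nu_1^\transpose\Sigma_\theta^q(g)\nu_1\ge(\nu_1^\transpose\Sigma_\Delta\nu_1)/(1-\gamma)^2$ for every admissible $g$. Since $\Sigma_\theta^q(g)\succeq0$ we have $\trace(\Sigma_\theta^q(g))\ge\nu_1^\transpose\Sigma_\theta^q(g)\nu_1/\|\nu_1\|^2$, hence $\min_g\trace(\Sigma_\theta^q(g))\ge(\nu_1^\transpose\Sigma_\Delta\nu_1)/((1-\gamma)^2\|\nu_1\|^2)$. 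It remains to note that this prefactor is bounded below by a $\gamma$-free positive multiple of $\trace(\Sigma_\Delta^2)$: strict positivity at each $\gamma$ follows from \eqref{e:hvar} and the positivity of $\nu_1$ (the same computation as in the proof of \Theorem{t:Qinfinite}), and $\gamma$-uniformity follows because $\nu_1$ is $\gamma$-free while the $\gamma$-dependence of $\Sigma_\Delta$ is continuous with a positive finite limit as $\gamma\uparrow1$.

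For the upper bound \eqref{e:traceBdds-h} I would fix $\delta=\gamma(1-\rho^*)$ (or any fixed multiple of $\gamma$ with $\delta\ge\gamma(1-\rho^*)$) and $g=(1-\gamma\rho^*)^{-1}$, so that by \Lemma{t:A-eig-values} every eigenvalue of $M:=gA+\tfrac{1}{2}I$ has real part at most $-\tfrac{1}{2}$, and $\Sigma_\theta^h(g)=g^2\int_0^\infty e^{Mt}\Sigma_\Delta e^{M^\transpose t}\,dt$. Taking traces, $\trace(\Sigma_\theta^h(g))\le g^2\,\trace(\Sigma_\Delta)\int_0^\infty\|e^{Mt}\|^2\,dt$, so the crux is to bound $\|e^{Mt}\|\le\kappa\,e^{-t/2}$ with $\kappa$ independent of $\gamma$. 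Here \Lemma{t:AQ-lemma}(ii) does the work: the left eigenvectors of $A$ other than the one for $-(1-\gamma+\delta)$ coincide with those of $A_q$, hence with the left eigenvectors of $PS_{\phi^*}$, which are $\gamma$-free, and the remaining left eigenvector solves $\nu^\transpose(PS_{\phi^*}-\rho^*I)=(1-\rho^*)(\nu^\transpose\one)\mu^\transpose$ once the ratio $\delta/\gamma=1-\rho^*$ is held fixed, so it too is $\gamma$-free. Thus the eigenvector matrix of $M$, and its condition number $\kappa$, do not depend on $\gamma$ (a Jordan block for the chosen $\delta$ would merely add a $\gamma$-free polynomial-in-$t$ prefactor). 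Consequently $\int_0^\infty\|e^{Mt}\|^2\,dt\le\kappa^2$ and $\trace(\Sigma_\theta^h(g))\le\kappa^2\,\trace(\Sigma_\Delta)/(1-\gamma\rho^*)^2$, which is of the form \eqref{e:traceBdds-h} after absorbing the $\gamma$-uniformly bounded noise-dependent factor into the $O(\cdot)$.

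The step I expect to be the main obstacle is exactly the $\gamma$-uniform bound on $\|e^{Mt}\|$: since $M$ is non-normal, a bare eigenvalue bound does not suffice, and without \Lemma{t:AQ-lemma} one would have to separately rule out the eigenvector conditioning of $A$ degenerating as $\gamma\uparrow1$ — which is precisely what does go wrong for the unscaled Q-learning recursion, and what makes \eqref{e:traceBdds-q} a genuine obstruction rather than an artifact of the proof. A secondary point needing care is checking that the slow eigenvector $\nu_1$ in the lower bound genuinely "sees" the noise, i.e.\ that \eqref{e:hvar} forces $\nu_1^\transpose\Sigma_\Delta\nu_1>0$; this uses (Q1)--(Q3) and the strict positivity of $\nu_1$, mirroring the argument behind \Theorem{t:Qinfinite}.
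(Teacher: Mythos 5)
Your proposal is correct, and for the lower bound \eqref{e:traceBdds-q} it is essentially the paper's own route: the paper simply invokes \Proposition{t:SigmaSingleSubspace-a}, whose content is exactly your scalar identity obtained by sandwiching the Lyapunov equation \eqref{e:Lyapg} with the left eigenvector of $A_q$ at $\lambda=-(1-\gamma)$ (which, as you note, is the invariant pmf of $PS_{\phi^*}$), followed by the optimization $g\mapsto g^2/(2g(1-\gamma)-1)$ yielding the minimizer $g_q=(1-\gamma)^{-1}$ and value $(1-\gamma)^{-2}$; your extra steps ($\trace(\Sigma)\ge \nu_1^\transpose\Sigma\nu_1/\|\nu_1\|^2$, positivity of $\nu_1^\transpose\Sigma_\Delta\nu_1$ from \eqref{e:hvar} and positivity of $\nu_1$) are the implicit glue the paper leaves unstated, and your conversion between $\nu_1^\transpose\Sigma_\Delta\nu_1$ and $\trace(\Sigma_\Delta^2)$ is exactly as informal as the paper's own $O(\cdot)$ statement, so no complaint there. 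Where you genuinely go beyond the paper is the upper bound \eqref{e:traceBdds-h}: the paper dismisses it as ``a simple consequence of \Theorem{t:RelQADgain},'' i.e.\ it reads off only the eigenvalue locations and the $g^2=(1-\gamma\rho^*)^{-2}$ scaling, whereas you correctly observe that an eigenvalue bound alone does not control $\trace$ of the Lyapunov solution for a non-normal $A$, and you close this by showing the left-eigenvector basis of $A$ is $\gamma$-free (the vectors orthogonal to $\one$ via \Lemma{t:AQ-lemma}~(ii), and the remaining one via the $\gamma$-free equation $\nu^\transpose(PS_{\phi^*}-\rho^* I)=(1-\rho^*)(\nu^\transpose\one)\mu^\transpose$ when $\delta/\gamma=1-\rho^*$), so the conditioning constant in $\|e^{Mt}\|\le\kappa e^{-t/2}$ is uniform in $\gamma$. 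That refinement is worth keeping: it makes the ``uniformly in $\gamma$'' claim honest, while the paper's one-line citation would, taken literally, only give a bound with a possibly $\gamma$-dependent constant. The only caveats, shared equally by the paper, are that $\phi^*$ (hence $\nu_1$, $\rho^*$) is only $\gamma$-free once the optimal policy stabilizes in $\gamma$, and that a Jordan structure of $PS_{\phi^*}$ (e.g.\ when $\rho^*$ is itself an eigenvalue, so $-(1-\gamma\rho^*)$ has multiplicity two in $A$) needs the polynomial-prefactor remark you already make.
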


\begin{proof}
The proof of eqn.~\eqref{e:traceBdds-q} follows from \Proposition{t:SigmaSingleSubspace-a}.  This result also implies  that the lower bound for $\Sigma_\theta^q (g) $  in \eqref{e:traceBdds-q} is attained with $g_q = (1 - \gamma  )^{-1}$.
The upper bound \eqref{e:traceBdds-h} is a simple consequence of \Theorem{t:RelQADgain}.
\end{proof}

These bounds show a significant contrast in performance when $1/(1-\gamma) \gg 1/(1-\rho^*)$.   However,  we find that the two covariance matrices actually coincide on a subspace.   This is made precise in the following subsections.

\subsection{Covariance  Comparison on a Single Eigenspace}
\label{sec:BadSigmas}

x
We first amplify the stark contrast between the two covariance matrices. 
Denote by  $\lambda_{h_1}$ the eigenvalue of the matrix $A_h = A$ (defined in \eqref{e:Amatrix}) that has the largest real part (marked with pink circles in the third part of \Fig{f:Beta}), and $\nu_{h_1}$ the corresponding left-eigenvector. Similarly, denote $\lambda_{q_1}$ to be the eigenvalue of $A_q$ (defined in \eqref{e:AQmatrix}) that has the largest real part (the green circle in the second part of \Fig{f:Beta}), and $\nu_{q_1}$ the corresponding left-eigenvector. Our interest here is the magnitude of  the non-negative quantities
\begin{equation} 
\sigma_q^2(1, 1)  \eqdef \nu_{q_1}^\dagger \Sigma_\theta^q \nu_{q_1} \qquad \textit{and}
\qquad
\sigma_h^2(1, 1) \eqdef \nu_{h_1}^\dagger \Sigma_\theta^h \nu_{h_1} 
\label{e:sigmaq2_sigmah2_11}
\end{equation} 
Explicit formulae are easily obtained, and then optimized over $g$.   Analogous formulae are obtained in \Section{s:diagSigma} for other eigenvectors, so we omit the proof of \eqref{e:sigma_qh2_one}
here.

\begin{proposition}  
\label{t:SigmaSingleSubspace-a}
\begin{subequations}
\label{e:sigma_qh2_one}
\begin{align}
\begin{split}
\sigma_q^2(1,1) \!=\! g^2 \frac{ \sigma_{\Delta_q}^2(1,1)}{1 - 2 g (1-\gamma) }\,, \,\,\,\,\, g> [2(1-\gamma)]^{-1} 
\label{e:sigma_q2_one}
\end{split}
\\
\begin{split}
\sigma_h^2(1,1) \!=\!   g^2\frac{ \sigma_{\Delta_h}^2(1,1)}{1 - 2 g (1-\gamma \rho^*)    }\,, \,\,\,\,\,  g> [2 (1-\gamma \rho^*)  ]^{-1} 
\label{e:sigma_h2_one}
\end{split}
\end{align}
\end{subequations}
where $
\sigma_{\Delta_q}^2(1, 1) \eqdef \nu_{q_1}^\dagger \Sigma_\Delta \nu_{q_1}  
$
and
$
\sigma_{\Delta_h}^2(1, 1) \eqdef \nu_{h_1}^\dagger \Sigma_\Delta \nu_{h_1}$.   
The minimizing gains are given by $  g_q = (1-\gamma)^{-1}$ in \eqref{e:sigma_q2_one},  and $g_h = (1-\gamma \rho^*)^{-1}$ in \eqref{e:sigma_h2_one}.
This results in the minimal values,
\begin{align}
\min_g 
\sigma_q^2(1,1) \!=\!  \frac{\sigma_{\Delta_q}^2 (1,1) }{(1-\gamma)^2}\,, \,\,\, 
\min_g \sigma_h^2(1,1) \!=\!  \frac{\sigma_{\Delta_h}^2 (1,1) }{(1-\rho^* \gamma)^2}
\label{e:sigma_qandh2_one}
\end{align}
\qed
\end{proposition}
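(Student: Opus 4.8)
The plan is to read off the one–dimensional projection of the Lyapunov equation \eqref{e:Lyapg} along each of the two distinguished left eigenvectors. Recall that, by \Theorem{t:ADgain} and \Theorem{t:RelQADgain}, once the step-size is scaled by $g$ the relevant linearization matrices are $A_q$ of \eqref{e:AQmatrix} and $A=A_h$ of \eqref{e:Amatrix}, the noise covariance $\Sigma_\Delta$ is the common matrix of \Lemma{t:SigmaDeltaEq}, and each of $\Sigma_\theta^q(g),\Sigma_\theta^h(g)$ solves
\[
(gA_\star+\tfrac12 I)\,\Sigma_\theta^\star + \Sigma_\theta^\star\,(gA_\star+\tfrac12 I)^\transpose + g^2\Sigma_\Delta = 0 ,
\qquad \star\in\{q,h\},
\]
provided $g$ is large enough that every eigenvalue $\lambda$ of $gA_\star$ satisfies $\Real(\lambda)<-\tfrac12$.

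First I would pin down the two eigenpairs. For $A_q$, \Lemma{t:AQ-lemma}~(i) gives that the eigenvalue of largest real part is $\lambda_{q_1}=-(1-\gamma)$, and I take $\nu_{q_1}$ to be a corresponding left eigenvector, $\nu_{q_1}^\transpose A_q=\lambda_{q_1}\nu_{q_1}^\transpose$ (concretely, the Perron left eigenvector of $PS_{\phi^*}$). For $A_h$, under $\delta\ge\gamma(1-\rho^*)$, \Lemma{t:AQ-lemma}~(ii) together with \Lemma{t:A-eig-values} shows that the replaced eigenvalue $-(1-\gamma+\delta)$ has real part $\le -(1-\gamma\rho^*)$, so the eigenvalue $\lambda_{h_1}$ of largest real part is one of the eigenvalues shared with $A_q$, with $\Real(\lambda_{h_1})=-(1-\gamma\rho^*)$, and its left eigenvector $\nu_{h_1}$ is simultaneously a left eigenvector of $A_q$ (it lies in $\one^\perp$, so the $\delta\cdot\one\otimes\mu$ term is invisible to it).

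Next I would multiply the Lyapunov equation on the left by $\nu^\dagger$ and on the right by $\nu$, taking $(\nu,A_\star)$ to be $(\nu_{q_1},A_q)$ and then $(\nu_{h_1},A_h)$. From $\nu^\transpose A_\star=\lambda\nu^\transpose$ one gets $\nu^\dagger A_\star=\bar\lambda\,\nu^\dagger$ and $A_\star^\transpose\nu=\lambda\nu$, and since $\Sigma_\theta^\star$ is symmetric the cross term $\nu^\dagger\Sigma_\theta^\star(gA_\star+\tfrac12 I)^\transpose\nu$ collapses to $(g\lambda+\tfrac12)\nu^\dagger\Sigma_\theta^\star\nu$; the imaginary parts of $\lambda$ cancel and the whole matrix identity becomes the scalar identity
\[
\bigl(2g\,\Real(\lambda)+1\bigr)\,\nu^\dagger\Sigma_\theta^\star\nu + g^2\,\nu^\dagger\Sigma_\Delta\nu = 0 .
\]
Solving for $\nu^\dagger\Sigma_\theta^\star\nu$ and substituting $\Real(\lambda_{q_1})=-(1-\gamma)$ and $\Real(\lambda_{h_1})=-(1-\gamma\rho^*)$ yields \eqref{e:sigma_q2_one}–\eqref{e:sigma_h2_one}; the admissible ranges $g>[2(1-\gamma)]^{-1}$ and $g>[2(1-\gamma\rho^*)]^{-1}$ are exactly the sign conditions $2g\,\Real(\lambda)+1<0$ that make these quadratic forms finite and positive. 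Finally, minimizing the scalar map $g\mapsto g^2/(2g(1-\gamma)-1)$ on that range is one line of calculus: its derivative vanishes only at $g_q=(1-\gamma)^{-1}$, with minimum value $(1-\gamma)^{-2}$; the identical computation with $1-\gamma$ replaced by $1-\gamma\rho^*$ gives $g_h=(1-\gamma\rho^*)^{-1}$ and \eqref{e:sigma_qandh2_one}.

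The only real obstacle is the bookkeeping in the projection step, and it is mild: one must verify that $\nu_{h_1}$ genuinely acts as a left eigenvector of $A_h$ with real part $-(1-\gamma\rho^*)$ (precisely \Lemma{t:AQ-lemma}~(ii) and \Lemma{t:A-eig-values}, already available), and that the symmetry of $\Sigma_\theta^\star$ is what lets only $\Real(\lambda)$ survive in the scalar identity. Everything else is the elementary minimization, and a parallel computation along left eigenvectors other than $\nu_{q_1},\nu_{h_1}$ is what \Section{s:diagSigma} carries out in full, which is why the detailed proof is deferred there.
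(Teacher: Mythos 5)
Your proof is correct and follows essentially the same route the paper intends: project the Lyapunov equation \eqref{e:Lyapg} onto the distinguished left eigenvectors $\nu_{q_1}$, $\nu_{h_1}$ (exactly the computation carried out for general eigenvectors in \Section{s:diagSigma}, to which the paper defers) and then minimize the resulting scalar expression over $g$. One remark: your scalar identity produces the denominators $2g(1-\gamma)-1$ and $2g(1-\gamma\rho^*)-1$, which are positive on the stated ranges and give the stated minimal values, so your version is the internally consistent one; the signs printed in \eqref{e:sigma_q2_one}--\eqref{e:sigma_h2_one} appear to be a typographical slip.
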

The optimized step-size scaling suggested in \Proposition{t:SigmaSingleSubspace-a} ensures that all eigenvalues of the matrices $g_q A_q$ and $g_h A_h$ have real parts $\leq - 1$:
\[
\text{Re}(\lambda(g_q A_q)) \leq -1 \qquad \text{Re}(\lambda(g_h A_h)) \leq -1 
\]

Denote by $(\lambda_{h\one}, \nu_{h\one})$ the eigenvalue/left-eigenvector pair of the matrix $A_h$, with corresponding right-eigenvector $\one$. Similarly, denote by $(\lambda_{q\one}, \nu_{q\one})$ the eigenvalue/left-eigenvector pair of the matrix $A_q$, with corresponding right-eigenvector $\one$. Note that these eigenvalues correspond to the green circles in \Figure{f:Beta}, and $(\lambda_{q\one}, \nu_{q\one})$ coincides with $(\lambda_{q_1}, \nu_{q_1})$, but a similar property \emph{does not} hold for the matrix $A_h$.  We next compare the covariance of the algorithms on the eigenspace that corresponds to eigenvector $\one$:
\begin{equation} 
\sigma_{q\one}^2  \eqdef \nu_{q\one}^\dagger \Sigma_\theta^q \nu_{q\one} \qquad \textit{and}
\qquad
\sigma_{h\one}^2 \eqdef \nu_{h\one}^\dagger \Sigma_\theta^h \nu_{h\one} 
\label{e:sigmaq2_sigmah2_one}
\end{equation} 
\begin{proposition}  
\label{t:SigmaSingleSubspace-b}
Consider the Q-learning and relative Q-learning with step-size scaling $g_q$ and $g_h$ defined in \Proposition{t:SigmaSingleSubspace-a}, and with $\delta \geq \gamma(1-\rho^*)$ in \eqref{e:eligHalg}. Then,
\label{e:sigma_qh2_one_vec}
\begin{align}
\hspace{-0.07in}\sigma_{q\one}^2  \!=\! \frac{\sigma_{\Delta_{q\one}}^2 }{(1-\gamma)^2}
\quad \,\,\,\,\,
\sigma_{h\one}^2  \!=\!  \frac{\sigma_{\Delta_h (\one)}^2}{(1 \! - \! \rho^* \gamma) \big(1 \! + \! \rho^* \gamma \! - \! 2 ( \gamma \! - \!  \delta)  \big)}
\label{e:sigma_q2_and_h2_one_vec}
\end{align}
where,
$
\sigma_{\Delta_{q\one}}^2 \!=\! \sigma_{\Delta_q}^2(1, 1) \!=\! \nu_{q\one}^\dagger \Sigma_\Delta \nu_{q\one}$, 
$\sigma_{\Delta_h (\one)}^2 \! = \! \nu_{h\one}^\dagger \Sigma_\Delta \nu_{h\one}
$.
\qed
\end{proposition}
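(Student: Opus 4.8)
The plan is to obtain both formulas by contracting the Lyapunov equation \eqref{e:Lyapg} against the left eigenvector that pairs with the right eigenvector $\one$. By \Lemma{t:AQ-lemma}, $A_q\one = -(1-\gamma)\one$ and $A_h\one = -(1-\gamma+\delta)\one$, so the relevant eigenvalues are the real numbers $\lambda_{q\one}=-(1-\gamma)$ and $\lambda_{h\one}=-(1-\gamma+\delta)$; since they are real and the matrices are real, the associated left eigenvectors $\nu_{q\one},\nu_{h\one}$ may be taken real, so $\dagger$ and $\transpose$ agree on them. Here one must not conflate $\nu_{h\one}$ with the dominant eigenvector $\nu_{h_1}$: as noted just before the proposition, $(\lambda_{q\one},\nu_{q\one})$ coincides with $(\lambda_{q_1},\nu_{q_1})$ — so that $\sigma_{\Delta_{q\one}}^2 = \sigma_{\Delta_q}^2(1,1)$ — whereas for $A_h$ the two genuinely differ. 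Finally, with $g_q=(1-\gamma)^{-1}$ and $g_h=(1-\gamma\rho^*)^{-1}$, \Theorem{t:ADgain} and \Theorem{t:RelQADgain} guarantee that every eigenvalue of $g_qA_q$ and of $g_hA_h$ has real part at most $-1<-\tfrac12$, so $\Sigma_\theta^q$ and $\Sigma_\theta^h$ are finite and are the unique positive semidefinite solutions of \eqref{e:Lyapg}.

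Next I would left-multiply \eqref{e:Lyapg} by $\nu^\dagger$ and right-multiply by $\nu$, with $(\lambda,\nu,g)$ taken to be either $(\lambda_{q\one},\nu_{q\one},g_q)$ or $(\lambda_{h\one},\nu_{h\one},g_h)$. Using $\nu^\dagger A=\lambda\nu^\dagger$ on the left-hand factor and $A^\transpose\nu=\lambda\nu$ on the right-hand factor, the two matrix terms collapse into a single scalar, leaving
\[
(2g\lambda+1)\,\sigma^2 + g^2\,\sigma_\Delta^2 = 0,\qquad \sigma^2 \eqdef \nu^\dagger\Sigma_\theta\nu,\ \ \sigma_\Delta^2\eqdef\nu^\dagger\Sigma_\Delta\nu\ge 0,
\]
hence $\sigma^2 = -g^2\sigma_\Delta^2/(2g\lambda+1)$. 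The sign is consistent: $g\lambda\le-1<-\tfrac12$ in both cases, so the denominator is negative and $\sigma^2\ge0$.

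It then remains to substitute the two pairs. For Q-learning, $2g_q\lambda_{q\one}+1=-2+1=-1$, so $\sigma_{q\one}^2=g_q^2\sigma_{\Delta_{q\one}}^2=\sigma_{\Delta_{q\one}}^2/(1-\gamma)^2$. For relative Q-learning,
\[
2g_h\lambda_{h\one}+1 = 1-\frac{2(1-\gamma+\delta)}{1-\gamma\rho^*} = -\frac{1+\rho^*\gamma-2(\gamma-\delta)}{1-\gamma\rho^*},
\]
so $\sigma_{h\one}^2 = -g_h^2\sigma_{\Delta_h(\one)}^2/(2g_h\lambda_{h\one}+1) = \sigma_{\Delta_h(\one)}^2/\big[(1-\rho^*\gamma)\big(1+\rho^*\gamma-2(\gamma-\delta)\big)\big]$, which is \eqref{e:sigma_q2_and_h2_one_vec}.

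The only delicate step is the first one: correctly reading off the eigenvalue/left-eigenvector pairs from \Lemma{t:AQ-lemma} — in particular keeping $\nu_{h\one}$ distinct from $\nu_{h_1}$ — and confirming that the step-size scalings put us in the regime where \eqref{e:Lyapg} applies; after that the argument is just the one-line contraction above followed by arithmetic. A minor caveat worth recording is that if $\lambda_{h\one}$ fails to be a simple eigenvalue of $A_h$ then $\nu_{h\one}$ is only pinned down up to a choice within its eigenspace, but this is harmless: the scalar identity $(2g\lambda+1)\sigma^2+g^2\sigma_\Delta^2=0$ holds for any left eigenvector attached to $\lambda_{h\one}$, so the formula is unaffected.
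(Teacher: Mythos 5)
Your proposal is correct and is essentially the paper's (omitted) argument: contracting the Lyapunov equation \eqref{e:Lyapg} against the left eigenvector paired with the right eigenvector $\one$, exactly as in the derivation of \eqref{e:sigma_q2h2} in \Section{s:diagSigma}, with the eigenvalue identifications $\lambda_{q\one}=-(1-\gamma)$, $\lambda_{h\one}=-(1-\gamma+\delta)$ from \Lemma{t:AQ-lemma} and the arithmetic at $g_q=(1-\gamma)^{-1}$, $g_h=(1-\gamma\rho^*)^{-1}$ reproducing \eqref{e:sigma_q2_and_h2_one_vec}. Your care in keeping $\nu_{h\one}$ distinct from $\nu_{h_1}$, and in checking that the chosen gains place all eigenvalues below $-\half$ so the Lyapunov characterization applies, matches the points the paper itself flags.
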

Note that for the choice of $\delta \geq \gamma (1- \rho^*)$, we have $\sigma_{h\one}^2 \leq \sigma_h^2(1, 1)$ defined in \eqref{e:sigma_h2_one}, consistent with \eqref{e:traceBdds-h} of \Proposition{t:traceSigmas}.

In \Fig{f:Hplot} we compare the performance of Q-learning and relative Q-learning algorithms applied to a simple $6$-state~MDP that was considered in \cite[Section~3]{devmey17b}. Experiments were run for $\gamma = 0.999$ and $\gamma = 0.9999$, and in each of the two cases, we implemented Q-learning with optimized step-size $\alpha_n = g_q / n$, $g_q = 1/(1-\gamma)$, and relative Q-learning with optimized step-size $\alpha_n = g_h / n$, $g_h = 1/(1-\rho^* \gamma)$. In addition, we also implemented Q-learning with $\alpha_n = g_h / n$; the motivation is discussed in the following subsection.

Histogram of $\{ \sqrt{N} \tiltheta_N(i) \}$ for $10^3$ independent runs.   The CLT approximation is good even for the shortest run, and nearly perfect for $N\ge 10^4$.

We return now to 
\Figure{f:relW10CLT}, which shows histograms of $\{ \sqrt{N} \tiltheta_N(i) \}$ from the relative Q-learning algorithm. 
The histograms were obtained by running $10^3$ independent runs of the algorithm with random initial conditions, up to time horizon $10^6$  (with data collected at this value, and intermediate values
 $N= 10^3$, $10^4$, $10^5$).
 The theoretical pdf's were obtained based on CLT~\eqref{e:SACLT}: For $N$ \emph{large enough}, the distribution of $\sqrt{N}\tiltheta_N$ is approximated by $\clN(0,\Sigma_{\theta})$, where $\Sigma_{\theta}$ is obtained as a solution to \eqref{e:Lyapg}.   The figure makes clear that the CLT predicts finite-$N$ behavior for $N$ as small as $10^4$.   This is remarkable, but not surprising given the success in prior RL studies \cite{devmey17b,devbusmey19}.


\subsection{Solidarity on a Subspace}
\label{sec:sol_sub}

\Proposition{t:SigmaSingleSubspace-a} again shows that a larger gain $g$ is required in Watkins' algorithm,   and we can expect a larger asymptotic covariance.  \Proposition{t:SigmaSingleSubspace-b} compares the asymptotic covariance with optimized $g$'s for the two algorithms on a particular subspace. The question we ask here is: \emph{what about the remainder of $\Re^d$?}

The asymptotic covariances appearing in \Prop{t:traceSigmas}    solve the respective Lyapunov equations:
\begin{subequations}
\label{e:Lyap12g}
\begin{align}
0  &=  F_q \Sigma_\theta^q+ \Sigma_\theta^q F_q^\transpose  + g^2 \Sigma_\Delta
\label{e:SigmaThetaQ}
\\
0          & = F_h  \Sigma_\theta^h+ \Sigma_\theta^h F_h^\transpose  + g^2 \Sigma_\Delta
\label{e:SigmaThetaH}
\end{align} 
\end{subequations}
where $F_q = g A_q +\half I$ and  $F_h = g A_h +\half I$.  It is shown in \Prop{t:SigmaEqual}
that
the solutions are identical on the subspace 
\[
\Re_0^d = \{ v \in \Re^d : v^\dagger \one = 0 \}
\]
\vspace{-0.03in}
in the sense that 
\begin{equation}
v^\dagger  \Sigma_\theta^q  w  =  
v^\dagger  \Sigma_\theta^h  w  \,, \qquad \textit{for all $v,w\in\Re_0^d$}
\label{e:SigmaSolid}
\end{equation}
This identity is valid even when $F_q$  \textit{is not Hurwitz}, so that $\Sigma_\theta^q $ is not finite valued.    To make this precise we   make use of the representations
\begin{equation}
\begin{aligned}
v^\dagger\Sigma_\theta^q  w & =   g^2 \int_0^\infty v^\dagger e^{F_q t} \Sigma_\Delta {e^{F_q^\transpose t}} w \, dt
\\
v^\dagger\Sigma_\theta^h  w & =   g^2 \int_0^\infty v^\dagger e^{F_h t} \Sigma_\Delta {e^{F_h^\transpose t}} w \, dt
\label{e:Frep}
\end{aligned}
\end{equation}

We do not assume that $F_q$ is Hurwitz in \Prop{t:SigmaEqual}, so that $\Sigma^q_\theta$ may not be finite valued.

\begin{proposition}
\label{t:SigmaEqual}
Suppose that the matrix $F_h$ is Hurwitz.    Then the asymptotic covariance $\Sigma_\theta^h$ exists and is finite,  and moreover \eqref{e:SigmaSolid} holds, subject to the definition of \eqref{e:Frep}.
\end{proposition}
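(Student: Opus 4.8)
The plan is to exploit the fact that $A_h$ differs from $A_q$ by the rank-one matrix $\delta\,\one\otimes\mu$, which annihilates $\Re_0^d$ when applied on the left, together with the fact (\Lemma{t:AQ-lemma}) that $\one$ is a common right eigenvector of $A_q$ and $A_h$, so that $\Re_0^d=\one^\perp$ is left-invariant under both $F_q$ and $F_h$. Concretely, since $F_q=gA_q+\half I$ and $F_h=gA_h+\half I$, we have $F_h=F_q-g\delta\,\one\otimes\mu$; for any $v\in\Re_0^d$, $v^\transpose(\one\otimes\mu)=(v^\transpose\one)\mu^\transpose=0$, hence $v^\transpose F_h=v^\transpose F_q$. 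Moreover, using $A_q\one=-(1-\gamma)\one$ and $A\one=-(1-\gamma+\delta)\one$ from \Lemma{t:AQ-lemma}, we get $F_q\one=(\half-g(1-\gamma))\one$ and $F_h\one=(\half-g(1-\gamma+\delta))\one$, so $v^\transpose F_q\in\Re_0^d$ and $v^\transpose F_h\in\Re_0^d$ whenever $v\in\Re_0^d$.

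The core step is the identity $v^\transpose e^{F_h t}=v^\transpose e^{F_q t}$ for every $v\in\Re_0^d$ and $t\ge0$. I would prove $v^\transpose F_h^{\,k}=v^\transpose F_q^{\,k}$ for all $k\ge0$ by induction: the case $k=0$ is trivial; assuming it for $k$, set $u^\transpose:=v^\transpose F_q^{\,k}=v^\transpose F_h^{\,k}$, observe $u\in\Re_0^d$ by the invariance noted above, and compute $v^\transpose F_h^{\,k+1}=u^\transpose F_h=u^\transpose F_q=v^\transpose F_q^{\,k+1}$, using $u^\transpose F_h=u^\transpose F_q$ (valid since $u\in\Re_0^d$). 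Summing the exponential power series gives the claim, and transposing yields $e^{F_h^\transpose t}w=e^{F_q^\transpose t}w$ for every $w\in\Re_0^d$.

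It then remains to assemble the conclusion. Since $F_h$ is Hurwitz, $\|e^{F_h t}\|\le Ce^{-\epsy t}$ for some $C,\epsy>0$, so $\Sigma_\theta^h=g^2\int_0^\infty e^{F_h t}\Sigma_\Delta e^{F_h^\transpose t}\,dt$ converges absolutely, and the standard differentiation argument shows it solves the Lyapunov equation \eqref{e:SigmaThetaH}; this gives existence and finiteness of $\Sigma_\theta^h$. For $v,w\in\Re_0^d$ the two integrands in \eqref{e:Frep} are equal for every $t$ by the core step, so the integrand defining $v^\dagger\Sigma_\theta^q w$ inherits the exponential decay of the $F_h$ integrand, the integral converges, and $v^\dagger\Sigma_\theta^q w=v^\dagger\Sigma_\theta^h w$. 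Extending both sides by sesquilinearity to the complexification $\{v\in\Co^d:v^\dagger\one=0\}$ finishes the argument.

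The main obstacle, really a matter of care rather than difficulty, is exactly this last point: \eqref{e:SigmaSolid} is asserted even when $F_q$ need not be Hurwitz, so one cannot invoke a global Lyapunov solution for $\Sigma_\theta^q$; instead one must take \eqref{e:Frep} as the definition of the left-hand side and verify, via the core identity, that the relevant integrand decays fast enough on $\Re_0^d$. Everything else is elementary linear algebra once the rank-one structure and the common eigenvector $\one$ are in hand.
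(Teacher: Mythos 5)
Your proposal is correct and follows essentially the same route as the paper: establish $v^\dagger F_q = v^\dagger F_h$ on $\Re_0^d$ together with invariance of $\Re_0^d$ under $F_q^\transpose$ and $F_h^\transpose$, propagate this by induction to all powers, sum the exponential series to get $v^\dagger e^{tF_q} = v^\dagger e^{tF_h}$, and conclude via the integral representation \eqref{e:Frep}. The only difference is that you spell out details the paper leaves implicit (the rank-one identity $F_h = F_q - g\delta\,\one\otimes\mu$, the invariance via the common right eigenvector $\one$, and the convergence of the integrals), all of which are accurate.
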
 

\begin{proof}
Given the representation \eqref{e:Frep}, it is enough to establish
\begin{equation}
v^\dagger  e^{t F_q }   =v^\dagger  e^{t F_h }       \qquad \textit{for   all $t>0$ and $v\in\Re_0^d$}
\label{e:SolidSolid}
\end{equation}
The proof makes use of  the following identity:
\begin{equation}
v^\dagger F_q = v^\dagger F_h   \,, \qquad \textit{for all $v \in\Re_0^d$}
\label{e:Solid}
\end{equation}
Moreover,  $F_q^\dagger \colon\Re_0^d\to\Re_0^d$   and $F_h^\dagger \colon\Re_0^d\to\Re_0^d$.

These identities imply many others.   
Starting from  $v^\dagger F_q = v^\dagger F_h$  for $v\in\Re_0^d$, we obtain $v^\dagger F_q F_h = v^\dagger F_h^2$,   and  the identity $v^\dagger F_q^2 = v^\dagger F_h^2$ follows since 
$(v^\dagger F_q)^\dagger \in\Re_0^d$. By induction we obtain $v^\dagger F_q^n = v^\dagger F_h^n$ for each $n$ and each $v\in\Re_0^d$, and then \eqref{e:SolidSolid} follows from the Taylor series representation of the matrix exponential.
\end{proof}

In \Fig{f:HplotSpanSemiNorm} we plot the span-semi-norm of the errors in the Q-function estimates obtained using Q-learning and relative Q-learning algorithms. Once again, experiments were run for $\gamma = 0.999$ and $\gamma = 0.9999$, and for each $\gamma$, we used two different step-sizes for the Q-learning algorithm: $g_q = 1/(1-\gamma)$, and $g_h = 1/(1-\rho^* \gamma)$. Since the span semi-norm ignores the error in the constants, we notice that the performance of the Q-learning and relative Q-learning algorithms with the same step-size $g_h = 1/(1-\rho^* \gamma)$ is very similar --- consistent with our findings in \Proposition{t:SigmaEqual}.

\begin{figure}[htbp]
\centering

	\includegraphics[width=\hsize]{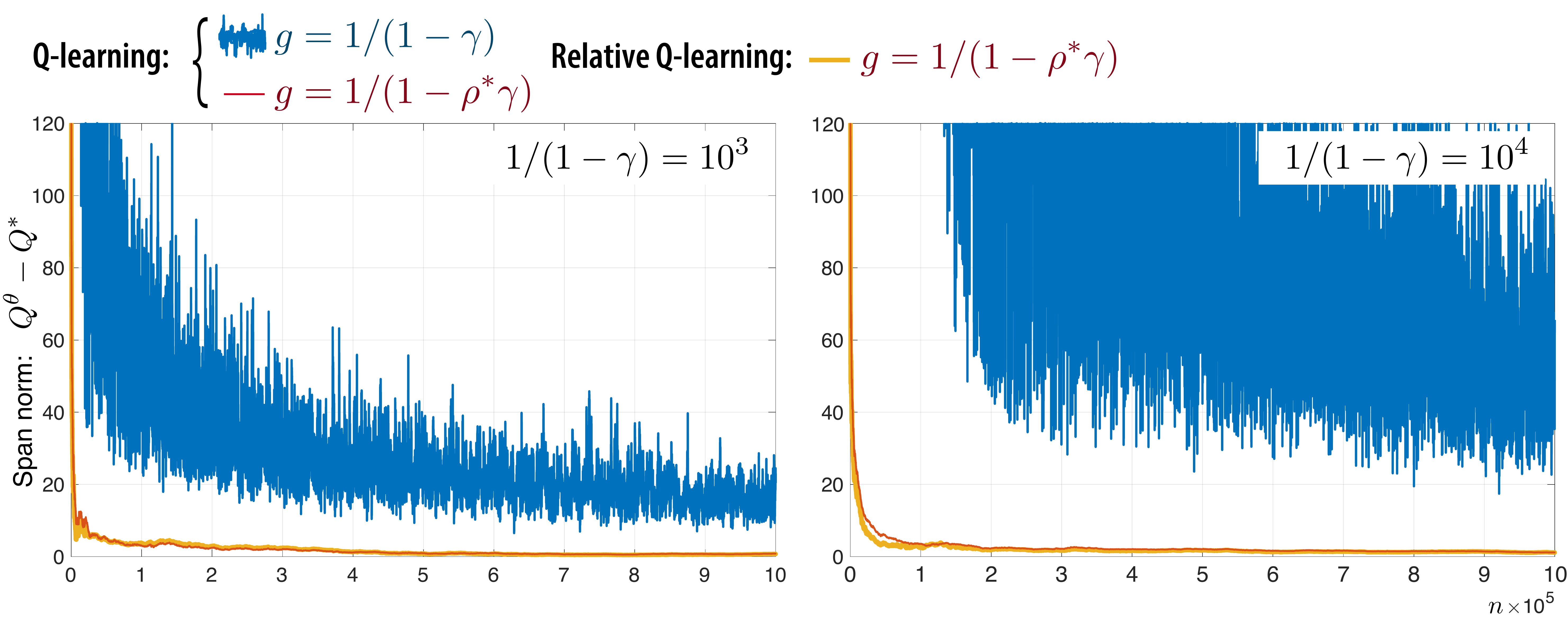}

\caption{Comparison of Q-learning and Relative Q-learning algorithms (in terms of the span-norm of the error) for the stochastic shortest path problem of \cite{devmey17a}. Provided we choose the right step-size, relative Q-learning and Q-learning have similar performance with this metric.}
\label{f:HplotSpanSemiNorm}
\end{figure}

\subsection{What if the Transition Matrix is Diagonalizable?}
\label{s:diagSigma}

If the matrix $PS_\phi^*$ is diagonalizable,  this means that there is a basis consisting of eigenvectors,  and also a basis consisting of left-eigenvectors.   Viewed as column vectors,  we find that $d-1$ of the left eigenvectors span $\Re_0^d$.    From this we obtain  a refinement of \Prop{t:SigmaEqual}:  a solution to the Lyapunov equation on $\Re_0^d$,   and on all of $\Re^d$ when $F_q$ is Hurwitz.  

If $PS_\phi^*$ is diagonalizable, then the definition \eqref{e:AQmatrix}  implies that the same is true for $A_q$.  Let $\{\nu_i: 1 \leq i \leq d\}$ be a basis of left eigenvectors for $A_q$,   with corresponding eigenvalues $\{\lambda_i: 1 \leq i \leq d\}$, and suppose the eigenvalues are ordered so that $\lambda_1(A_q) = -(1-\gamma)$.

\Lemma{t:AQ-lemma}~(ii) asserts that $\{\nu_i: 2 \leq i \leq d\}$  are also left eigenvectors for $A_h$, with common left eigenvalues.   Moreover,   $\nu_i^\dagger \one = 0$ for $2\le i\le d$, so that   the span of these vectors is precisely  $\Re^d_0$.

For each  $2 \leq i,j \leq d$, consider the quantities:
\begin{equation}
\begin{aligned}
\sigma_q^2(i, j) \eqdef  \nu_i^\dagger \Sigma_\theta^q \nu_j  & \qquad
\sigma_h^2(i, j) \eqdef \nu_i^\dagger \Sigma_\theta^h \nu_j
\\
\sigma_\Delta^2(i, j) & \eqdef \nu_i^\dagger \Sigma_\Delta \nu_j 
\label{e:sigmaq2_sigmah2}
\end{aligned}
\end{equation} 
The identity $\sigma_q^2(i, j) = \sigma_h^2(i, j) $ follows from \Proposition{t:SigmaEqual}.   
Multiplying the left hand side of \eqref{e:SigmaThetaQ} and \eqref{e:SigmaThetaH} by $\nu_i^\dagger$, and the right hand side by $\nu_j$, we obtain 
\begin{equation}   
\sigma_q^2(i,j) = \sigma_h^2(i,j)   =  g^2 \frac{  \sigma_\Delta^2 (i, j)}{   1 - g  (\lambda_i + \lambda_j ) }
\label{e:sigma_q2h2} 
\end{equation}
For the optimal gains $g_q$ and $g_h$ appearing in 
\Proposition{t:SigmaSingleSubspace-a},
substitution into \eqref{e:sigma_q2h2} gives the approximation when $\gamma \approx 1$: For $2 \leq i,j \leq d$,
\begin{align}
\sigma_q^2(i,j)  \! = \! O \bigg( \frac{\sigma_\Delta^2 (i, j) }{1-\gamma} \bigg)\,,
\quad \sigma_h^2(i,j)  \!=\!  O \bigg( \frac{\sigma_\Delta^2 (i, j)}{1-\rho^* \gamma} \bigg)
\label{e:sigma_qandh2_notone}
\end{align}

\section{Conclusions and Future Work}

The factor $1/(1-\gamma)^p$ is ubiquitous in RL complexity bounds, where $p \geq 2$. We have shown that this dependency is \emph{artificial}: if we ignore the constant terms (that does not affect the optimal policy), this factor can be improved to $1/(1- \rho^* \gamma)^p$, where $\rho^* < 1$ under very general conditions. Specifically, we showed that the classical Q-learning algorithm of Watkins has asymptotic (CLT) variance that grows as a quadratic in $1/(1-\gamma)$, and the relative Q-learning algorithm has asymptotic variance that is bounded by a quadratic in $1/(1-\rho^* \gamma)$.  We believe that this will lead to comparable improvements in sample complexity bounds.

The techniques introduced in this work can also be extended to various other RL algorithms. For example, it is straightforward to modify the recursion \eqref{e:DevMey} to obtain a \emph{relative TD($0$)-learning} algorithm for a discounted cost MDP, with linear function-approximation.  The choice of $\mu$ may require care in a  continuous state-space setting (perhaps an empirical distribution is preferable).


The ideas introduced in this paper are complementary to the Zap-Q techniques of \cite{devmey17b}. In view of the matrix gain Q-learning algorithm \eqref{e:HQlambda}, the goal in \cite{devmey17b} is to obtain an optimal matrix gain sequence $\{G_{n+1}\}$ that will result in minimum asymptotic covariance $\Sigma_{\theta}$. It is straightforward to \emph{Zap} our relative Q-learning algorithm, resulting in a further reduction of asymptotic covariance.

We close with three open problems:
\begin{romannum}
\item How do we choose $\delta$? It seems that the choice $\delta = 1$ will serve our purpose of uniformly bounding the asymptotic variance of Q-learning. Perhaps a larger $\delta$ will result in better transient behavior?

\item   Is there an optimal  choice for $\mu$  (in terms of both variance and transient behavior)?

\item  How can these ideas extend to Q-learning outside of the tabular setting?

\end{romannum}

\def\cprime{$'$}\def\cprime{$'$}

\newpage

\appendix
\begin{center}
\Huge{\bf Appendix}
\end{center}

\section{Convergence Rate of Nonlinear Stochastic Approximation}
\label{s:SAnonlinearProof}

\begin{proof}[Proof of \Proposition{t:SAnonlinear}]
Part (i) of the Proposition follows from the main result of \cite[Ch.~7]{bor20a}.  

The proof of (ii) uses similar ideas as in \cite{chedevbusmey20}.
For simplicity we normalize so that $\theta^*=0$,  and take $g=1$ so that $\alpha_{n} =1/n$.     

The proof  proceeds by contradiction:  Suppose that   $ n^{2\varrho_1} \Expect[\|\tiltheta_n\|^2]$ is bounded in $n$ for some $\varrho_1 >  \varrho_0$, and consequently $ n^{2\varrho} \Expect[\|\tiltheta_n\|^2]$ tends to zero as $n\to\infty$, for any   $\varrho_1 >\varrho >  \varrho_0$.  
We then use the new definition  $W_n = n^{\varrho}  \theta_n$,  and denote, for a fixed $\theta \in \Re^d$,
\[
\barf_n(\theta) = (n+1)^{\varrho}   \barf( n^{-\varrho} \theta)   \,,\quad     \Upsilon_n =  \alpha_n n^{\varrho} \Delta_n
\]
On multiplying each side of \eqref{e:SAintro} by $(n+1)^{\varrho} $ we obtain
\[ 
W_{n+1}  = W_n +\alpha_{n+1} [ \varrho_n W_n +  \barf_n (W_n)   ]  +  \Upsilon_{n+1} 
\]
where $ \varrho_n = \varrho + o(1)$ appears through the Taylor series approximation $ (n+1)^{\varrho}  =    n^{\varrho}   +  \varrho \alpha_{n+1} n^{\varrho}  + o(\alpha_{n+1} )$.

Under the \textit{assumption} that  $ n^{2\varrho_1} \Expect[\|\tiltheta_n\|^2]$ is bounded in $n$,  it follows that 
\[
\lim_{n\to\infty} 
\Expect[  \|  \barf_n (W_n) -    \barf_0 (W_n) \|^2 ] =0
\]
and from this we obtain the approximately linear recursion for $\Sigma_n^W = \Expect[W_n W_n^\transpose]$:
\[ 
\Sigma^W_{n+1}  = \Sigma_n^W  +\alpha_{n+1} [   (\varrho + A) \Sigma_n^W +\Sigma_n^W  (\varrho + A)^\transpose 
+ \clE_n]   +  \Sigma^\Upsilon_{n+1} 
\]
where the vanishing sequence $\{\clE_n\}$ is composed of three approximations:  replacing $\varrho_n$ by $\varrho$,  the replacement of $\barf_n$ by $\barf_0$,   and the final term:
\[
\alpha_{n+1}^2 \Expect [  ( \varrho_n W_n +  \barf_n (W_n)  ) ( \varrho_n W_n +  \barf_n (W_n)  )^\transpose ] 
\]

Denote  $\sigma^2_n = n^{2\varrho}  \Expect[ |\nu^\transpose  W_n|^2]   =  \nu^\transpose \Sigma_n^W  \nu$, which is a vanishing sequence by assumption.    
However, it   evolves according to the recursion 
\[ 
\sigma^2_{n+1}  =  \sigma^2_n + \alpha_{n+1} [  2 (\varrho -\varrho_0) \sigma^2_n   
+  \nu^\transpose  \clE_n  \nu]   +      \nu^\transpose  \Sigma^\Upsilon_{n+1}  \nu
\]
As in  \cite{chedevbusmey20}, this can be regarded as a deterministic SA recursion, and apparently unstable under our assumption that $  \varrho -\varrho_0>0$.  This can be verified using an ODE approximation:  $ \sigma^2_n \to\infty$ as $n\to\infty$ under this assumption, along with the fact that $ \nu^\transpose  \Sigma^\Upsilon_{n+1}  \nu >0$ for at least one $n$ (recall that $\Sigma_\Delta \nu \neq 0$.  This contradiction completes the proof.  
\end{proof}

\section{ODE Approximation of Q-learning}
\label{sec:Q_appendix}

\begin{proof}[Proof of \Lemma{t:WatkinsLin}]
To prove (i) we recall the definition of the ODE \eqref{e:QODEW}:  $\ddt q_t = \barf(q_t)$, where for any $q\colon\state\times\U\to\Re$, 
and $1\le i\le d$,
\[
\barf_i(q) = \Expect\bigl[ \bigl\{  c(X_n,U_n)   + \gamma     \uq (X_{n+1})  - q(X_n,U_n)  \bigr\}   \psi_i(X_n, U_n)\bigr]
\]
Substituting $ \uq (X_{n+1})  = q (X_{n+1},   \phi^*(X_{n+1} ))$   for  $\| q - Q^*\| <\epsy$ gives
\[
\begin{aligned}
\barf_i(q) &= \Expect\bigl[   c(X_n,U_n)       \psi_i(X_n, U_n)\bigr]  
\\
& \hspace{0cm}+
\Expect \big [ \psi_i (X_n,U_n) \{  \gamma    q (X_{n+1}, \phi^*(X_{n+1} ) ) - q(X_n,U_n)  \bigr\}    \big]    
\\
& =  \pie(x^i,u^i) c(x^i,u^i)   
\\
& +  \pie(x^i,u^i) \bigl \{ \gamma \sum_j    P_{u^i}(x^i, x^j)  q(x^j,\phi^*(x^j)   - q(x^i,u^i)    \bigr\}
\end{aligned} 
\]
where the second identity follows from the tabular basis.  This
establishes (i).

Part (ii) is immediate, given the similarity of the two ODEs.  
\end{proof}

\section{Convergence Analysis of Relative Q-learning}
\label{sec:conv_appendix}

\begin{proposition}
The ODE \eqref{e:RelQODE_Gain} is globally asymptotically stable, with unique equilibrium $H^*$.
\label{t:ODEisstable}
\end{proposition}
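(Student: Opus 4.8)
The plan is to prove the stronger statement that \eqref{e:RelQODE_Gain} is \emph{globally exponentially} stable. Write the right-hand side as $F(h)=\tilT h-h$ with $\tilT$ as in \eqref{e:TilT}; since $\min_u$, the transition matrices $P_u$, and $\langle\mu,\cdot\rangle$ are globally Lipschitz, so is $F$, hence \eqref{e:RelQODE_Gain} has a unique solution $\{h_t:t\ge0\}$ for every initial condition, defined for all $t$. Every equilibrium satisfies $\tilT H=H$, which is exactly the relative Bellman equation \eqref{e:DCOE-H}; by \Prop{t:Hstar} its unique solution is $H^*$, so $H^*$ is the unique equilibrium. The one subtlety is that the rank-one term $-\delta\langle\mu,\cdot\rangle\one$ makes $\tilT$ \emph{neither} a sup-norm contraction (once $\delta$ is not small) \emph{nor} monotone, so the usual arguments for Q-learning ODEs do not apply verbatim. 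The remedy is to work in the span seminorm $\|f\|_{\mathrm{sp}}\eqdef\max_{x,u}f(x,u)-\min_{x,u}f(x,u)$ and then separately control the additive-constant component.

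Step 1 (span contraction). I would first check that $\tilT$ contracts $\|\cdot\|_{\mathrm{sp}}$ with modulus $\gamma$. For any $h,h'$ one has $\tilT h-\tilT h'=\gamma(P\uh-P\uh')-\delta\langle\mu,h-h'\rangle\one$, where, following the convention introduced before \Lemma{t:PSphi}, $Pv$ denotes the state--action function $(x,u)\mapsto\sum_{x'}P_u(x,x')v(x')$. Adding a multiple of $\one$ leaves $\|\cdot\|_{\mathrm{sp}}$ unchanged; each component of $Pv$ is a convex combination of values of $v$, so $\|Pv\|_{\mathrm{sp}}\le\|v\|_{\mathrm{sp}}$; and since $\min_u a(u)-\min_u b(u)$ always lies in $[\min_u(a(u)-b(u)),\max_u(a(u)-b(u))]$, one gets $\|\uh-\uh'\|_{\mathrm{sp}}\le\|h-h'\|_{\mathrm{sp}}$. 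Chaining these yields $\|\tilT h-\tilT h'\|_{\mathrm{sp}}\le\gamma\|h-h'\|_{\mathrm{sp}}$. Using the first-order expansion $h_{t+s}=(1-s)h_t+s\,\tilT h_t+o(s)$, $\tilT H^*=H^*$, and the triangle inequality for $\|\cdot\|_{\mathrm{sp}}$, the upper Dini derivative obeys $D^+\|h_t-H^*\|_{\mathrm{sp}}\le-(1-\gamma)\|h_t-H^*\|_{\mathrm{sp}}$, hence $\|h_t-H^*\|_{\mathrm{sp}}\le e^{-(1-\gamma)t}\|h_0-H^*\|_{\mathrm{sp}}$.

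Step 2 (the constant component). Since the span bound says nothing about additive constants, I would track the scalar $\sigma_t\eqdef\langle\mu,h_t-H^*\rangle$. Decompose $h_t-H^*=c_t\one+\epsilon_t$ with $c_t$ the midrange, so $\|\epsilon_t\|_\infty\le\|h_t-H^*\|_{\mathrm{sp}}$ and $c_t=\sigma_t-\langle\mu,\epsilon_t\rangle$; a short computation gives $\uh_t-\uH^*=c_t\one+r_t$ with $\|r_t\|_\infty\le\|\epsilon_t\|_\infty$. Differentiating $\sigma_t$ and using $\langle\mu,\one\rangle=1$, $\tilT H^*=H^*$, and $\tilT h_t-\tilT H^*=\gamma(P\uh_t-P\uH^*)-\delta\sigma_t\one$, one is led to the scalar linear ODE
\[
\ddt\sigma_t=-(1+\delta-\gamma)\,\sigma_t+g_t,\qquad |g_t|\le 2\gamma\,\|\epsilon_t\|_\infty\le 2\gamma\, e^{-(1-\gamma)t}\|h_0-H^*\|_{\mathrm{sp}}.
\]
Because $\delta>0$ and $\gamma<1$ give $1+\delta-\gamma>0$, while $g_t$ decays exponentially, variation of constants yields $\sigma_t\to0$ (indeed exponentially); then $c_t=\sigma_t-\langle\mu,\epsilon_t\rangle\to0$ and $\|h_t-H^*\|_\infty\le|c_t|+\|\epsilon_t\|_\infty\to0$, proving global asymptotic (in fact exponential) stability of $H^*$.

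The main obstacle is conceptual rather than computational: recognizing that, unlike the Watkins Q-learning ODE \eqref{e:QODEW_Gain}, the relative Q-learning ODE need not be contractive in sup-norm or monotone, and that the right move is to split the dynamics into a span part (a genuine $\gamma$-contraction) and a one-dimensional ``mean'' part (a stable scalar linear ODE driven by a vanishing input). Once that decomposition is in hand, Steps 1 and 2 are routine estimates. For the boundary case $\gamma=1$ a structurally similar ODE was analyzed in \cite{aboberbor01}; the discounted case treated here is strictly easier.
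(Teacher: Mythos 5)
Your proposal is correct and takes essentially the same route as the paper's Appendix~C proof: the span-seminorm $\gamma$-contraction of $\tilT$ yields exponential decay of $\| h_t - H^*\|_S$, and the leftover constant component is handled through the scalar ODE for $\langle \mu , h_t - H^*\rangle$, which is stable with rate $1+\delta-\gamma$ and driven by an exponentially vanishing input. The only differences are technical conveniences rather than a different argument: you obtain the span decay via a Dini-derivative differential inequality where the paper uses the variation-of-constants formula plus Gr\"onwall's inequality, and you bound the effect of the $\min_u$ term by the elementary inequality on differences of minima where the paper argues through the greedy policies $\phi^*_t$.
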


For any function $H: \state \times \U\to \Re$, define the span semi-norm:
\begin{equation}
\| H \|_S \eqdef \max_{x \,, u} H(x, u) - \min_{x \,, u} H(x, u)
\label{e:span}
\end{equation}
The crucial step in proving stability of the ODE \eqref{e:RelQODE_Gain} is based on the fact that the operator $\tilT$ defined in \eqref{e:TilT} is a $\gamma$-contraction in the span semi-norm: for any $H \,, H' : {\state \times \U} \to \Re$, 
\begin{equation}
\| \tilT H - \tilT H' \|_S \leq \gamma \| H - H' \|_S
\label{e:TilTcontraction}
\end{equation}
This is formalized in the following Lemma.
\begin{lemma}
\label{t:span-norm-contraction}
For any $0 \leq \gamma < 1$, the operator $\tilT$ is a $\gamma $-contraction: For any $H: {\state \times \U} \to \Re$ and $H' : {\state \times \U} \to \Re$,
\[
\| \tilT H - \tilT H' \|_S \leq \gamma \| H - H' \|_S
\]
\end{lemma}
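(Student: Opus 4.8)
The plan is to decompose $\tilT H-\tilT H'$ into a part that is measured by the span semi-norm and a constant part that the span semi-norm ignores. Writing out the definition \eqref{e:TilT},
\[
\tilT H(x,u)-\tilT H'(x,u)=\gamma\bigl(P_u\uH(x)-P_u\uH'(x)\bigr)-\delta\bigl(\langle\mu,H\rangle-\langle\mu,H'\rangle\bigr).
\]
The last term does not depend on $(x,u)$, hence it is annihilated by $\|\cdot\|_S$. Therefore it suffices to bound the span semi-norm of the function $(x,u)\mapsto \gamma\bigl(P_u\uH(x)-P_u\uH'(x)\bigr)$, which equals $\gamma$ times the span semi-norm of $(x,u)\mapsto P_u(\uH-\uH')(x)$.

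Next I would chain two elementary monotonicity facts, each of which contracts (or preserves) the span semi-norm. First, for each fixed $(x,u)$ the quantity $P_u(\uH-\uH')(x)=\sum_{x'}P_u(x,x')\bigl(\uH(x')-\uH'(x')\bigr)$ is a convex combination of the values $\{\uH(x')-\uH'(x'):x'\in\state\}$ (since $P_u$ is row-stochastic), so it lies between $\min_{x'}\bigl(\uH(x')-\uH'(x')\bigr)$ and $\max_{x'}\bigl(\uH(x')-\uH'(x')\bigr)$; taking the maximum and minimum over all $(x,u)$ gives $\|P_\bullet(\uH-\uH')\|_S\le\|\uH-\uH'\|_S$, where the right-hand span is over $x\in\state$. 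Second, setting $\barc=\min_{x,u}\bigl(H(x,u)-H'(x,u)\bigr)$ and $\barC=\max_{x,u}\bigl(H(x,u)-H'(x,u)\bigr)$, the inequalities $H'(x,u)+\barc\le H(x,u)\le H'(x,u)+\barC$ survive minimization over $u$, so that $\barc\le\uH(x)-\uH'(x)\le\barC$ for every $x$, and hence $\|\uH-\uH'\|_S\le\barC-\barc=\|H-H'\|_S$.

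Combining the three bounds yields $\|\tilT H-\tilT H'\|_S\le\gamma\|H-H'\|_S$, which is the claim. There is no genuine obstacle here; the only thing requiring care is the bookkeeping of which index set the span semi-norm is taken over at each step (over $\state\times\U$ for $H$, $H'$ and $P_\bullet\uH$, and over $\state$ for $\uH$), together with the standard observation that row-stochastic averaging is non-expansive in the span semi-norm. I would isolate the convex-combination step as a short standalone remark so that it can be reused when analyzing $PS_{\phi^*}$ in the stability proof of \Proposition{t:ODEisstable}.
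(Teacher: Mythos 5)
Your proof is correct, and it is exactly the standard span-seminorm contraction argument that the paper dismisses as trivial with a citation to Puterman: the $\delta\langle\mu,\cdot\rangle$ term is constant in $(x,u)$ and so vanishes under $\|\cdot\|_S$, row-stochastic averaging by $P_u$ is non-expansive in span, and minimization over $u$ is non-expansive as well, giving the factor $\gamma$. Nothing further is needed.
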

The proof is trivial --- see for example \cite{put14}.

The contraction property \eqref{e:TilTcontraction} is used next to prove the stability of the ODE \eqref{e:RelQODE_Gain} in the span semi-norm.

For each $t \geq 0$, define $\tilh_t \eqdef h_t - H^*$, where $H^*$ is the unique solution to the fixed point equation \eqref{e:DCOE-H}.
Define  
\begin{equation*}
\begin{aligned}
\phi_t^* & \eqdef \phi^{(\kappa)}\,\,\text{such that}
\\
\kappa & = \min \{ i : \phi^{(i)} (x) \in\argmin_u h_t  (x,u), \,\, \text{for all }x \in \state  \}
\end{aligned}
\end{equation*}
The following proposition establishes exponential convergence of $h_t$ to $H^*$ in the span semi-norm, which further implies the same rate of convergence for $H^*(x,\phi_t^*(x))$ to $H^*(x,\phi^*(x))$, for each $x \in \state$.
\begin{proposition}
\label{t:span-norm-stability}
For any $0 \leq \gamma < 1$, and some $K < \infty$,
\begin{subequations}
\begin{align} 
\| \tilh_t \|_S & \leq e^{-(1 - \gamma) t} \| \tilh_0 \|_S  
\label{e:tilht}
\\
\|H^*(x, \phi_t^*(x) ) \! - \!  H^*(x, \phi_t^*(x) ) \| & \leq K e^{-(1-\gamma)t}\|\tilh_0\|
\label{e:tilphit}
\end{align}
\end{subequations}
\end{proposition}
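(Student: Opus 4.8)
The plan is to read the ODE \eqref{e:RelQODE_Gain} in its operator form $\ddt h_t = \tilT h_t - h_t$, with $\tilT$ defined in \eqref{e:TilT} (global existence of a $C^1$ solution coming from \Proposition{t:ODEisstable} and standard ODE theory), and to exploit that $\tilT H^* = H^*$ by \eqref{e:DCOE-H}. First I would subtract $H^*$ to get the affine recursion $\ddt \tilh_t + \tilh_t = \tilT h_t - \tilT H^*$ for $\tilh_t = h_t - H^*$, then multiply by $e^t$ and integrate to obtain the variation-of-constants identity
\[
\tilh_t = e^{-t}\tilh_0 + \int_0^t e^{-(t-\tau)}\bigl(\tilT h_\tau - \tilT H^*\bigr)\,d\tau .
\]
Applying the span semi-norm, using its sub-additivity and the integral triangle inequality, and then the $\gamma$-contraction bound of \Lemma{t:span-norm-contraction}, namely $\|\tilT h_\tau - \tilT H^*\|_S \le \gamma \|\tilh_\tau\|_S$, gives
\[
\|\tilh_t\|_S \le e^{-t}\|\tilh_0\|_S + \gamma \int_0^t e^{-(t-\tau)}\|\tilh_\tau\|_S \, d\tau .
\]
With $u(t) = e^t\|\tilh_t\|_S$ this becomes $u(t) \le \|\tilh_0\|_S + \gamma\int_0^t u(\tau)\,d\tau$, and Grönwall's inequality yields $u(t)\le \|\tilh_0\|_S e^{\gamma t}$, i.e. \eqref{e:tilht}. (Equivalently, one may bound the upper Dini derivative $D^+\|\tilh_t\|_S \le -(1-\gamma)\|\tilh_t\|_S$ via $\tilh_{t+s} = (1-s)\tilh_t + s(\tilT h_t - \tilT H^*) + o(s)$ and sub-additivity, then conclude by the comparison lemma.)

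For the second bound I would read \eqref{e:tilphit} with the intended right-hand quantity $H^*(x,\phi_t^*(x)) - H^*(x,\phi^*(x))$, and argue as follows. Under (Q2) the optimal action is the \emph{unique} minimizer of $H^*(x,\cdot)$ at each $x$, so the suboptimality gap $\Delta \eqdef \min_x \min_{u \ne \phi^*(x)}\bigl(H^*(x,u) - H^*(x,\phi^*(x))\bigr)$ is strictly positive, while $M \eqdef \max_{x,u}\bigl(H^*(x,u) - H^*(x,\phi^*(x))\bigr)$ is finite. Since the span semi-norm controls pairwise differences, $|\tilh_t(x,u) - \tilh_t(x,u')| \le \|\tilh_t\|_S$, so whenever $\|\tilh_t\|_S < \Delta$ the ordering of $h_t(x,\cdot)$ agrees with that of $H^*(x,\cdot)$ at every $x$; because $Q^*$ and $H^*$ share argmins, even the tie-break index in \eqref{e:phi_q_def} coincides, so $\phi_t^* = \phi^*$ and the left side of \eqref{e:tilphit} is zero. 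By \eqref{e:tilht} this holds for all $t \ge T^*$, where $e^{-(1-\gamma)T^*}\|\tilh_0\|_S = \Delta$ (take $T^* = 0$ if $\|\tilh_0\|_S < \Delta$). On the window $t < T^*$ the left side is at most $M$, while $e^{-(1-\gamma)t}\|\tilh_0\| \ge \tfrac12 e^{-(1-\gamma)t}\|\tilh_0\|_S \ge \tfrac12 \Delta$ using $\|\cdot\|_S \le 2\|\cdot\|$; hence \eqref{e:tilphit} holds with $K = \max\{1, 2M/\Delta\}$.

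The routine parts are the integral inequality and Grönwall. The one conceptual point needing care is the passage from \emph{span-seminorm} contraction of $\tilT$ to an \emph{exponential} rate for the ODE: this works only because \eqref{e:RelQODE_Gain} has the form $\ddt h_t = \tilT h_t - h_t$ with $\tilT$ a $\gamma$-contraction in span semi-norm, so the $-h_t$ term supplies the $e^{-t}$ factor while the contraction contributes only $\gamma$; the extra $\delta\langle\mu,\cdot\rangle$ term is harmless because it is already folded into $\tilT$ and does not spoil \Lemma{t:span-norm-contraction}. The only other subtlety is that the left side of \eqref{e:tilphit} is piecewise constant in the greedy policy and cannot decay continuously, so the exponential bound must be obtained by absorbing a finite initial window into $K$; I expect this bookkeeping (and pinning down the norm-equivalence constant) to be the fiddliest, though shallowest, part.
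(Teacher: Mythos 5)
Your proof of \eqref{e:tilht} is correct and follows essentially the same route as the paper: the variation-of-constants identity, sub-additivity of the span semi-norm, the $\gamma$-contraction of $\tilT$ from \Lemma{t:span-norm-contraction}, and Gr\"onwall. Your treatment of \eqref{e:tilphit} (correctly reading the right-hand term as $H^*(x,\phi^*(x))$) is also sound and is just a fleshed-out version of the paper's one-line appeal to \eqref{e:phi_star_H} and \eqref{e:tilht}: the suboptimality gap $\Delta>0$ you construct from (Q2) plays exactly the role of the $\epsy$ in \eqref{e:phi_star_H}, with the finite initial window absorbed into $K$.
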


\begin{proof}
By the variations of constants formula, and using the notation \eqref{e:TilT}, the solution $h_t$ to \eqref{e:RelQODE_Gain} satisfies:
\[
h_t(x \,, u) =  h_0(x \,, u) e^{-t} + \int_{0}^t e^{-(t-s)} (\tilT h_s)(x \,, u) \,ds
\]
Subtracting $H^*(x \,, u)$ from both sides, and using the fact that $H^*(x \,, u) = (\tilT H^*)(x \,, u) $, we obtain
\begin{equation*}
\begin{aligned}
\tilh_t(x\,,u) & = \tilh_0(x,u) e^{-t} 
\\
& \hspace{0.1in}+ \int_{0}^t e^{-(t-s)} \Big[ (\tilT  h_s)(x \,, u) - (\tilT  H^*)(x \,, u)\Big] ds
\end{aligned}
\end{equation*}
The following inequalities are then immediate
\begin{subequations}
\begin{align}
\begin{split}
\max_{x \,, u} \tilh_t(x\, ,u) & \leq \max_{x \,, u} \tilh_0(x\, ,u) e^{-t} 
\\
& \hspace{-0.65in} + \int_{0}^t e^{-(t-s)} \max_{x \,, u}  \Big[ (\tilT  h_s)(x \,, u) \! - \! ( \tilT  H^*)(x \,, u)\Big] ds
\label{e:Ht-error-ODE-upper-bd} 
\end{split}
\\
\begin{split}
\min_{x \,, u} \tilh_t(x\, ,u) & \geq \min_{x \,, u} \tilh_0(x\, ,u) e^{-t} 
\\
& \hspace{-0.65in} + \int_{0}^t e^{-(t-s)} \min_{x \,, u}  \Big[ (\tilT  h_s)(x \,, u) \! - \! (\tilT  H^*)(x \,, u)\Big] ds
\label{e:Ht-error-ODE-lower-bd} 
\end{split}
\end{align}
\label{e:Ht-error-ODE-bds} 
\end{subequations}
Subtracting \eqref{e:Ht-error-ODE-lower-bd} from \eqref{e:Ht-error-ODE-upper-bd},
\begin{equation}
\begin{aligned}
\| \tilh_t \|_S & \leq e^{-t}  \| \tilh_0 \|_S 
+ \int_{0}^t e^{-(t-s)} \| \tilT  h_s - \tilT  H^* \|_S \, ds
\\
& \leq e^{-t}  \| \tilh_0 \|_S 
+ \gamma \int_{0}^t e^{-(t-s)} \| \tilh_s \|_S \, ds
\end{aligned}
\end{equation}
where the second inequality follows from \eqref{e:TilTcontraction}. We therefore have
\[
e^{t} \| \tilh_t \|_S  \leq \| \tilh_0 \|_S 
+ \gamma \int_{0}^t e^{s} \|  \tilh_s \|_S \, ds
\]
Applying the Gr\"onwall's inequality completes the proof of \eqref{e:tilht}; \eqref{e:tilphit} follows from \eqref{e:phi_star_H} and \eqref{e:tilht}.
\end{proof}

Define for each $t \geq 0$
\begin{equation}
r_t \eqdef \langle \mu \,, \tilh_t \rangle
\label{e:rt}
\end{equation}
\Proposition{t:span-norm-stability} (in particular, Eq.~\eqref{e:tilht}) implies $ h_t  \to H^*$ exponentially fast, in the span-semi-norm:
for some $K < \infty$,
\begin{equation}
\begin{aligned}
\tilh_t & = \one \cdot r_t + \epsy^s_t
\\
\| \epsy^s_t \| & \leq K e^{-(1 - \gamma) t} \| \tilh_0 \|_S     
\label{e:ht_Hstar}
\end{aligned}
\end{equation}
To establish global exponential stability of the ODE \eqref{e:RelQODE_Gain}, it is sufficient to show that $r_t \to 0$ exponentially fast.
\begin{proposition}
\label{t:constant-stability}
For any $\gamma <1$, and $\delta > 0$, the function $r_t$ defined in \eqref{e:rt} satisfies, for some $K < \infty$,
\[
| r_t | \leq e^{- (1 - \gamma + \delta )} | r_0 | + K e^{-(1-\gamma) t} \| \tilh_0 \|
\]
\end{proposition}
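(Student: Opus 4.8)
The plan is to reduce the evolution of $r_t$ to a scalar linear ODE with an exponentially small forcing term and then integrate it. First I would write the ODE \eqref{e:RelQODE_Gain} in the compact form $\ddt h_t = \tilT h_t - h_t$ with $\tilT$ defined in \eqref{e:TilT}, and use $H^* = \tilT H^*$ (which is just \eqref{e:DCOE-H}), so that the error $\tilh_t = h_t - H^*$ obeys $\ddt \tilh_t = (\tilT h_t - \tilT H^*) - \tilh_t$. Since
\[
\tilT h_t(x,u) - \tilT H^*(x,u) = \gamma P_u(\uh_t - \uH^*)(x) - \delta\, \langle\mu,\tilh_t\rangle = \gamma P_u(\uh_t - \uH^*)(x) - \delta\, r_t ,
\]
applying the linear functional $\langle\mu,\cdot\rangle$ and using $\sum_{x,u}\mu(x,u)=1$ gives
\[
\ddt r_t = \gamma \sum_{x,u}\mu(x,u)\, P_u(\uh_t - \uH^*)(x) \;-\; (1+\delta)\, r_t .
\]

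The next step is to show that the coupling term $\sum_{x,u}\mu(x,u)P_u(\uh_t-\uH^*)(x)$ equals $r_t$ up to an exponentially decaying error. Here I would invoke the span-semi-norm decomposition \eqref{e:ht_Hstar}, namely $\tilh_t = \one\cdot r_t + \epsy^s_t$ with $\|\epsy^s_t\| \le K e^{-(1-\gamma)t}\|\tilh_0\|_S$, so that $h_t(x,u) = H^*(x,u) + r_t + \epsy^s_t(x,u)$. Because $r_t$ does not depend on $(x,u)$, the identity $\min_u(f(u)+c) = c + \min_u f(u)$ together with the $1$-Lipschitz property of $\min_u$ in the sup-norm gives $\uh_t(x) - \uH^*(x) = r_t + \zeta_t(x)$ with $\|\zeta_t\|_\infty \le \|\epsy^s_t\|_\infty \le K_1 e^{-(1-\gamma)t}\|\tilh_0\|$, after absorbing the finite-dimensional equivalence of norms into $K_1$. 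Since each $P_u$ is a stochastic matrix, $P_u(\uh_t - \uH^*)(x) = r_t + (P_u\zeta_t)(x)$, and therefore $\sum_{x,u}\mu(x,u)P_u(\uh_t-\uH^*)(x) = r_t + w_t$ with $|w_t| \le \|\zeta_t\|_\infty \le K_1 e^{-(1-\gamma)t}\|\tilh_0\|$. Plugging this in collapses the dynamics to the scalar linear ODE
\[
\ddt r_t = -(1-\gamma+\delta)\, r_t + \gamma\, w_t .
\]

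Finally, variation of constants yields $r_t = e^{-(1-\gamma+\delta)t}r_0 + \gamma\int_0^t e^{-(1-\gamma+\delta)(t-s)} w_s\, ds$, hence
\[
|r_t| \le e^{-(1-\gamma+\delta)t}|r_0| + \gamma K_1 \|\tilh_0\| \int_0^t e^{-(1-\gamma+\delta)(t-s)}e^{-(1-\gamma)s}\, ds .
\]
Writing $a = 1-\gamma+\delta$ and $b = 1-\gamma$, we have $a > b > 0$ since $\delta>0$ and $\gamma<1$, and $\int_0^t e^{-a(t-s)}e^{-bs}\,ds = (e^{-bt}-e^{-at})/(a-b) \le e^{-bt}/\delta$, which gives $|r_t| \le e^{-(1-\gamma+\delta)t}|r_0| + (\gamma K_1/\delta)\, e^{-(1-\gamma)t}\|\tilh_0\|$; the claim then holds with $K = K_1/\delta$ (using $\gamma\le 1$). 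I expect the only delicate point to be the passage from $\tilh_t$ to $\uh_t - \uH^*$: one must check that the nonsmoothness of $u\mapsto\min_u H^\theta(x,u)$ contributes only a perturbation of the same order $e^{-(1-\gamma)t}$ as $\epsy^s_t$, which is exactly what the constant-shift identity plus the sup-norm Lipschitz bound for the min deliver; the remainder is routine linear-ODE bookkeeping.
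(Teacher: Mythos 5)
Your proof is correct and follows essentially the same route as the paper's: differentiate $r_t$, use $H^*=\tilT H^*$ together with the span-norm decay \eqref{e:ht_Hstar} to reduce the dynamics to a scalar linear ODE with an $O(e^{-(1-\gamma)t}\|\tilh_0\|)$ forcing term, and conclude by variation of constants. The only deviation is a welcome simplification: you control $\uh_t-\uH^*$ via the constant-shift identity and the $1$-Lipschitz property of $\min_u$ in the sup norm, whereas the paper passes through the minimizing policies and the bound \eqref{e:tilphit}, so your version avoids any appeal to the policy-comparison step.
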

\begin{proof}
Differentiating both sides of \eqref{e:rt}, and using \eqref{e:RelQODE_Gain}, we have
\begin{equation}
\begin{aligned}
\ddt r_t  = \langle \mu \,,  \ddt h_t \rangle 
& = \langle \mu \,, \tilT h_t  - h_t \rangle
\\
& =  \langle \mu \,,  \tilT h_t - h_t - \tilT H^* + H^* \rangle
\end{aligned}
\label{e:rt_ODE}
\end{equation}
where we have used the fact that $H^* = \tilT H^*$.

Using \eqref{e:ht_Hstar} and \eqref{e:tilphit},
the non-linear term on the right hand side of \eqref{e:rt_ODE} admits the approximation,
\[
\begin{aligned}
& \big \langle \mu \,, \tilT h_t - \tilT H^* \big \rangle 
\\
& =  \gamma \sum_{x,u,x'} \mu(x,u)  P_{u} (x,x') \Big[ h_t \big(x', \phi^\epsy_t(x') \big) \! - \! H^* \big(x', \phi^*(x') \big) \Big  ] 
\\
& \hspace{1in}- \delta \cdot \langle \mu \,, \tilh_t \rangle
\\
& = \gamma \sum_{x,u,x'} \mu(x,u)  P_{u} (x,x')  \!  \Big[  \!  H^* \big(x', \phi^\epsy_t(x') \big) \!  - \! H^* \big(x', \phi^*(x') \big) 
\\
&  \hspace{1in} +  \! \epsy^s_t\big(x', \phi^\epsy_t(x') \big) \!  \Big  ]  \!  +  \!  \big (\gamma  \!  -  \!  \delta \big) \cdot r_t
\\
&  =  \big (\gamma - \delta \big) \cdot r_t + \epsy^r_t  
\end{aligned}
\]
where, for some $K < \infty $,
\[
| \epsy^r_t | \leq K e^{-(1-\gamma)t}\|\tilh_0\|  
\]
Substituting this into \eqref{e:rt_ODE}, we have
\[
\begin{aligned}
\ddt r_t & = \big (\gamma - \delta - 1 \big) \cdot r_t + \epsy^r_t,
\\
r_t & = e^{- (1 - \gamma + \delta )} \cdot r_0 + \int_{0}^{t} e^{-(1-\gamma + \delta) \tau} \cdot \epsy^r_{t-\tau} \, d \tau
\end{aligned}
\]
The statement of the proposition follows.
\end{proof}

Propositions~\ref{t:span-norm-stability} and~\ref{t:constant-stability} imply the conclusions of Propositions~\ref{t:ODEisstable}. 
\qed


\section{Convergence Rate of Relative Q-learning}
\label{sec:asym_var_appendix}

\begin{lemma}
\label{t:orth-lemma}
Let $P$ be a  transition matrix with a single eigenvalue at $\lambda=1$,  with unique invariant measure $\pi$.   Then,  
$\nu^{\transpose} \one= 0$ for every eigenvalue/left-eigenvector pair $(\lambda,\nu)$  of $P $ for which $\lambda\neq 1$.
\end{lemma}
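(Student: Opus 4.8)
The plan is to use the one structural fact about transition matrices that is relevant: since the rows of $P$ sum to one, the all-ones vector $\one$ is a \emph{right} eigenvector, $P\one = \one$, with eigenvalue $\lambda = 1$. The desired conclusion is then just the classical biorthogonality of left and right eigenvectors belonging to distinct eigenvalues, applied with this particular right eigenvector.

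Concretely, I would argue as follows. Let $(\lambda,\nu)$ be an eigenvalue/left-eigenvector pair of $P$ with $\lambda \neq 1$, so that $\nu^\transpose P = \lambda\,\nu^\transpose$ with $\nu \neq 0$. Evaluate the scalar $\nu^\transpose P \one$ two ways: applying the left-eigenvector relation gives $\nu^\transpose P \one = \lambda\,\nu^\transpose \one$, while using $P\one = \one$ gives $\nu^\transpose P \one = \nu^\transpose \one$. Subtracting yields $(1-\lambda)\,\nu^\transpose \one = 0$, and since $\lambda \neq 1$ we conclude $\nu^\transpose \one = 0$. The same computation is valid verbatim when $\lambda$ and $\nu$ are complex, which is the case that matters for the application in \Lemma{t:AQ-lemma}.

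There is essentially no obstacle here; the only point deserving a word of care is that $\nu$ is a genuine (not generalized) eigenvector, which is precisely what the hypothesis supplies. I would also remark that the assumptions that $\lambda=1$ is the \emph{unique} unit eigenvalue and that $\pi$ is the \emph{unique} invariant measure are not actually needed for this orthogonality — $\nu^\transpose \one = 0$ holds for any left-eigenvector of any stochastic matrix whose eigenvalue differs from $1$ — but they are natural to keep in the statement since that is the uni-chain setting (Assumption (Q3), with $P = PS_{\phi^*}$) in which the lemma is invoked.
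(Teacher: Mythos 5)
Your proof is correct and is essentially identical to the paper's: both evaluate $\nu^\transpose P \one$ two ways using $P\one = \one$ and the left-eigenvector relation, then divide by $1-\lambda \neq 0$. Your side remarks (validity over $\Co$, and that uniqueness of the unit eigenvalue is not actually needed) are accurate but do not change the argument.
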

\begin{proof} 
The eigenspace corresponding to the eigenvalue $\lambda=1$ is spanned by the vector $\one$, so that
\[
P \one = \one \quad \textit{and}\quad \nu^\transpose P = \nu^\transpose
\]
Consequently,
\begin{equation*}
\lambda \nu^\transpose \one =
\nu^\transpose P \one = \nu^\transpose \one \,,  
\end{equation*}
which   implies that $\nu^\transpose \one = 0$.
\end{proof}

\end{document}